\documentclass[12pt]{article}

\usepackage[margin=1in]{geometry}
\usepackage{amsmath}
\usepackage{amssymb}
\usepackage{amsfonts}
\usepackage{amsthm}
\usepackage{aliascnt}
\usepackage{mathtools}
\usepackage{bm}
\usepackage{booktabs}
\usepackage{array}
\usepackage{enumitem}
\usepackage[numbers,sort&compress]{natbib}
\usepackage[hidelinks]{hyperref}
\usepackage[nameinlink,capitalise,noabbrev]{cleveref}

\setlist[enumerate]{leftmargin=.5in}
\setlist[itemize]{leftmargin=.5in}
\allowdisplaybreaks
\numberwithin{equation}{section}

\newtheorem{theorem}{Theorem}[section]

\newaliascnt{lemma}{theorem}
\newtheorem{lemma}[lemma]{Lemma}
\aliascntresetthe{lemma}

\newaliascnt{corollary}{theorem}
\newtheorem{corollary}[corollary]{Corollary}
\aliascntresetthe{corollary}

\newaliascnt{proposition}{theorem}
\newtheorem{proposition}[proposition]{Proposition}
\aliascntresetthe{proposition}

\theoremstyle{definition}
\newaliascnt{definition}{theorem}
\newtheorem{definition}[definition]{Definition}
\aliascntresetthe{definition}

\newaliascnt{assumption}{theorem}
\newtheorem{assumption}[assumption]{Assumption}
\aliascntresetthe{assumption}

\theoremstyle{remark}
\newaliascnt{remark}{theorem}
\newtheorem{remark}[remark]{Remark}
\aliascntresetthe{remark}

\newaliascnt{example}{theorem}
\newtheorem{example}[example]{Example}
\aliascntresetthe{example}

\newaliascnt{counterexample}{theorem}
\newtheorem{counterexample}[counterexample]{Counterexample}
\aliascntresetthe{counterexample}

\crefname{theorem}{Theorem}{Theorems}
\Crefname{theorem}{Theorem}{Theorems}
\crefname{lemma}{Lemma}{Lemmas}
\Crefname{lemma}{Lemma}{Lemmas}
\crefname{corollary}{Corollary}{Corollaries}
\Crefname{corollary}{Corollary}{Corollaries}
\crefname{proposition}{Proposition}{Propositions}
\Crefname{proposition}{Proposition}{Propositions}
\crefname{definition}{Definition}{Definitions}
\Crefname{definition}{Definition}{Definitions}
\crefname{assumption}{Assumption}{Assumptions}
\Crefname{assumption}{Assumption}{Assumptions}
\crefname{remark}{Remark}{Remarks}
\Crefname{remark}{Remark}{Remarks}
\crefname{example}{Example}{Examples}
\Crefname{example}{Example}{Examples}
\crefname{counterexample}{Counterexample}{Counterexamples}
\Crefname{counterexample}{Counterexample}{Counterexamples}

\DeclareMathOperator{\supp}{supp}

\newcommand{\R}{\mathbb{R}}
\newcommand{\E}{\mathbb{E}}

\newcommand{\norm}[1]{\left\lVert #1 \right\rVert}

\newcommand{\eps}{\varepsilon}

\hypersetup{
  pdftitle={Identifiability and Stability of Generative Drifting in the Companion-Elliptic Kernel Family},
  pdfauthor={Hak Geun Lee and Hyonho Chun}
}

\begin{document}

\title{Identifiability and Stability of Generative Drifting\\
in the Companion-Elliptic Kernel Family}
\author{Hak Geun Lee \quad Hyonho Chun\\[6pt]
\normalsize Department of Mathematical Sciences,\\
\normalsize Korea Advanced Institute of Science and Technology (KAIST)}
\date{}

\maketitle

\begin{abstract}
A drifting model is a one-step generator trained by moving each sample along a field of kernel-weighted attraction toward data samples and repulsion between model samples; training halts once this field vanishes. The soundness of this scheme rests on two questions: whether a zero-field equilibrium guarantees agreement with the data distribution, and how the error is controlled when the field is small. We answer both questions. We introduce the companion-elliptic kernel class, which contains the Laplace kernel, and prove that a vanishing field identifies arbitrary Borel probability measures within this class and its blockwise product extension. The scalar companion-elliptic kernels are exactly the Gaussian and Mat\'ern families, and deconvolution, the recovery of a measure from its kernel-smoothed density, is possible precisely when the zero set of the kernel's Fourier transform has empty interior. At approximate equilibria, a mass-escape counterexample shows that field information on a bounded region cannot control the global error. We therefore prove an explicit stability inequality bounding the error by the field residual on the observation region plus the convolution mass outside it. In this inequality the Gaussian kernel requires only field values but amplifies high-frequency errors exponentially, whereas the Mat\'ern kernel requires one additional order of derivative information in exchange for polynomial amplification. Since this gap widens exponentially with frequency, the balance favors the Mat\'ern family and hence the Laplace kernel.
\end{abstract}

\medskip
\noindent\textbf{Keywords:} generative drifting; kernel methods; identifiability; stability; weak convergence; companion-elliptic

\smallskip
\noindent\textbf{2020 MSC:} 60B10; 35Q49; 42B10; 60E10; 68T05

\section{Introduction}

\subsection{Generative drifting}

\noindent Drifting, introduced by Deng et al.~\cite{deng2026drifting}, is a training scheme for one-step generative models. Given a prior law $p_{\eps}$ on a latent space $\R^C$ and a generator $f_\theta:\R^C\to\R^d$, the model law is the pushforward $q_\theta=[f_\theta]_\# p_{\eps}$. Diffusion models \cite{ho2020ddpm,song2021sde} and flow matching \cite{lipman2023flowmatching} realize a complex pushforward through multi-step updates at inference time; drifting instead keeps inference single-step and evolves the pushforward law itself during training. The training process thus generates a parameter sequence $\theta_0,\theta_1,\theta_2,\dots$ together with the chain of laws $q_i=[f_{\theta_i}]_\# p_{\eps}$, and this evolution is tracked directly at the distributional level through a vector field $V_{p,q}:\R^d\to\R^d$ depending on the target law $p$ and the current law $q$. Ideally, a generated sample $x_i=f_{\theta_i}(\eps)\sim q_i$ moves according to
\[
x_{i+1}\approx x_i+V_{p,q_i}(x_i),
\]
and the accumulated drift pushes $q_i$ toward $p$. At convergence the two laws must coincide and the field must vanish, so the drifting field is required to satisfy the antisymmetry relation $V_{p,q}=-V_{q,p}$, from which $p=q \Longrightarrow V_{p,q}\equiv 0$ follows. At an abstract level the field is a kernel-mediated interaction between samples, $V_{p,q}(x)=\E_{y^+\sim p,\,y^-\sim q}[K(x,y^+,y^-)]$, where the positive samples $y^+$ are drawn from the data distribution and the negative samples $y^-$ from the model distribution. Its prototypical form is a mean-shift-type attraction--repulsion field \cite{comaniciu2002meanshift,deng2026drifting,fukunaga1975gradient}: for a positive kernel $k$, one sets
\begin{equation}\label{eq:intro-drifting-attraction-repulsion}
\begin{aligned}
V_{p,q}&=V_p^+-V_q^-,\\
V_p^+(x)&:=
\frac{\int k(x,y)(y-x)\,p(dy)}{\int k(x,y)\,p(dy)},\\
V_q^-(x)&:=
\frac{\int k(x,y)(y-x)\,q(dy)}{\int k(x,y)\,q(dy)}.
\end{aligned}
\end{equation}
The Laplace kernel is the default choice of the positive kernel $k$ in \cite{deng2026drifting}.

\subsection{Problem formulation and main results}

\noindent The central question raised by Deng et al.~\cite{deng2026drifting} is whether a zero-drift equilibrium guarantees agreement of the distributions. Subsequent work has established a score identity \cite{cao2026gradientflow,gretton2026wasserstein,lai2026unified,turan2026secretly} and equilibrium identifiability \cite{cao2026gradientflow,kazanskii2026dmf,turan2026secretly} for the Gaussian kernel, but the original drifting field, which takes the Laplace kernel as its default, calls for a separate analysis. Accordingly, this paper studies the following questions for translation-invariant kernels $k(x,y)=\kappa(x-y)$, in particular $\kappa(z)=e^{-\norm{z}/\tau}$ with $\tau>0$ and its generalizations.
\begin{enumerate}[label=(Q\arabic*)]
\item \emph{Identifiability.} Does $V_{p,q}\equiv0$ imply $p=q$ for arbitrary Borel probability measures $p,q$?
\item \emph{Quantitative stability.} When the field is small rather than exactly zero, by what explicit inequality can the distributional error be controlled?
\item \emph{Qualitative stability.} Given a sequence of model distributions $(q_n)$ for which one knows only that $V_{p,q_n}\to0$, with no rate information, under what compactness condition does $q_n$ converge weakly to $p$?
\end{enumerate}

\noindent In brief, our answers are as follows. For (Q1), identifiability holds for arbitrary Borel probability measures in the companion-elliptic class, which contains the Laplace kernel, and in its blockwise product extension (\Cref{thm:companion-elliptic,thm:block-separable-ce-identifiability}); up to positive scalar multiples, the scalar kernels of this class are exactly the Gaussian and Mat\'ern families (\Cref{cor:ce-classification}), and for the deconvolution problem alone, recovering the original distributions from the kernel-smoothed densities, the sharp condition is that the zero set of $\widehat\kappa$ have empty interior (\Cref{prop:identifiability-spectral-sharpness}). For (Q2), field information on a bounded region alone cannot control the global error (\Cref{cex:local-approx-failure}), while an explicit inequality holds once the convolution mass outside the observation region is added (\Cref{thm:explicit-first-stage-stability,cor:explicit-two-stage-identifiability}); the resulting bounds trade the information a kernel demands against the amplification it incurs, favoring the Mat\'ern family (\Cref{rem:two-stage-kernel-conditioning}). For (Q3), tightness suffices: pointwise field convergence then yields weak convergence (\Cref{thm:tight-field-stability}).

\subsection{Related work}\label{subsec:related-work}

\noindent For the Gaussian kernel, it has been shown that the population mean-shift field coincides exactly with the score difference of the Gaussian-smoothed densities \cite{cao2026gradientflow,gretton2026wasserstein,lai2026unified,turan2026secretly,weber2023scoredifference}, an identity that makes explicit the relation between score matching \cite{hyvarinen2005score} and drifting. In the same Gaussian setting, identifiability of the zero-drift equilibrium has also been proved \cite{cao2026gradientflow,kazanskii2026dmf,turan2026secretly}, and the effect of the kernel's high-frequency attenuation on the convergence timescale has been studied \cite{turan2026secretly}.

For non-Gaussian kernels, the original field and the score field do not coincide in general \cite{balasubramanian2026finiteparticle,franz2026nonconservative,lai2026unified}. Lai et al.~\cite{lai2026unified} derived an exact decomposition for general radial kernels and analyzed conditions under which the residual arising for the Laplace kernel becomes small in the high-dimensional regime. He et al.~\cite{he2026sinkhorn} introduced a field in which the one-sided kernel normalization is replaced by a two-sided Sinkhorn scaling, obtaining a Sinkhorn-divergence interpretation and identifiability; this connects with the theory of entropic optimal transport and Sinkhorn divergences \cite{cuturi2013sinkhorn,feydy2019interpolating}. Franz et al.~\cite{franz2026nonconservative} showed that the original normalized radial drift is in general nonconservative, with the Gaussian as the only radial exception, and then established non-Gaussian identifiability for a conservative field employing a sharp normalization. Esteban-Casadevall et al.~\cite{estebancasadevall2026kernelgradient} replaced the Euclidean displacement by a kernel gradient, obtaining identifiability for characteristic kernels together with a smoothed-KL interpretation.

Existing non-Gaussian identifiability results thus generally modify either the normalization or the displacement \cite{estebancasadevall2026kernelgradient,franz2026nonconservative,he2026sinkhorn}. In contrast, this paper retains the original normalization and displacement and directly analyzes identifiability and stability for the Laplace kernel and the companion-elliptic family containing it.

\section{Notation and preliminary facts}\label{sec:preliminaries}

\noindent Throughout, $d\ge1$ is a fixed dimension; unless stated otherwise, the underlying spatial variable is $x\in\R^d$, and an unsubscripted $\norm{\cdot}$ denotes the finite-dimensional Euclidean norm. $C_b(\R^d)$, $C_0(\R^d)$, and $C_c^\infty(\Omega)$ denote, respectively, the spaces of bounded continuous functions, continuous functions vanishing at infinity, and smooth compactly supported functions; $\mathcal D'(\R^d)$ and $\mathcal S'(\R^d)$ are the spaces of distributions and tempered distributions; and the Lebesgue and Sobolev spaces carry their standard notation. $\mathcal P(\R^d)$ denotes the collection of all Borel probability measures on $\R^d$, and $\delta_z\in\mathcal P(\R^d)$ is the Dirac measure at $z$. Weak convergence $q_n\Rightarrow q$ means that $\int f\,dq_n\to\int f\,dq$ for every $f\in C_b(\R^d)$. $B_r$ and $\overline B_r$ denote the open and closed balls of radius $r$ centered at the origin, and subscripts such as $\R_t$, $\R_x^d$, and $L_x^1$ distinguish the time and space variables where needed. The following facts are standard consequences of the theory of convolution and the Fourier transform \cite{folland1999real,rudin1990fourier}; we use them without proof.
\begin{enumerate}[label=(F\arabic*),leftmargin=2.8em,itemsep=0.4em,topsep=0.3em]
\item The Fourier transforms of $f\in L^1(\R^d)$ and of a finite Borel measure $\mu$ are normalized as
\[
\widehat f(\xi):=\int_{\R^d}e^{-i\langle x,\xi\rangle}f(x)\,dx,
\qquad
\widehat\mu(\xi):=\int_{\R^d}e^{-i\langle x,\xi\rangle}\,\mu(dx).
\]
Moreover, if $\widehat f\in L^1(\R^d)$, then the Fourier inversion formula
\[
f(x)=(2\pi)^{-d}\int_{\R^d}e^{i\langle x,\xi\rangle}\widehat f(\xi)\,d\xi
\qquad(\text{a.e. }x\in\R^d)
\]
holds. When convenient, the Fourier transform and its inverse are also written as $\mathcal F$ and $\mathcal F^{-1}$.
\item If $f\in C_0(\R^d)\cap L^1(\R^d)$ and $\mu$ is a finite positive Borel measure, then
\[
f*\mu\in C_0(\R^d)\cap L^1(\R^d),
\qquad
\|f*\mu\|_{L^1}\le \mu(\R^d)\|f\|_{L^1}.
\]
\item For $f\in L^1(\R^d)$, the Riemann--Lebesgue lemma gives $\widehat f\in C_0(\R^d)$. Moreover, if two finite signed Borel measures have the same Fourier transform, then they coincide.
\end{enumerate}

\section{Identifiability}\label{sec:field-identifiability}

\subsection{Identifiability in the companion-elliptic class}\label{subsec:ce-identifiability}

\noindent This section introduces the companion-elliptic class and proves identifiability there through the two stages formalized in \eqref{eq:two-stage-identifiability}, from the vanishing field to the equality of the smoothed densities, and from there to the equality of the measures. \Cref{subsec:ce-examples-classification} then classifies the class, \Cref{subsec:identifiability-sharpness} determines the sharp condition for the deconvolution stage, and \Cref{subsec:block-separable-ce-identifiability} extends the result to blockwise product kernels. We first set down the translation-invariant form of \eqref{eq:intro-drifting-attraction-repulsion} and fix the notation for the kernel convolution, the centered first moment, and the normalized barycenter.

\begin{definition}[Translation-invariant drift]\label{def:general-kernel-drift}
Let $\kappa:\R^d\to(0,\infty)$ be a Borel function and let $r\in\mathcal P(\R^d)$.
Define
\[
\begin{aligned}
u_r^\kappa(x)&:=\int_{\R^d}\kappa(x-y)\,r(dy),\\
D_r^\kappa&:=\Bigl\{x:u_r^\kappa(x)<\infty,\ 
\int\norm{y-x}\kappa(x-y)\,r(dy)<\infty\Bigr\}.
\end{aligned}
\]
For $x\in D_r^\kappa$, set
\[
w_r^\kappa(x):=\int_{\R^d}(y-x)\kappa(x-y)\,r(dy),
\qquad
a_r^\kappa(x):=x+\frac{w_r^\kappa(x)}{u_r^\kappa(x)}.
\]
Moreover, whenever $\int\norm y\kappa(x-y)\,r(dy)<\infty$, we also write
\[
\begin{aligned}
m_r^\kappa(x)&:=\int y\kappa(x-y)\,r(dy),\\
w_r^\kappa(x)&=m_r^\kappa(x)-x u_r^\kappa(x),\\
a_r^\kappa(x)&=\frac{m_r^\kappa(x)}{u_r^\kappa(x)}.
\end{aligned}
\]
For $p,q\in\mathcal P(\R^d)$, the drifting field is defined by
\[
V_{p,q}^\kappa(x):=a_p^\kappa(x)-a_q^\kappa(x),
\qquad x\in D_p^\kappa\cap D_q^\kappa.
\]
When no confusion can arise, the superscript $\kappa$ is omitted.
\end{definition}

\noindent We first examine the Gaussian kernel, which illustrates what one may expect from a general kernel.
For $\kappa_\sigma(z)=\exp(-\norm z^2/2\sigma^2)$ with $\sigma>0$, all of the quantities above are well defined for every Borel probability measure, and differentiation under the integral sign yields
\[
\nabla_x\kappa_\sigma(x-y)=\frac{y-x}{\sigma^2}\kappa_\sigma(x-y),
\qquad
w_r^{\kappa_\sigma}=\sigma^2\nabla u_r^{\kappa_\sigma}.
\]
Consequently,
\begin{equation}\label{eq:gaussian-score-identity-general}
a_r^{\kappa_\sigma}(x)-x=\sigma^2\nabla\log u_r^{\kappa_\sigma}(x).
\end{equation}
Combined with the strict positivity of the convolution density $u_r^{\kappa_\sigma}$ and Fourier uniqueness for finite measures \cite{folland1999real,rudin1990fourier}, the identity \eqref{eq:gaussian-score-identity-general} already yields identifiability for the Gaussian kernel. The same proof, however, cannot be repeated verbatim for the Laplace kernel $\kappa_\tau(z)=e^{-\norm z/\tau}$.
Formally, $\nabla_x\kappa_\tau(x-y)=\tau^{-1}\,\frac{y-x}{\norm{y-x}}\,\kappa_\tau(x-y)$, so the derivative of $u_r^{\kappa_\tau}$ retains only the direction $(y-x)/\norm{y-x}$, whereas the moment $w_r^{\kappa_\tau}$ constituting the field integrates the full displacement $(y-x)$.
The object needed is therefore not the score of $u_r^{\kappa_\tau}$ but a companion potential $\Psi_r=\eta_\tau*r$ recovering the centered first moment, that is, $\tau^2\nabla\Psi_r=w_r^{\kappa_\tau}$. At the kernel level this forces $\nabla\eta_\tau(z)=-(z/\tau^2)\kappa_\tau(z)$, and solving this radial equation gives
\[
\eta_\tau(z)=\left(1+\frac{\norm z}{\tau}\right)e^{-\norm z/\tau},
\qquad
(I-\tau^2\Delta)\eta_\tau=(d+1)\kappa_\tau.
\]
The second identity is verified by a direct computation using the radial Laplacian formula $\Delta f(s)=f''(s)+(d-1)f'(s)/s$; since the Euclidean Laplace kernel is a positive scalar multiple of the Mat\'ern kernel with smoothness $\nu=1/2$ \cite{matern1986spatial,stein1999interpolation}, this identity is also a special case of \Cref{prop:matern-companion-elliptic}.
Upon convolution with $r$, these two relations become
\begin{equation}\label{eq:laplace-companion}
\tau^2\nabla\Psi_r=w_r^{\kappa_\tau},
\qquad
(I-\tau^2\Delta)\Psi_r=(d+1)u_r^{\kappa_\tau},
\end{equation}
and therefore
\[
a_r^{\kappa_\tau}(x)-x
=(d+1)\tau^2\frac{\nabla\Psi_r(x)}{(I-\tau^2\Delta)\Psi_r(x)}.
\]
This is the relation that replaces the Gaussian score identity \eqref{eq:gaussian-score-identity-general}.
Although \eqref{eq:laplace-companion} was derived for the Laplace kernel, the subsequent arguments do not depend on the specific Laplace form and use only the structural relations in \eqref{eq:laplace-companion}.

\begin{assumption}[Companion-elliptic class]\label{ass:companion-elliptic}
For $\kappa:\R^d\to(0,\infty)$ we assume the following.
\begin{enumerate}[label=(CE\arabic*)]
\item\label{ass:ce-kappa}
$\kappa\in C_0(\R^d)\cap L^1(\R^d)\cap L^\infty(\R^d)\cap W^{1,1}(\R^d)$;
\item\label{ass:ce-eta}
there exist constants $c_1,c_2,\lambda_0>0$ and $\lambda_1\ge0$, together with an everywhere positive function $\eta\in C^2(\R^d)$, such that
\[
\eta,\nabla\eta\in C_0(\R^d)\cap L^1(\R^d)\cap L^\infty(\R^d),
\qquad
\nabla^2\eta\in L^\infty(\R^d),
\]
and
\begin{equation}\label{eq:ce-companion}
\nabla\eta(z)=-\frac{z}{c_1}\kappa(z),
\qquad
(\lambda_0 I-\lambda_1\Delta)\eta(z)=c_2\kappa(z)
\qquad(z\in\R^d).
\end{equation}
\end{enumerate}
\end{assumption}

\noindent We refer to the class of functions satisfying this assumption as the ``companion-elliptic class.'' The Gaussian kernel corresponds to the degenerate elliptic case $\lambda_1=0$ with $\eta=\kappa_\sigma$, $c_1=\sigma^2$, $c_2=\lambda_0=1$, while
the Laplace kernel corresponds to the elliptic case $\lambda_1>0$ with companion $\eta_\tau$, $c_1=\tau^2$, $c_2=d+1$, $\lambda_0=1$. Throughout the remainder of this subsection, $\kappa$ is assumed to satisfy \Cref{ass:companion-elliptic}. The next lemma collects the companion identities used in the sequel.

\begin{lemma}[Companion identities for the companion-elliptic class]\label{lem:companion-elliptic-identities}
For $r\in\mathcal P(\R^d)$, set $\Psi_r^\kappa:=\eta*r$.
Then $D_r^\kappa=\R^d$, and the functions $u_r^\kappa,m_r^\kappa,a_r^\kappa$ are well defined and continuous. Moreover, $\Psi_r^\kappa\in C^2(\R^d)$, $\Psi_r^\kappa>0$, and
\[
u_r^\kappa,\ \Psi_r^\kappa,\ \nabla\Psi_r^\kappa\in C_0(\R^d)\cap L^1(\R^d).
\]
Furthermore,
\begin{align}
m_r^\kappa(x)-x u_r^\kappa(x)&=c_1\nabla\Psi_r^\kappa(x),\label{eq:ce-grad-id}\\
(\lambda_0 I-\lambda_1\Delta)\Psi_r^\kappa(x)&=c_2u_r^\kappa(x),\label{eq:ce-elliptic-id}\\
a_r^\kappa(x)-x&=c_1c_2\frac{\nabla\Psi_r^\kappa(x)}{(\lambda_0 I-\lambda_1\Delta)\Psi_r^\kappa(x)}.\label{eq:ce-barycenter-id}
\end{align}
\end{lemma}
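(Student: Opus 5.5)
The plan is to derive every assertion from the kernel-level relations \eqref{eq:ce-companion} and the integrability in \ref{ass:ce-kappa}--\ref{ass:ce-eta}, by differentiating under the integral sign and using (F2) repeatedly.

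\emph{Finiteness and continuity.} Since $\nabla\eta(z)=-z\kappa(z)/c_1$ and $\nabla\eta\in C_0\cap L^\infty$, the function $z\mapsto\norm z\kappa(z)=c_1\norm{\nabla\eta(z)}$ is bounded and continuous, and it vanishes at infinity. The same holds for $\kappa$ itself. Hence $u_r^\kappa(x)\le\|\kappa\|_\infty<\infty$ and $\int\norm{y-x}\kappa(x-y)\,r(dy)\le c_1\|\nabla\eta\|_\infty<\infty$ for every $x$, so $D_r^\kappa=\R^d$. Also $\int\norm y\kappa(x-y)\,r(dy)\le\norm x\|\kappa\|_\infty+c_1\|\nabla\eta\|_\infty$, so $m_r^\kappa$ is defined everywhere. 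Continuity of $u_r^\kappa$ and of $w_r^\kappa=\int(y-x)\kappa(x-y)\,r(dy)$ follows from dominated convergence: the integrands are continuous in $x$ and bounded by constants, which are $r$-integrable because $r$ is a probability measure. Then $m_r^\kappa=w_r^\kappa+xu_r^\kappa$ is continuous as well. Because $\kappa>0$, we have $u_r^\kappa>0$, so $a_r^\kappa$ is continuous. The membership $u_r^\kappa\in C_0\cap L^1$ is exactly (F2) applied to $\kappa$, and the same argument applied to $\eta$ and to each component of $\nabla\eta$ gives $\Psi_r^\kappa$ and $\eta*r$-type terms in $C_0\cap L^1$. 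Finally, $\Psi_r^\kappa>0$ because $\eta>0$ and $r$ is a probability measure.

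\emph{Regularity and the identities.} Since $\eta\in C^2$ and $\eta,\nabla\eta,\nabla^2\eta$ are all bounded, differentiation under the integral sign is justified by dominated convergence, using constant dominating functions and the mean value theorem for the difference quotients. This gives $\Psi_r^\kappa\in C^2$ with
\[
\nabla\Psi_r^\kappa=(\nabla\eta)*r,\qquad \nabla^2\Psi_r^\kappa=(\nabla^2\eta)*r.
\]
Continuity of the second derivatives follows again from dominated convergence, using continuity and boundedness of $\nabla^2\eta$. In particular $\nabla\Psi_r^\kappa=(\nabla\eta)*r\in C_0\cap L^1$ by (F2) applied componentwise; (F2) is stated for positive measures, but $r$ is positive and $\nabla\eta$ is real-valued componentwise, so this is fine.

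Substituting the first relation in \eqref{eq:ce-companion} gives
\[
c_1\nabla\Psi_r^\kappa(x)=-\int(x-y)\kappa(x-y)\,r(dy)=w_r^\kappa(x)=m_r^\kappa(x)-xu_r^\kappa(x),
\]
which is \eqref{eq:ce-grad-id}. Taking the trace of the Hessian formula gives $\Delta\Psi_r^\kappa=(\Delta\eta)*r$. Applying the second relation in \eqref{eq:ce-companion} pointwise inside the integral then yields \eqref{eq:ce-elliptic-id}.

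\emph{The barycenter formula.} Since $u_r^\kappa>0$, we have $a_r^\kappa(x)-x=w_r^\kappa(x)/u_r^\kappa(x)$. Substituting \eqref{eq:ce-grad-id} for the numerator and $u_r^\kappa=c_2^{-1}(\lambda_0I-\lambda_1\Delta)\Psi_r^\kappa$ from \eqref{eq:ce-elliptic-id} for the denominator gives
\[
\frac{w_r^\kappa(x)}{u_r^\kappa(x)}=\frac{c_1\nabla\Psi_r^\kappa(x)}{c_2^{-1}(\lambda_0I-\lambda_1\Delta)\Psi_r^\kappa(x)}=c_1c_2\frac{\nabla\Psi_r^\kappa(x)}{(\lambda_0I-\lambda_1\Delta)\Psi_r^\kappa(x)},
\]
which is \eqref{eq:ce-barycenter-id}.

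The only step needing real care is the $C^2$ regularity: justifying the interchange of derivatives and the integral, together with continuity of the second derivatives. Only boundedness of $\nabla^2\eta$ is assumed, not decay, and that is sufficient here because $r$ is a finite measure.
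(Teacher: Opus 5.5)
Your proposal is correct and follows essentially the same route as the paper. Both bound the moment via $z\kappa(z)=-c_1\nabla\eta(z)$, differentiate under the integral sign using boundedness of $\eta,\nabla\eta,\nabla^2\eta$, convolve the two kernel identities with $r$, and invoke (F2) for the $C_0\cap L^1$ memberships. The only cosmetic difference is that you obtain continuity of $w_r^\kappa$ (hence of $m_r^\kappa$) directly by dominated convergence, whereas the paper reads it off from $m_r^\kappa=xu_r^\kappa+c_1\nabla\Psi_r^\kappa$.
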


\begin{proof}
The boundedness of $\kappa$ gives $u_r(x)<\infty$. Moreover, since
$z\kappa(z)=-c_1\nabla\eta(z)$,
\[
 \int \norm{y-x}\kappa(x-y)\,r(dy)
 \le c_1\|\nabla\eta\|_{L^\infty}<\infty,
\]
so $D_r^\kappa=\R^d$ and $m_r,a_r$ are well defined. Since $\eta,\nabla\eta,\nabla^2\eta$ are bounded and continuous, differentiation under the integral sign yields
\[
\nabla\Psi_r=(\nabla\eta)*r,
\qquad
\nabla^2\Psi_r=(\nabla^2\eta)*r,
\]
whence $\Psi_r\in C^2(\R^d)$. The positivity of $\eta$ together with $r(\R^d)=1$ gives $\Psi_r>0$.

For all $x,y$,
\[
\nabla_x\eta(x-y)=\frac{y-x}{c_1}\kappa(x-y),
\]
and integrating this identity yields \eqref{eq:ce-grad-id}. Convolving $(\lambda_0 I-\lambda_1\Delta)\eta=c_2\kappa$ with $r$ yields \eqref{eq:ce-elliptic-id}, and dividing by the positive function $u_r$ gives \eqref{eq:ce-barycenter-id}.

Finally, applying the convolution statement of \Cref{sec:preliminaries} to $\kappa$, $\eta$, and the components of $\nabla\eta$ gives
\[
 u_r=\kappa*r,
 \qquad \Psi_r=\eta*r,
 \qquad \nabla\Psi_r=(\nabla\eta)*r
 \in C_0(\R^d)\cap L^1(\R^d).
\]
The identity $m_r=xu_r+c_1\nabla\Psi_r$ gives the continuity of $m_r$, and since $u_r$ is continuous and positive, $a_r=m_r/u_r$ is continuous as well.
\end{proof}

With the companion identities above in hand, field identifiability splits into the two stages
\begin{equation}\label{eq:two-stage-identifiability}
V_{p,q}^\kappa\equiv0
\quad\Longrightarrow\quad
u_p^\kappa=u_q^\kappa
\quad\Longrightarrow\quad
p=q.
\end{equation}
The first stage is the problem of deducing the agreement of the convolution densities from $V_{p,q}^\kappa\equiv0$, that is, from the agreement of the normalized barycenters. \Cref{lem:companion-elliptic-identities} provides the first-order relation shared by the two potentials; we convert its Wronskian current into a stationary continuity equation and then fix the ratio of the potentials by \Cref{thm:renorm-sobolev,lem:renorm-mass}. The second stage is the deconvolution problem of recovering the original measures from $u_p^\kappa=u_q^\kappa$, and it uses the positivity $\widehat\kappa>0$ from \Cref{lem:ce-spectral-law} together with Fourier uniqueness. We first assemble the continuity equation tools needed for the first stage.

\begin{proposition}[Renormalization for continuity equations with Sobolev velocity fields]\label{thm:renorm-sobolev}
Let $b\in W^{1,1}_{\mathrm{loc}}(\R^d;\R^d)$, and suppose that
\[
\rho\in L^\infty_{\mathrm{loc}}\bigl(\R_t;L^1(\R^d)\cap L^\infty(\R^d)\bigr)
\]
solves $\partial_t\rho+\nabla\cdot(b\rho)=0$ on $\R_t\times\R^d$ in the distributional sense.
If $\beta\in C^1(\R)$ satisfies $\beta(0)=0$, $\beta'\in L^\infty$, and $z\beta'(z)-\beta(z)\in L^\infty$, then
\[
\partial_t\beta(\rho)+\nabla\cdot\bigl(b\beta(\rho)\bigr)
+\bigl(\beta'(\rho)\rho-\beta(\rho)\bigr)\nabla\cdot b=0
\]
holds in the distributional sense.
In particular, choosing $\beta_\delta(z)=\sqrt{z^2+\delta^2}-\delta$ and passing to the limit shows that $|\rho|$ satisfies $\partial_t|\rho|+\nabla\cdot(b|\rho|)=0$.
\end{proposition}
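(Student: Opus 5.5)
This is the DiPerna--Lions renormalization argument, and I would prove it by spatial mollification together with the commutator lemma. Fix a standard mollifier $\varphi_\eps(x)=\eps^{-d}\varphi(x/\eps)$ acting in space only and set $\rho_\eps:=\rho*_x\varphi_\eps$. Mollifying the equation gives
\[
\partial_t\rho_\eps+\nabla\cdot(b\rho_\eps)=R_\eps,
\qquad
R_\eps:=\nabla\cdot(b\rho_\eps)-\bigl(\nabla\cdot(b\rho)\bigr)*\varphi_\eps .
\]
For a.e.\ $x$, $\rho_\eps$ is smooth in $x$ and, in $t$, belongs to $W^{1,1}_{\mathrm{loc}}$ with values in $L^1_{\mathrm{loc}}$, since its time derivative equals $-\nabla\cdot(b\rho)*\varphi_\eps+\dots$ and is locally integrable. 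The chain rule therefore applies to $\beta(\rho_\eps)$ when $\beta\in C^1$ and $\beta'$ is bounded:
\[
\partial_t\beta(\rho_\eps)+\nabla\cdot\bigl(b\beta(\rho_\eps)\bigr)+\bigl(\beta'(\rho_\eps)\rho_\eps-\beta(\rho_\eps)\bigr)\nabla\cdot b=\beta'(\rho_\eps)R_\eps .
\]
Here $\nabla\cdot b\in L^1_{\mathrm{loc}}$ because $b\in W^{1,1}_{\mathrm{loc}}$.

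The key step, and the main obstacle, is the commutator estimate $R_\eps\to0$ in $L^1_{\mathrm{loc}}(\R_t\times\R^d)$. I would write $R_\eps$ explicitly as
\[
R_\eps(t,x)=\int\rho(t,y)\,\frac{b(x)-b(y)}{\eps}\cdot\frac{(\nabla\varphi)((x-y)/\eps)}{\eps^{d}}\,dy+\rho_\eps(t,x)\,\nabla\cdot b(x),
\]
and substitute $y=x-\eps z$. The difference quotient $\bigl(b(x)-b(x-\eps z)\bigr)/\eps$ converges in $L^1_{\mathrm{loc}}$ to $\nabla b(x)z$, uniformly for $z$ in the support of $\varphi$, with an $L^1_{\mathrm{loc}}$ bound of $|z|\,\|\nabla b\|_{L^1}$. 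Since $\rho$ is bounded in $L^\infty$ locally uniformly in $t$, the integrand is controlled by $\|\rho\|_{L^\infty}$ times an $L^1_{\mathrm{loc}}$ function. This gives a uniform bound
\[
\|R_\eps\|_{L^1(K)}\le C\|\rho\|_{L^\infty}\|\nabla b\|_{L^1(K')}
\]
on compacts $K\subset K'$. The limit is $\rho\int\nabla b\,z\cdot\nabla\varphi(z)\,dz+\rho\,\nabla\cdot b=-\rho\,\nabla\cdot b+\rho\,\nabla\cdot b=0$, using $\int z_i\partial_j\varphi=-\delta_{ij}$. To make the argument rigorous I would first prove the convergence for smooth $b$ and then conclude by density, using the uniform bound; this is the standard $L^\infty$--$W^{1,1}$ pairing.

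Next I pass to the limit $\eps\to0$ in the renormalized mollified equation. We have $\rho_\eps\to\rho$ in $L^1_{\mathrm{loc}}$ and a.e. along a subsequence, with $|\rho_\eps|\le\|\rho\|_{L^\infty}$ locally. Since $\beta'$ is bounded and $z\beta'-\beta\in L^\infty$, the bound $\beta(\rho_\eps)\le\|\beta'\|_\infty|\rho_\eps|$ together with dominated convergence handles $\beta(\rho_\eps)$, $b\beta(\rho_\eps)$ (because $b\in L^1_{\mathrm{loc}}$) and $(\beta'(\rho_\eps)\rho_\eps-\beta(\rho_\eps))\nabla\cdot b$. The right-hand side is bounded by $\|\beta'\|_\infty|R_\eps|$ and tends to $0$ in $L^1_{\mathrm{loc}}$. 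This yields the renormalized identity in $\mathcal D'$.

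For the final claim, take $\beta_\delta(z)=\sqrt{z^2+\delta^2}-\delta$. Then $\beta_\delta(0)=0$ and $|\beta_\delta'|\le1$. Moreover
\[
z\beta_\delta'(z)-\beta_\delta(z)=\delta-\frac{\delta^2}{\sqrt{z^2+\delta^2}}\in[0,\delta],
\]
so all hypotheses hold. As $\delta\to0$ we have $\beta_\delta(\rho)\to|\rho|$ with $|\beta_\delta(\rho)|\le|\rho|$, and the correction term $(z\beta_\delta'-\beta_\delta)(\rho)\nabla\cdot b$ is bounded by $\delta|\nabla\cdot b|\to0$ in $L^1_{\mathrm{loc}}$. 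Dominated convergence in each term then gives $\partial_t|\rho|+\nabla\cdot(b|\rho|)=0$ in the distributional sense.
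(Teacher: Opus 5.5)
Your proof is correct, but it is more self-contained than the paper's. The paper does no mollification. It lifts the problem to space-time with $B=(1,b)\in W^{1,1}_{\mathrm{loc}}(\R^{d+1};\R^{d+1})$ and $D\cdot B=\nabla_x\cdot b$, and rewrites the equation as $B\cdot D\rho=-\rho\,\nabla_x\cdot b$. It then invokes the DiPerna--Lions renormalization theorem as a black box and expands the space-time divergence. The only step it argues explicitly is the $\beta_\delta$ limit, which you handle exactly as the paper does.

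You instead reprove the cited theorem directly in the time-dependent setting:
\begin{itemize}
\item you mollify in space only;
\item you get the time regularity of $\rho_\eps$ from the equation;
\item you show $R_\eps\to0$ in $L^1_{\mathrm{loc}}$ via the $L^\infty$--$W^{1,1}$ commutator bound and density of smooth fields.
\end{itemize}
The paper's version is shorter and delegates the technical core to the literature. Yours makes explicit where $\rho\in L^\infty$ and $b\in W^{1,1}_{\mathrm{loc}}$ enter. It also supplies the Friedrichs commutator estimate that the paper later invokes without proof in \Cref{lem:forced-transport-l1-contraction}.

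Two small points to tidy. First, your regularity statement about $\rho_\eps$ is garbled. What you actually have, and need, is $\rho_\eps\in W^{1,\infty}_{\mathrm{loc}}(\R_t\times\R^d)$. This holds because $\nabla\rho_\eps=\rho*\nabla\varphi_\eps$ and $\partial_t\rho_\eps=-\sum_i(b_i\rho)*\partial_i\varphi_\eps$ are locally bounded. That justifies both the chain rule for $\beta(\rho_\eps)$ and the product rule for $b\,\beta(\rho_\eps)$. Second, the domination should read $|\beta(\rho_\eps)|\le\|\beta'\|_\infty|\rho_\eps|$, with an absolute value on the left.
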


\begin{proof}
We apply the Sobolev renormalization theorem of DiPerna--Lions \cite{diperna1989transport}; see also Ambrosio's extension to BV vector fields \cite{ambrosio2004transport}. Set $D=(\partial_t,\nabla_x)$ and $B(t,x)=(1,b(x))$. Then
$B\in W^{1,1}_{\mathrm{loc}}(\R^{d+1};\R^{d+1})$ and
$D\cdot B=\nabla_x\cdot b$. Since $\rho$ is locally bounded in space-time and
$\nabla_x\cdot b\in L^1_{\mathrm{loc}}$, the equation can be written as
\[
 B\cdot D\rho=D\cdot(B\rho)-\rho D\cdot B=-\rho\,\nabla_x\cdot b
 \quad\text{in }\mathcal D'(\R^{d+1}).
\]
Applying the cited theorem to the solution $\rho$ and the renormalizing function $\beta$ gives
\[
D\cdot(B\beta(\rho))-\beta(\rho)D\cdot B
=-\beta'(\rho)\rho\,\nabla_x\cdot b,
\]
and expanding the space-time divergence produces the renormalized equation displayed above.

For the absolute value, take
$\beta_\delta(z)=\sqrt{z^2+\delta^2}-\delta$. Then
$\beta_\delta(0)=0$, $|\beta_\delta'|\le1$,
$0\le |z|-\beta_\delta(z)\le\delta$, and
$|z\beta_\delta'(z)-\beta_\delta(z)|\le\delta$. Applying the preceding identity and letting
$\delta\downarrow0$, we see that every term converges in $L^1_{\mathrm{loc}}$. The last term is bounded on compact sets by
$\delta |\nabla_x\cdot b|$. Hence $\partial_t|\rho|+\nabla_x\cdot(b|\rho|)=0$ holds in the distributional sense.
\end{proof}

To complete the Wronskian argument of the first stage, the $L^1$ mass of the renormalized solution must be conserved. The next lemma records this for globally integrable fluxes.

\begin{lemma}[Mass conservation for renormalized integrable fluxes]\label{lem:renorm-mass}
Suppose $b$ and $\rho$ satisfy the assumptions of \Cref{thm:renorm-sobolev}, and in addition $b\rho\in L^\infty_{\mathrm{loc}}\bigl(\R_t;L^1(\R^d;\R^d)\bigr)$.
Then $t\mapsto\|\rho(t,\cdot)\|_{L^1(\R^d)}$ is constant in the distributional sense.
\end{lemma}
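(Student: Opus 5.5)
The plan is the standard cutoff argument applied to the equation $\partial_t|\rho|+\nabla\cdot(b|\rho|)=0$ furnished by \Cref{thm:renorm-sobolev}. The goal is the distributional identity $\frac{d}{dt}\int_{\R^d}|\rho(t,x)|\,dx=0$ on $\R_t$. Concretely, for every $\phi\in C_c^\infty(\R_t)$ we aim to show
\[
\int_{\R}\phi'(t)\,\|\rho(t,\cdot)\|_{L^1}\,dt=0.
\]
Choose a radial cutoff $\chi\in C_c^\infty(\R^d)$ with $0\le\chi\le1$, $\chi=1$ on $B_1$ and $\supp\chi\subset B_2$, and set $\chi_R(x)=\chi(x/R)$. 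Testing the renormalized equation against the product $\phi(t)\chi_R(x)\in C_c^\infty(\R^{d+1})$ gives
\[
\int_{\R}\int_{\R^d}\phi'(t)\chi_R(x)|\rho|\,dx\,dt
+\int_{\R}\int_{\R^d}\phi(t)\,|\rho|\,b\cdot\nabla\chi_R(x)\,dx\,dt=0.
\]

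Next let $R\to\infty$, treating the two terms separately.

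\emph{First term.} Since $\rho\in L^\infty_{\mathrm{loc}}(\R_t;L^1)$ and $\phi'$ has compact support, the integrand is dominated by $|\phi'(t)|\,|\rho(t,x)|$, which is integrable on $\supp\phi\times\R^d$. Dominated convergence then sends this term to $\int\phi'(t)\|\rho(t,\cdot)\|_{L^1}\,dt$.

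\emph{Second term.} We have $|\nabla\chi_R|\le\|\nabla\chi\|_\infty/R$, and $\nabla\chi_R$ is supported in the annulus $B_{2R}\setminus B_1$. Hence the term is bounded by
\[
\frac{\|\nabla\chi\|_\infty}{R}\int_{\supp\phi}|\phi(t)|\,\|b\rho(t,\cdot)\|_{L^1}\,dt.
\]
This tends to $0$ because $b\rho\in L^\infty_{\mathrm{loc}}(\R_t;L^1)$. Note that $|b\,|\rho||=|b\rho|$, which is exactly where the extra hypothesis enters.

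The main point to check is that the test function $\phi\chi_R$ is admissible, and that $b|\rho|$ is a locally integrable flux, so that the distributional formulation from \Cref{thm:renorm-sobolev} applies as written. This holds because $b\in L^1_{\mathrm{loc}}$ and $\rho$ is locally bounded. With that in place, the limit yields the identity above, so the derivative of $t\mapsto\|\rho(t,\cdot)\|_{L^1}$ vanishes in $\mathcal D'(\R_t)$. By the standard fact that a distribution with zero derivative is constant, $\|\rho(t,\cdot)\|_{L^1}$ agrees a.e.\ with a constant.
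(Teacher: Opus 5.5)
Your proposal is correct and follows essentially the same route as the paper. You test the renormalized equation for $|\rho|$ from \Cref{thm:renorm-sobolev} against $\phi(t)\chi_R(x)$, kill the flux term using $|\nabla\chi_R|\le\|\nabla\chi\|_{L^\infty}/R$ together with the hypothesis $b\rho\in L^\infty_{\mathrm{loc}}(\R_t;L^1)$, and pass to the limit in the mass term by dominated convergence. One harmless slip: $\nabla\chi_R$ is supported in $\overline{B_{2R}}\setminus B_R$, not in $B_{2R}\setminus B_1$, but your estimate never uses the support.
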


\begin{proof}
By \Cref{thm:renorm-sobolev}, $|\rho|$ solves
$\partial_t|\rho|+\nabla\cdot(b|\rho|)=0$. Set $\chi_R(x)=\chi(x/R)$, where $\chi\in C_c^\infty(\R^d)$, $0\le\chi\le1$, and
$\chi\equiv1$ on $B_1$. Testing against $\varphi(t)\chi_R(x)$ gives
\[
-\int \varphi'(t)\int \chi_R|\rho(t)|\,dx\,dt
-\int \varphi(t)\int b|\rho(t)|\cdot\nabla\chi_R\,dx\,dt=0.
\]
The flux term is bounded by
\[
\frac{\|\varphi\|_{L^\infty}\|\nabla\chi\|_{L^\infty}}{R}
\int_{\supp\varphi}\|b\rho(t)\|_{L^1}\,dt,
\]
and hence converges to zero. Since $0\le\chi_R\le1$, $\chi_R\to1$, and
$\rho\in L^\infty_{\mathrm{loc}}(\R_t;L^1_x)$, dominated convergence yields
\[
\int_\R \varphi'(t)\|\rho(t)\|_{L^1}\,dt=0.
\]
Therefore $t\mapsto\|\rho(t)\|_{L^1}$ is distributionally constant.
\end{proof}

This completes the continuity equation toolkit for the first stage. In the second stage, the original measures must be recovered from $u_p^\kappa=u_q^\kappa$. To this end, we show from the companion identities that the Fourier multiplier of the kernel is positive at every frequency.

\begin{lemma}[Spectral law of the companion-elliptic class]\label{lem:ce-spectral-law}
We have $\widehat\kappa\in C^1(\R^d)$ and
\begin{equation}\label{eq:ce-spectral-ode}
\nabla\widehat\kappa(\xi)
=-\frac{c_1c_2}{\lambda_0+\lambda_1\norm\xi^2}\,\xi\,\widehat\kappa(\xi).
\end{equation}
Consequently, $\widehat\kappa$ is radial and everywhere positive.
More explicitly,
\begin{align}
\widehat\kappa(\xi)&=\widehat\kappa(0)\exp\!\left(-\frac{c_1c_2}{2\lambda_0}\norm\xi^2\right),&& \lambda_1=0,\label{eq:ce-spectral-gaussian}\\
\widehat\kappa(\xi)&=\widehat\kappa(0)\left(1+\frac{\lambda_1}{\lambda_0}\norm\xi^2\right)^{-c_1c_2/(2\lambda_1)},&& \lambda_1>0.\label{eq:ce-spectral-matern}
\end{align}
\end{lemma}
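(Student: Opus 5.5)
We work directly on the Fourier side and take the transform of the two companion relations \eqref{eq:ce-companion}.

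\emph{Regularity.} Since $\norm z\kappa(z)=c_1\norm{\nabla\eta(z)}$ and $\nabla\eta\in L^1$ by \ref{ass:ce-eta}, the function $z\kappa(z)$ is integrable. Differentiation under the integral sign then gives $\widehat\kappa\in C^1(\R^d)$ and
\[
\nabla\widehat\kappa(\xi)=-i\int z\,e^{-i\langle z,\xi\rangle}\kappa(z)\,dz .
\]
Using $z\kappa(z)=-c_1\nabla\eta(z)$, this becomes
\[
\nabla\widehat\kappa(\xi)=ic_1\widehat{\nabla\eta}(\xi)=ic_1\cdot i\xi\,\widehat\eta(\xi)=-c_1\xi\,\widehat\eta(\xi).
\]
The identity $\widehat{\nabla\eta}=i\xi\widehat\eta$ holds because $\eta,\nabla\eta\in L^1$: integrate by parts against $e^{-i\langle x,\xi\rangle}$ with a cutoff $\chi_R$, noting that the boundary term vanishes since $\eta\in L^1$.

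\emph{Second relation.} Next we express $\widehat\eta$ through $\widehat\kappa$ using $(\lambda_0-\lambda_1\Delta)\eta=c_2\kappa$, which should give
\[
(\lambda_0+\lambda_1\norm\xi^2)\widehat\eta=c_2\widehat\kappa .
\]
This is the step needing care, because $\Delta\eta$ is only known to be bounded, not integrable. I would argue in $\mathcal S'$: $\eta\in L^1$ is tempered, so $\widehat{\Delta\eta}=-\norm\xi^2\widehat\eta$ holds distributionally. Both sides of the identity are then continuous functions, since $\widehat\eta$ and $\widehat\kappa$ are continuous, so it holds pointwise. Alternatively, $\Delta\eta=(c_2\kappa-\lambda_0\eta)/\lambda_1\in L^1$ when $\lambda_1>0$, which also settles integrability. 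Combining with the regularity step yields \eqref{eq:ce-spectral-ode}.

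\emph{Solving the equation.} Along each ray $\xi=s\omega$, the function $g(s)=\widehat\kappa(s\omega)$ solves the scalar linear ODE
\[
g'(s)=-\frac{c_1c_2\,s}{\lambda_0+\lambda_1s^2}\,g(s),
\]
so by uniqueness
\[
g(s)=\widehat\kappa(0)\exp\!\Bigl(-\int_0^s \frac{c_1c_2t}{\lambda_0+\lambda_1t^2}\,dt\Bigr).
\]
Evaluating the integral gives \eqref{eq:ce-spectral-gaussian} when $\lambda_1=0$ and \eqref{eq:ce-spectral-matern} via a logarithm when $\lambda_1>0$. This formula shows that $\widehat\kappa$ depends only on $\norm\xi$, hence is radial. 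Finally, $\kappa>0$ and $\kappa\in L^1$ give $\widehat\kappa(0)=\int\kappa>0$, so $\widehat\kappa$ is positive everywhere. The only genuine obstacle is justifying the distributional Fourier identities under the stated integrability; everything else is routine.
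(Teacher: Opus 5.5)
Your proposal is correct and follows the paper's proof: you Fourier-transform both companion identities, eliminate $\widehat\eta$ to get the ODE, and integrate along rays from $\widehat\kappa(0)=\int\kappa>0$. Your tempered-distribution justification of the second transform works equally well; it is an alternative to the paper's observation that $\Delta\eta=\lambda_1^{-1}(\lambda_0\eta-c_2\kappa)\in L^1$ when $\lambda_1>0$, which you also mention.
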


\begin{proof}
The identity $z\kappa(z)=-c_1\nabla\eta(z)$ and the assumption
$\nabla\eta\in L^1$ imply $z\kappa\in L^1$. Hence $\widehat\kappa\in C^1$ and $\partial_{\xi_j}\widehat\kappa(\xi)=-i\int z_j e^{-i\langle z,\xi\rangle}\kappa(z)\,dz$.
Taking Fourier transforms of the two companion identities gives
\[
 i\xi\widehat\eta(\xi)=-\frac{i}{c_1}\nabla\widehat\kappa(\xi),
 \qquad
 (\lambda_0+\lambda_1\norm{\xi}^2)\widehat\eta(\xi)=c_2\widehat\kappa(\xi).
\]
Here, when $\lambda_1>0$, the fact that $\Delta\eta=\lambda_1^{-1}(\lambda_0\eta-c_2\kappa)\in L^1$ justifies the second transform as an $L^1$ identity. Eliminating $\widehat\eta$ yields \eqref{eq:ce-spectral-ode}.

For $\omega\in\mathbb S^{d-1}$, set $g_\omega(r)=\widehat\kappa(r\omega)$. Then
\[
 g_\omega'(r)=-\frac{c_1c_2r}{\lambda_0+\lambda_1r^2}g_\omega(r),
 \qquad
 g_\omega(0)=\widehat\kappa(0)=\int\kappa>0.
\]
Solving this scalar ODE gives \eqref{eq:ce-spectral-gaussian} when $\lambda_1=0$ and \eqref{eq:ce-spectral-matern} when $\lambda_1>0$. In either case the right-hand side is independent of $\omega$ and everywhere positive, so $\widehat\kappa$ is radial and everywhere positive.
\end{proof}

We now prove the first implication of \eqref{eq:two-stage-identifiability} using \Cref{lem:companion-elliptic-identities,thm:renorm-sobolev,lem:renorm-mass}, and complete the second implication with \Cref{lem:ce-spectral-law} and Fourier uniqueness.

\begin{theorem}[Field identifiability in the companion-elliptic class]\label{thm:companion-elliptic}
For any $p,q\in\mathcal P(\R^d)$, the field $V_{p,q}^\kappa$ is well defined on all of $\R^d$, and
\[
V_{p,q}^\kappa\equiv0\qquad\Longrightarrow\qquad p=q.
\]
\end{theorem}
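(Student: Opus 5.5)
The plan follows the two-stage scheme \eqref{eq:two-stage-identifiability}. Well-definedness on all of $\R^d$ is already contained in \Cref{lem:companion-elliptic-identities}, since $D_p^\kappa=D_q^\kappa=\R^d$. The second implication is also quick. If $u_p=u_q$, then $\kappa*(p-q)=0$, so $\widehat\kappa\,(\widehat p-\widehat q)\equiv0$. Since $\widehat\kappa>0$ everywhere by \Cref{lem:ce-spectral-law}, this gives $\widehat p=\widehat q$, and Fourier uniqueness (F3) yields $p=q$. The real work is the first implication $V_{p,q}\equiv0\Rightarrow u_p=u_q$.

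For the first stage, write $L:=\lambda_0I-\lambda_1\Delta$. Let $b(x):=a_p(x)-x=a_q(x)-x$ be the common centered barycenter field. By \eqref{eq:ce-grad-id} and \eqref{eq:ce-elliptic-id}, both $r=p$ and $r=q$ satisfy the same first-order linear relation
\[
c_1\nabla\Psi_r=b\,u_r,\qquad c_2u_r=L\Psi_r .
\]
Equivalently, the Wronskian current vanishes: $u_q\nabla\Psi_p-u_p\nabla\Psi_q\equiv0$.

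The degenerate case $\lambda_1=0$ is handled directly. There $\Psi_r=(c_2/\lambda_0)u_r$, so the relation reads $\nabla\log u_p=\nabla\log u_q$. Hence $u_p=Cu_q$, and $C=1$ because $\int u_p=\int u_q=\int\kappa$.

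For $\lambda_1>0$, I would exploit linearity and set $\rho:=u_p-u_q$ and $\phi:=\Psi_p-\Psi_q$. These satisfy $c_1\nabla\phi=b\rho$ and $c_2\rho=\lambda_0\phi-\lambda_1\Delta\phi$, and also $\int\rho=0$. Taking the divergence of the first relation and inserting the second gives a closed stationary transport-type equation
\[
\nabla\cdot(b\rho)=c_1\Delta\phi=\tfrac{c_1}{\lambda_1}\bigl(\lambda_0\phi-c_2\rho\bigr).
\]
This still involves $\phi$. The step I would try first is to express everything in terms of the ratio $h=\Psi_p/\Psi_q$, which is $C^2$ since $\Psi_q>0$. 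The Wronskian identity should then reduce to the statement that the stationary density $\Psi_q^2\nabla h$-current (or $u_q$ transported by $b$ with $h$ as a renormalized quantity) solves $\nabla\cdot(b\,\varrho)=0$ for a suitable $\varrho\in L^1\cap L^\infty$ built from $u_q$ and $h$.

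Viewing this time-independent $\varrho$ as a solution on $\R_t\times\R^d$, I will check the hypotheses. The field should satisfy $b\in W^{1,1}_{\mathrm{loc}}$, from $b=c_1\nabla\Psi_q/u_q$ with $\Psi_q\in C^2$ and $u_q$ continuous and positive; the local Sobolev regularity of $u_q$ comes from $\kappa\in W^{1,1}$. The flux $b\varrho\in L^1$ should follow from $\nabla\Psi_r\in L^1$. With these in place, \Cref{thm:renorm-sobolev} shows that renormalizations $\beta(h)\varrho$ are again stationary solutions. Choosing $\beta$ to isolate $\{h>c\}$ and $\{h<c\}$, and applying the flux-vanishing computation of \Cref{lem:renorm-mass} (testing with $\chi_R$), should force $\nabla h=0$ wherever $\varrho\neq0$. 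I then expect to conclude that $h$ is constant, so $\Psi_p=C\Psi_q$. Applying $L$ gives $u_p=Cu_q$, and integrating gives $C=1$.

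The main obstacle is exactly this middle step. One must find the correct conserved density so that the equation is a genuine continuity equation without source, and verify the DiPerna--Lions regularity $b\in W^{1,1}_{\mathrm{loc}}$ together with the global integrability of the flux. Those are the places where (CE1)–(CE2) must be used in full.
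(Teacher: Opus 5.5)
Your treatment of well-definedness, the degenerate case $\lambda_1=0$, and the deconvolution stage is correct. For $\lambda_1>0$, however, which is the real content of the theorem, the proposal gives no argument, and the route it sketches would fail. You aim for a sourceless stationary equation $\nabla\cdot(b\varrho)=0$ and then renormalize by $\beta(h)$ with $h=\Psi_p/\Psi_q$. But \Cref{thm:renorm-sobolev} only renormalizes by functions of the solution itself. Multiplying by $\beta(h)$ preserves the equation only if $b\cdot\nabla h=0$. Your own relations $c_1\nabla\Psi_r=bu_r$ give
\[
J:=\Psi_q\nabla\Psi_p-\Psi_p\nabla\Psi_q=\Psi_q^2\nabla h=\tfrac{1}{c_1}\bigl(\Psi_qu_p-\Psi_pu_q\bigr)\,b .
\]
So $\nabla h$ is a scalar multiple of $b$, and $b\cdot\nabla h=0$ holds at a point if and only if $\nabla h=0$ there. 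The renormalization step therefore presupposes its conclusion. Moreover, a sourceless stationary equation has no coercivity. Testing it against $\chi_R$ as in \Cref{lem:renorm-mass} produces only the flux term, so nothing about $\|\varrho\|_{L^1}$ can be extracted.

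The missing idea is that the correct equation has a zero-order term, and that term is what forces vanishing. Set $s:=\nabla\cdot J=\Psi_q\Delta\Psi_p-\Psi_p\Delta\Psi_q$. Inserting $c_2u_r=\lambda_0\Psi_r-\lambda_1\Delta\Psi_r$ gives $\Psi_qu_p-\Psi_pu_q=-\tfrac{\lambda_1}{c_2}s$. Hence $J=-\tfrac{\lambda_1}{c_1c_2}\,s\,b$, and taking the divergence,
\[
s+\tfrac{\lambda_1}{c_1c_2}\nabla\cdot(bs)=0 .
\]
The hypotheses of the renormalization tools hold:
\begin{itemize}
\item $s=\tfrac{c_2}{\lambda_1}(\Psi_pu_q-\Psi_qu_p)$ is continuous and lies in $L^1\cap L^\infty$;
\item $bs=-\tfrac{c_1c_2}{\lambda_1}J\in L^1$, because $\Psi_r$ is bounded and $\nabla\Psi_r\in L^1$;
\item $b=c_1\nabla\Psi_q/u_q\in W^{1,1}_{\mathrm{loc}}$, as you indicated.
\end{itemize}
The lift $\rho(t,x)=e^{c_1c_2t/\lambda_1}s(x)$ solves $\partial_t\rho+\nabla\cdot(b\rho)=0$. \Cref{thm:renorm-sobolev,lem:renorm-mass} then make $\|\rho(t)\|_{L^1}=e^{c_1c_2t/\lambda_1}\|s\|_{L^1}$ constant in $t$, which forces $s\equiv0$ by continuity. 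Then $J\equiv0$ and $h$ is constant. Your closing steps (apply $L$, normalize via $\int u_r=\int\kappa$, invert in Fourier) finish exactly as in the paper.
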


\begin{proof}
Write $u_r,m_r,a_r,\Psi_r$ for the objects associated with the fixed kernel.
By \Cref{lem:companion-elliptic-identities}, the field is globally defined.
Assume $V_{p,q}^\kappa\equiv0$. Then $a_p=a_q=:a$, and we set $b(x):=(a(x)-x)/(c_1c_2)$.
For $r=p,q$, \eqref{eq:ce-barycenter-id} and \eqref{eq:ce-elliptic-id} yield
\begin{equation}\label{eq:ce-first-order-relation}
\nabla\Psi_r=b(\lambda_0 I-\lambda_1\Delta)\Psi_r.
\end{equation}
Now set $J:=\Psi_q\nabla\Psi_p-\Psi_p\nabla\Psi_q$. If $\lambda_1=0$, multiplying the two equations in \eqref{eq:ce-first-order-relation} by
$\Psi_q$ and $\Psi_p$ respectively and subtracting gives $J=0$.
We may therefore assume $\lambda_1>0$ and set
\begin{equation}\label{eq:ce-s-definition}
 s:=\nabla\cdot J=\Psi_q\Delta\Psi_p-\Psi_p\Delta\Psi_q.
\end{equation}
The same subtraction yields
\begin{equation}\label{eq:ce-J-bs}
 J=-\lambda_1 b s,
\end{equation}
and therefore
\begin{equation}\label{eq:ce-stationary-s}
 s+\lambda_1\nabla\cdot(bs)=0
\end{equation}
holds in $\mathcal D'(\R^d)$. We now verify the hypotheses of the renormalization argument. First, \eqref{eq:ce-elliptic-id} gives $s=(c_2/\lambda_1)(\Psi_pu_q-\Psi_qu_p)$, so $s\in L^1\cap L^\infty$; and since $\Psi_r,\nabla\Psi_r\in L^1\cap L^\infty$, we have
$J\in L^1(\R^d;\R^d)$. Moreover, $b=\nabla\Psi_q/(c_2u_q)$. Here $\nabla^2\eta\in L^\infty$ implies $\nabla\Psi_q\in W^{1,\infty}_{\mathrm{loc}}$, and
$\kappa\in W^{1,1}$ implies $u_q\in W^{1,1}_{\mathrm{loc}}$. Furthermore, $u_q$ is positive and continuous, hence bounded away from zero on compact sets. The Sobolev chain rule therefore gives
$b\in W^{1,1}_{\mathrm{loc}}$.

To pass from the stationary equation \eqref{eq:ce-stationary-s} to the time-dependent setting of \Cref{thm:renorm-sobolev,lem:renorm-mass}, set $\rho(t,x)=e^{t/\lambda_1}s(x)$; the exponential factor turns \eqref{eq:ce-stationary-s} into the continuity equation $\partial_t\rho+\nabla\cdot(b\rho)=0$. The bounds above and \eqref{eq:ce-J-bs} give
\[
\rho\in L^\infty_{\mathrm{loc}}(\R_t;L^1\cap L^\infty),
\qquad
b\rho=-\frac{e^{t/\lambda_1}}{\lambda_1}J
\in L^\infty_{\mathrm{loc}}(\R_t;L^1(\R^d;\R^d)).
\]
By \Cref{thm:renorm-sobolev,lem:renorm-mass}, $\|\rho(t)\|_{L^1}$ is constant in $t$. But $\|\rho(t)\|_{L^1}=e^{t/\lambda_1}\|s\|_{L^1}$ grows exponentially unless $s$ vanishes, so $s=0$ a.e. Since the expression for $s$ above is continuous, $s\equiv0$, and \eqref{eq:ce-J-bs} gives $J\equiv0$.

In both cases we have thus obtained $J\equiv0$. Since $\Psi_q>0$,
\[
\nabla\left(\frac{\Psi_p}{\Psi_q}\right)=\frac{J}{\Psi_q^2}=0,
\]
and on the connected set $\R^d$ there is a constant $c>0$ with $\Psi_p=c\Psi_q$. Applying \eqref{eq:ce-elliptic-id} gives $u_p=cu_q$. Since for every probability measure $r$
\[
\int_{\R^d}u_r(x)\,dx=\int_{\R^d}\kappa(z)\,dz,
\]
we conclude $c=1$, hence $u_p=u_q$, that is, $\kappa*(p-q)=0$. This proves the first implication of \eqref{eq:two-stage-identifiability}. For the second implication, taking Fourier transforms gives
\[
\widehat\kappa(\xi)(\widehat p(\xi)-\widehat q(\xi))=0.
\]
By \Cref{lem:ce-spectral-law}, $\widehat\kappa>0$, and Fourier uniqueness for the finite signed measure $p-q$ yields $p=q$.
\end{proof}

\subsection{Classification of the companion-elliptic class}
\label{subsec:ce-examples-classification}
\label{subsec:finite-order-closure-sharpness}

\noindent \Cref{thm:companion-elliptic} applies to every kernel satisfying \Cref{ass:companion-elliptic}, and the natural next question is which kernels these are. We first verify that the Mat\'ern family satisfies the companion-elliptic assumption. We then show that generalizing the elliptic relation to a polynomial $P(-\Delta)$ produces no new scalar kernels, and use this to classify the companion-elliptic class.

\begin{lemma}[$W^{1,1}$ regularity]\label{lem:radial-profile-W11}
Let $\phi\in C([0,\infty))\cap C^1((0,\infty))$ satisfy
\[
\int_0^\infty |\phi(r)|r^{d-1}\,dr
+
\int_0^\infty |\phi'(r)|r^{d-1}\,dr
<\infty.
\]
If $f(x):=\phi(\norm x)$, then $f\in W^{1,1}(\R^d)$.
More precisely, for almost every $x\neq0$,
\[
\nabla f(x)=\phi'(\norm x)\frac{x}{\norm x}.
\]
\end{lemma}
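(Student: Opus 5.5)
The plan is to check three things: $f\in L^1$; the classical gradient away from the origin equals the displayed formula and is integrable; and that classical gradient is the weak gradient on all of $\R^d$, so the origin carries no singular part. The first two follow from polar coordinates. We have
\[
\|f\|_{L^1}=|\mathbb S^{d-1}|\int_0^\infty|\phi(r)|r^{d-1}\,dr<\infty .
\]
On $\R^d\setminus\{0\}$ the map $x\mapsto\norm x$ is smooth with gradient $x/\norm x$. Since $\phi\in C^1((0,\infty))$, the chain rule shows that $f$ is $C^1$ there, with $\nabla f(x)=\phi'(\norm x)x/\norm x$. The same polar computation bounds $\|\nabla f\|_{L^1(\R^d\setminus\{0\})}$ by $|\mathbb S^{d-1}|\int_0^\infty|\phi'(r)|r^{d-1}\,dr<\infty$. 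Call this pointwise gradient $g\in L^1(\R^d;\R^d)$.

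The remaining step is to show $\int f\,\partial_j\varphi\,dx=-\int g_j\varphi\,dx$ for every $\varphi\in C_c^\infty(\R^d)$. I will excise a small ball. For $\eps>0$, integrate by parts on the annulus $\{\eps<\norm x<R\}$, with $R$ large enough that $\supp\varphi\subset B_R$. Since $f\in C^1$ on the closed annulus, the divergence theorem gives
\[
\int_{\norm x>\eps} f\,\partial_j\varphi\,dx=-\int_{\norm x>\eps} g_j\varphi\,dx-\int_{\norm x=\eps}\phi(\eps)\varphi(x)\frac{x_j}{\eps}\,dS(x).
\]
The boundary term is bounded by $|\phi(\eps)|\,\|\varphi\|_\infty|\mathbb S^{d-1}|\eps^{d-1}$. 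This is where continuity of $\phi$ at $0$ is used: $\phi$ is bounded near $0$, so for $d\ge2$ the term is $O(\eps^{d-1})\to0$. For $d=1$ the two boundary contributions are $\phi(\eps)\varphi(\eps)$ and $-\phi(\eps)\varphi(-\eps)$ (up to sign convention). Their difference is $\phi(\eps)\bigl(\varphi(\eps)-\varphi(-\eps)\bigr)=O(\eps)\to0$, again by boundedness of $\phi$ near $0$. So the case $d=1$ also goes through, even though a single sphere term does not vanish on its own.

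Finally, let $\eps\downarrow0$. Because $f\partial_j\varphi$ and $g_j\varphi$ are integrable ($f,g\in L^1$ and $\varphi$ is bounded), dominated convergence sends both volume integrals to their values over $\R^d$. This gives the weak derivative identity, so $f\in W^{1,1}(\R^d)$ with $\nabla f=g$ almost everywhere. The only delicate point is the boundary term at the origin, and it is handled by the continuity hypothesis $\phi\in C([0,\infty))$ together with the cancellation in dimension one.
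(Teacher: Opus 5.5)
Your proposal is correct and matches the paper's proof step for step. Both use polar coordinates to get $f$ and the pointwise gradient in $L^1$, integrate by parts on $\{\norm{x}>\eps\}$ with the sphere term of order $O(\eps^{d-1})$ for $d\ge2$, and handle $d=1$ separately through the vanishing difference $\phi(\eps)\bigl(\varphi(\eps)-\varphi(-\eps)\bigr)$.
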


\begin{proof}
The polar-coordinate formula gives $f\in L^1$ and
\[
F(x):=\phi'(\norm{x})\frac{x}{\norm{x}}\mathbf 1_{\{x\neq0\}}
\in L^1(\R^d;\R^d).
\]
For $\zeta\in C_c^\infty(\R^d)$ and each coordinate $j$, we integrate by parts on $\{\norm{x}>\varepsilon\}$ when $d\ge2$. Since $\phi$ is continuous at $0$, the boundary term is
\[
-\int_{\norm{x}=\varepsilon}\phi(\varepsilon)\zeta(x)\frac{x_j}{\varepsilon}\,dS(x)=O(\varepsilon^{d-1})\to0.
\]
Letting $\varepsilon\downarrow0$ yields
$\int f\partial_j\zeta=-\int F_j\zeta$. In one dimension, the same argument on
$(-\infty,-\varepsilon)\cup(\varepsilon,\infty)$ produces the boundary term
$\phi(\varepsilon)(\zeta(-\varepsilon)-\zeta(\varepsilon))\to0$. Hence
$F$ is the weak gradient of $f$, and $f\in W^{1,1}(\R^d)$.
\end{proof}

\begin{proposition}[Companion-elliptic structure of the Mat\'ern kernel]\label{prop:matern-companion-elliptic}
Fix $\ell>0$ and $\nu>0$.
Let $K_\nu$ be the modified Bessel function of the second kind, and define
\begin{equation}\label{eq:matern-kernel-definitions}
\kappa_{\nu,\ell}(z):=\ell\norm z^\nu K_\nu\!\left(\frac{\norm z}{\ell}\right),
\qquad
\eta_{\nu,\ell}(z):=\norm z^{\nu+1}K_{\nu+1}\!\left(\frac{\norm z}{\ell}\right),
\end{equation}
where the values at $z=0$ are defined by continuous extension.
Then \Cref{ass:companion-elliptic} holds.
\end{proposition}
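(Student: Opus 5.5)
The plan is to work entirely with radial profiles and to reduce both companion identities in \eqref{eq:ce-companion} to two classical Bessel facts. The first is the derivative rule $\frac{d}{ds}\bigl[s^\mu K_\mu(s)\bigr]=-s^\mu K_{\mu-1}(s)$. The second is the recurrence $K_{\nu+1}(s)=K_{\nu-1}(s)+\frac{2\nu}{s}K_\nu(s)$. Write $r=\norm z$ and $s=r/\ell$, so that
\[
\kappa_{\nu,\ell}=\ell^{\nu+1}s^\nu K_\nu(s),
\qquad
\eta_{\nu,\ell}=\ell^{\nu+1}s^{\nu+1}K_{\nu+1}(s).
\]
I expect to obtain the constants
\[
c_1=\ell^2,\qquad \lambda_0=1,\qquad \lambda_1=\ell^2,\qquad c_2=d+2\nu .
\]
For $\nu=1/2$ these give $c_2=d+1$, which is consistent with the Laplace computation in \eqref{eq:laplace-companion}.

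\emph{Step 1 (gradient identity).} Apply the derivative rule with $\mu=\nu+1$. This gives $\partial_r\eta=-\ell^{\nu}s^{\nu+1}K_\nu(s)=-(r/\ell^2)\kappa$, hence $\nabla\eta(z)=-(z/\ell^2)\kappa(z)$ for $z\neq0$. The value at $z=0$ follows by continuity.

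\emph{Step 2 (elliptic identity).} Use the radial Laplacian $\Delta\eta=\eta''+(d-1)\eta'/r$ together with $\eta'=-r\kappa/\ell^2$. This gives
\[
\Delta\eta=-\frac{d}{\ell^2}\kappa-\frac{r}{\ell^2}\kappa'.
\]
The derivative rule with $\mu=\nu$ gives $r\kappa'=-\ell^{\nu+1}s^{\nu+1}K_{\nu-1}(s)$. The recurrence turns this into $r\kappa'=2\nu\kappa-\eta$. Substituting yields $\Delta\eta=\ell^{-2}\eta-(d+2\nu)\ell^{-2}\kappa$, that is, $(I-\ell^2\Delta)\eta=(d+2\nu)\kappa$ away from the origin. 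It extends everywhere once $\eta\in C^2$ is established in Step 3.

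\emph{Step 3 (regularity and integrability).} I would use the standard asymptotics $s^\mu K_\mu(s)\to2^{\mu-1}\Gamma(\mu)>0$ as $s\downarrow0$ and $s^\mu K_\mu(s)\sim\sqrt{\pi/2}\,s^{\mu-1/2}e^{-s}$ as $s\to\infty$. These give positivity, continuity, boundedness, $C_0$, and $L^1$ for $\kappa$, $\eta$ and $\nabla\eta=-(z/\ell^2)\kappa$.

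For $\kappa\in W^{1,1}$, I would invoke \Cref{lem:radial-profile-W11} with $\phi'(r)=-\ell^{\nu}s^\nu K_{\nu-1}(s)$. Near $0$, $K_{\nu-1}(s)=K_{1-\nu}(s)$ behaves like $s^{-|\nu-1|}$, or like $|\log s|$ when $\nu=1$. So $|\phi'(r)|r^{d-1}\lesssim r^{\nu-|\nu-1|+d-1}$ up to a logarithm. The exponent exceeds $-1$ because $2\nu+d>1$ and $d\ge1$. Exponential decay handles infinity.

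\emph{Main obstacle: $\eta\in C^2$ across the origin with bounded Hessian.} For a radial $C^1$ function the Hessian is
\[
\nabla^2\eta=\Bigl(\eta''-\frac{\eta'}{r}\Bigr)\frac{zz^{\top}}{r^2}+\frac{\eta'}{r}I .
\]
Here $\eta'/r=-\kappa/\ell^2$ is continuous. Moreover, $\eta''-\eta'/r=-r\kappa'/\ell^2=(\eta-2\nu\kappa)/\ell^2$ by Step 2. The key check is that this vanishes at $r=0$: $\eta(0)=\ell^{\nu+1}2^\nu\Gamma(\nu+1)=2\nu\kappa(0)$. Hence the singular direction term $zz^\top/r^2$ is multiplied by a quantity tending to zero, and $\nabla^2\eta$ extends continuously to the origin. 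It is bounded globally by the decay estimates.

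If the Bessel-asymptotic bookkeeping becomes delicate for small $\nu$, I would instead fall back on the Fourier side. There $\widehat\kappa\propto(1+\ell^2\norm\xi^2)^{-\nu-d/2}$ and $\widehat\eta\propto(1+\ell^2\norm\xi^2)^{-\nu-1-d/2}$, and the two companion identities can be read off directly, consistent with \eqref{eq:ce-spectral-matern}.
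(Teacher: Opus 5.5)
Your proposal is correct and follows the paper's proof essentially step for step. It uses the same two Bessel identities to get $\eta'=-(r/\ell^2)\kappa$ and $\eta=-r\kappa'+2\nu\kappa$, the radial Laplacian to get $(I-\ell^2\Delta)\eta_{\nu,\ell}=(d+2\nu)\kappa_{\nu,\ell}$, and \Cref{lem:radial-profile-W11} together with the radial Hessian formula for the regularity; the only cosmetic difference is that you show $\eta''-\eta'/r\to0$ at the origin via $\eta(0)=2\nu\kappa(0)$, whereas the paper reads off $r\phi'(r)\to0$ directly from the small-argument asymptotics, and the two are equivalent.
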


\begin{proof}
For $r>0$, set
\[
\phi(r)=\ell r^\nu K_\nu(r/\ell),
\qquad
\psi(r)=r^{\nu+1}K_{\nu+1}(r/\ell).
\]
The Bessel recurrence and derivative identities \cite[\S10.29(i)--(ii)]{dlmf}
\[
\frac{d}{ds}\bigl(s^\mu K_\mu(s)\bigr)=-s^\mu K_{\mu-1}(s),
\qquad
K_{\mu+1}(s)=K_{\mu-1}(s)+\frac{2\mu}{s}K_\mu(s)
\]
yield
\begin{equation}\label{eq:matern-eta-prime-short}
\psi'(r)=-\frac{r}{\ell^2}\phi(r),
\end{equation}
and
\begin{equation}\label{eq:matern-eta-kappa-relation-short}
\psi(r)=-r\phi'(r)+2\nu\phi(r).
\end{equation}
Hence, on $\R^d\setminus\{0\}$,
\[
\nabla\eta_{\nu,\ell}(z)=-\frac{z}{\ell^2}\kappa_{\nu,\ell}(z),
\]
and the radial Laplacian formula gives
\[
\Delta\eta_{\nu,\ell}(z)
=-\frac d{\ell^2}\phi(r)-\frac r{\ell^2}\phi'(r),
\qquad r=\norm{z}.
\]
Combining this with \eqref{eq:matern-eta-kappa-relation-short} yields, on $\R^d\setminus\{0\}$,
\[
(I-\ell^2\Delta)\eta_{\nu,\ell}=(d+2\nu)\kappa_{\nu,\ell}.
\]
It remains to verify the regularity assumptions and to extend the identities to the origin. The small- and large-argument asymptotics of $K_\mu$ \cite[\S\S10.30, 10.40]{dlmf} show that
$\phi$ and $\psi$ have finite positive limits at $0$ and decay exponentially at infinity. After continuous extension at the origin, we therefore have $\kappa_{\nu,\ell},\eta_{\nu,\ell}\in C_0\cap L^1\cap L^\infty$, both everywhere positive.
Moreover,
\[
\phi'(r)=-r^\nu K_{\nu-1}(r/\ell),
\]
and the same asymptotics give
\[
\int_0^\infty |\phi'(r)|r^{d-1}\,dr<\infty,
\qquad
\int_0^\infty |\phi(r)|r^{d-1}\,dr<\infty.
\]
By \Cref{lem:radial-profile-W11}, $\kappa_{\nu,\ell}\in W^{1,1}(\R^d)$.

As for $\eta_{\nu,\ell}$, \eqref{eq:matern-eta-prime-short} gives
\[
\frac{\psi'(r)}{r}=-\frac{\phi(r)}{\ell^2},
\qquad
\psi''(r)=-\frac{\phi(r)+r\phi'(r)}{\ell^2}.
\]
Since $\phi(r)\to\phi(0)$ and $r\phi'(r)\to0$, and both terms decay exponentially at infinity, $\psi'(r)/r$ and $\psi''(r)$ are bounded and have the common limit
$-\phi(0)/\ell^2$ as $r\downarrow0$. The radial Hessian formula
\[
\partial_{ij}\eta_{\nu,\ell}(z)
=\psi''(r)\frac{z_i z_j}{r^2}
+\frac{\psi'(r)}{r}\left(\delta_{ij}-\frac{z_i z_j}{r^2}\right)
\]
therefore extends continuously to $z=0$ with value
$-\phi(0)\delta_{ij}/\ell^2$. Moreover, $|\psi'(r)|\le Cr$ near the origin, so
$\nabla\eta_{\nu,\ell}(0)=0$. Hence
$\eta_{\nu,\ell}\in C^2(\R^d)$ and $\nabla^2\eta_{\nu,\ell}\in L^\infty$.
Finally,
$\nabla\eta_{\nu,\ell}(z)=-(z/\ell^2)\kappa_{\nu,\ell}(z)$ is continuous, integrable, and bounded, and it vanishes at infinity. The two companion identities extend to $z=0$ by continuity, so all conditions of \Cref{ass:companion-elliptic} hold with the constants specified above.
\end{proof}

\noindent It is natural to ask whether replacing the elliptic operator in the companion identity by one of higher order could produce new kernel families. The next theorem shows that even under a positive finite-order relation, only an affine operator is possible.

\begin{theorem}[Order constraint on the companion relation]
\label{thm:finite-order-closure-sharpness}
Let $\kappa:\R^d\to(0,\infty)$ and a real-valued function $\eta$ satisfy
\[
\kappa\in C_0(\R^d)\cap L^1(\R^d)\cap L^\infty(\R^d),
\qquad
\eta\in W^{1,1}(\R^d).
\]
Assume there are $c_1,c_2>0$ and a real polynomial $P\in\R[s]$ with
\[
P(s)>0\qquad(s\ge0),
\]
such that
\begin{equation}\label{eq:finite-order-companion-closure}
\nabla\eta(z)=-\frac{z}{c_1}\kappa(z)
\quad\text{a.e.},
\qquad
P(-\Delta)\eta=c_2\kappa
\quad\text{in }\mathcal D'(\R^d).
\end{equation}
Then $\deg P\le1$. More precisely, only the following two cases can occur.
\begin{enumerate}[label=(\roman*)]
\item $P(s)=\lambda_0$ with $\lambda_0>0$, and $\kappa$ is a positive scalar multiple of a Gaussian kernel.
\item $P(s)=\lambda_0+\lambda_1s$ with $\lambda_0,\lambda_1>0$, and $\kappa$ is a positive scalar multiple of the Mat\'ern kernel $\kappa_{\nu,\ell}$ with
\[
\ell^2=\frac{\lambda_1}{\lambda_0},
\qquad
\nu=\frac{c_1c_2}{2\lambda_1}-\frac d2>0.
\]
\end{enumerate}
\end{theorem}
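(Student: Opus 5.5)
Mirroring the proof of \Cref{lem:ce-spectral-law}, I will pass to the Fourier side, where the two identities in \eqref{eq:finite-order-companion-closure} become an ODE for $\widehat\kappa$. Since $\eta\in W^{1,1}$ and $\nabla\eta=-(z/c_1)\kappa$, we get $z\kappa\in L^1$, so $\widehat\kappa\in C^1$ and
\[
i\xi\widehat\eta(\xi)=-\frac{i}{c_1}\nabla\widehat\kappa(\xi).
\]
The second identity holds in $\mathcal D'$. Both $\eta$ and $\kappa$ are in $L^1$, hence tempered, so it transforms to $P(\norm\xi^2)\widehat\eta=c_2\widehat\kappa$ as continuous functions. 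Because $P>0$ on $[0,\infty)$, we can eliminate $\widehat\eta$:
\[
\nabla\widehat\kappa(\xi)=-\frac{c_1c_2}{P(\norm\xi^2)}\,\xi\,\widehat\kappa(\xi).
\]
Integrating along rays as in \Cref{lem:ce-spectral-law} gives $\widehat\kappa(\xi)=\widehat\kappa(0)\exp(-G(\norm\xi^2))$, with
\[
G(t)=\frac{c_1c_2}{2}\int_0^t\frac{ds}{P(s)},\qquad \widehat\kappa(0)=\int\kappa>0.
\]
So $\widehat\kappa$ is radial, positive, and determined entirely by $P$.

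The key step is to exclude $\deg P\ge2$. If $\deg P\ge2$, then $1/P$ is integrable on $[0,\infty)$, so $G$ is bounded and $\widehat\kappa(\xi)\to\widehat\kappa(0)e^{-G(\infty)}>0$ as $\norm\xi\to\infty$. This contradicts the Riemann--Lebesgue lemma (F3), since $\kappa\in L^1$. Hence $\deg P\le1$. Positivity on $[0,\infty)$ forces $\lambda_0=P(0)>0$ and $\lambda_1\ge0$ when $P(s)=\lambda_0+\lambda_1 s$.

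The two cases then follow from the explicit formulas. If $\lambda_1=0$, then $\widehat\kappa$ is a Gaussian, and Fourier uniqueness (F3) makes $\kappa$ a positive multiple of a Gaussian. If $\lambda_1>0$, then
\[
\widehat\kappa(\xi)=\widehat\kappa(0)\Bigl(1+\ell^2\norm\xi^2\Bigr)^{-c_1c_2/(2\lambda_1)},\qquad \ell^2=\lambda_1/\lambda_0.
\]
For the Mat\'ern kernel, \Cref{prop:matern-companion-elliptic} together with \Cref{lem:ce-spectral-law} (constants $c_1=\ell^2$, $c_2=d+2\nu$, $\lambda_0=1$, $\lambda_1=\ell^2$) gives $\widehat\kappa_{\nu,\ell}\propto(1+\ell^2\norm\xi^2)^{-(\nu+d/2)}$. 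Matching exponents yields $\nu=c_1c_2/(2\lambda_1)-d/2$.

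It remains to show $\nu>0$, which I expect to be the main obstacle. The Fourier side alone forces only $\nu+d/2>0$, yet $(1+\ell^2\norm\xi^2)^{-\alpha}$ with $0<\alpha\le d/2$ is the transform of a Bessel-potential kernel that is integrable but unbounded at the origin. This would contradict $\kappa\in C_0\cap L^\infty$. I would argue as follows. If $\alpha>d/2$, the transform is in $L^1$, so $\kappa$ is the continuous inverse transform, and by uniqueness and continuity $\kappa$ equals $\widehat\kappa(0)/\widehat\kappa_{\nu,\ell}(0)\cdot\kappa_{\nu,\ell}$ everywhere. If $\alpha\le d/2$, then $\widehat\kappa\ge0$ and $\widehat\kappa\notin L^1$, and this contradicts boundedness of $\kappa$: for a bounded continuous integrable $\kappa$ with nonnegative transform, pairing $\kappa$ with a Gaussian approximate identity and letting the width go to zero gives $\int\widehat\kappa\le(2\pi)^d\kappa(0)<\infty$ by monotone convergence. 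This rules out $\nu\le0$ and completes the proof.
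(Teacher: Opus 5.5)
Your proposal is correct and follows the paper's proof essentially step for step: the Fourier-side ODE, integration along rays, the Riemann--Lebesgue lemma to force $\deg P\le1$, and a Gaussian approximate identity at the origin to force $c_1c_2/(2\lambda_1)>d/2$. The only variation is in the final identification: you obtain the Mat\'ern spectrum by applying \Cref{lem:ce-spectral-law} to $\kappa_{\nu,\ell}$ through \Cref{prop:matern-companion-elliptic} (with $c_1=\lambda_1=\ell^2$, $c_2=d+2\nu$, $\lambda_0=1$), where the paper cites the Bessel-potential inversion formula; your route is valid and keeps the argument self-contained.
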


\begin{proof}
The first identity in \eqref{eq:finite-order-companion-closure} together with $\eta\in W^{1,1}$ gives
\[
z\kappa(z)=-c_1\nabla\eta(z)
\in L^1(\R^d;\R^d).
\]
Hence $\widehat\kappa\in C^1(\R^d)$, and taking Fourier transforms,
\[
i\xi\widehat\eta(\xi)
=-\frac{i}{c_1}\nabla\widehat\kappa(\xi),
\qquad
P(\norm\xi^2)\widehat\eta(\xi)
=c_2\widehat\kappa(\xi).
\]
Consider the second identity. Since $\eta,\kappa\in L^1$, both sides of \eqref{eq:finite-order-companion-closure} are tempered distributions, and since $C_c^\infty(\R^d)$ is dense in $\mathcal S(\R^d)$, the identity holds in $\mathcal S'(\R^d)$. After the Fourier transform, both sides are continuous functions, so the second identity may be read pointwise.
Since $P>0$, eliminating $\widehat\eta$ yields
\begin{equation}\label{eq:finite-order-spectral-ode}
\nabla\widehat\kappa(\xi)
=-\frac{c_1c_2}{P(\norm\xi^2)}\,
\xi\widehat\kappa(\xi).
\end{equation}

For $\omega\in\mathbb S^{d-1}$, set $g_\omega(r):=\widehat\kappa(r\omega)$.
Since $\kappa>0$ and $\kappa\in L^1$,
$g_\omega(0)=\widehat\kappa(0)=\int\kappa>0$.
Solving \eqref{eq:finite-order-spectral-ode} along rays gives
\begin{equation}\label{eq:finite-order-spectral-solution}
g_\omega(r)
=\widehat\kappa(0)
\exp\left(
-\frac{c_1c_2}{2}
\int_0^{r^2}\frac{ds}{P(s)}
\right),
\qquad r\ge0.
\end{equation}
In particular, $\widehat\kappa$ is radial and everywhere positive.
On the other hand, $\kappa\in L^1$ implies $g_\omega(r)\to0$ as $r\to\infty$ by the Riemann--Lebesgue lemma.
Hence
\begin{equation}\label{eq:reciprocal-polynomial-divergence}
\int_0^\infty\frac{ds}{P(s)}=\infty
\end{equation}
is necessary.

Let $N:=\deg P$.
Since $P$ is positive on $[0,\infty)$, its leading coefficient is positive, and
$P(s)\sim a_Ns^N$ as $s\to\infty$.
If $N\ge2$, then $1/P(s)=O(s^{-N})$, contradicting \eqref{eq:reciprocal-polynomial-divergence}.
Hence $N\le1$.

If $N=0$, then $P(s)=\lambda_0>0$ and \eqref{eq:finite-order-spectral-solution} gives
\[
\widehat\kappa(\xi)
=\widehat\kappa(0)
\exp\left(-\frac{c_1c_2}{2\lambda_0}\norm\xi^2\right).
\]
By Fourier uniqueness, $\kappa$ is a positive scalar multiple of a Gaussian kernel.

If $N=1$, then
$P(s)=\lambda_0+\lambda_1s$ with $\lambda_0,\lambda_1>0$, and
\begin{equation}\label{eq:finite-order-matern-spectrum}
\widehat\kappa(\xi)
=\widehat\kappa(0)
\left(1+\ell^2\norm\xi^2\right)^{-\beta},
\qquad
\ell^2:=\frac{\lambda_1}{\lambda_0},
\qquad
\beta:=\frac{c_1c_2}{2\lambda_1}.
\end{equation}
Necessarily $\beta>d/2$.
Indeed, let $\varphi_\varepsilon$ be a Gaussian approximate identity; then
$|\kappa*\varphi_\varepsilon(0)|\le\|\kappa\|_{L^\infty}$.
On the other hand, Fourier inversion for rapidly decaying products together with \eqref{eq:finite-order-matern-spectrum} gives, for some $C_F>0$,
\[
\kappa*\varphi_\varepsilon(0)
=C_F\widehat\kappa(0)
\int_{\R^d}
(1+\ell^2\norm\xi^2)^{-\beta}
 e^{-\varepsilon^2\norm\xi^2/2}\,d\xi.
\]
If $\beta\le d/2$, the integral on the right is bounded below by
$C\int_1^{1/\varepsilon}r^{d-1-2\beta}\,dr$, which diverges as $\varepsilon\downarrow0$.
This contradicts the $L^\infty$ bound above, so $\beta>d/2$.
Hence $\nu:=\beta-d/2>0$, and by the Bessel-potential inversion formula \cite[Theorem~6.13]{wendland2005scattered} and Fourier uniqueness, $\kappa$ is a positive scalar multiple of $\kappa_{\nu,\ell}$. Finally, that both cases are realized follows from the preceding results.
\end{proof}

\noindent Combining the theorem above with the proposition on the Mat\'ern kernel determines the scalar companion-elliptic class completely.

\begin{corollary}[Classification of the companion-elliptic class]\label{cor:ce-classification}
Up to positive scalar multiples, every companion-elliptic kernel is Gaussian or Mat\'ern.
More precisely, the following hold.
\begin{enumerate}[label=(\roman*)]
\item If $\lambda_1=0$, then $\kappa$ is a Gaussian kernel.
\item If $\lambda_1>0$, then $\kappa$ is a Mat\'ern kernel $\kappa_{\nu,\ell}$ for some $\ell>0$ and $\nu>0$.
\end{enumerate}
In particular, every companion-elliptic kernel is radial.
\end{corollary}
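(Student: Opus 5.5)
The plan is to reduce the corollary to \Cref{thm:finite-order-closure-sharpness} with the affine polynomial $P(s)=\lambda_0+\lambda_1 s$. Let $\kappa$ satisfy \Cref{ass:companion-elliptic}, with companion $\eta$ and constants $c_1,c_2,\lambda_0>0$, $\lambda_1\ge0$. First I will check that the hypotheses of that theorem hold. By \ref{ass:ce-kappa}, $\kappa\in C_0\cap L^1\cap L^\infty$ and $\kappa>0$. By \ref{ass:ce-eta}, $\eta,\nabla\eta\in L^1$ and $\eta\in C^2$, so $\eta\in W^{1,1}(\R^d)$ with classical gradient equal to the weak gradient. The first identity in \eqref{eq:ce-companion} then holds everywhere, hence a.e. The polynomial $P(s)=\lambda_0+\lambda_1 s$ is real and satisfies $P(s)\ge\lambda_0>0$ for $s\ge0$.

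The one point requiring care is that the second identity, $(\lambda_0 I-\lambda_1\Delta)\eta=c_2\kappa$, holds pointwise for the $C^2$ function $\eta$, whereas the theorem asks for it in $\mathcal D'(\R^d)$. For $C^2$ functions the classical Laplacian coincides with the distributional one: integrate by parts twice against a test function, with no boundary terms because the test function has compact support. So $P(-\Delta)\eta=c_2\kappa$ in $\mathcal D'$.

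Once the hypotheses are in place, \Cref{thm:finite-order-closure-sharpness} applies and gives the dichotomy directly.
\begin{enumerate}[label=(\roman*)]
\item If $\lambda_1=0$, then $\deg P=0$ and $\kappa$ is a positive multiple of a Gaussian kernel.
\item If $\lambda_1>0$, then $\kappa$ is a positive multiple of $\kappa_{\nu,\ell}$ with $\ell^2=\lambda_1/\lambda_0$ and $\nu=c_1c_2/(2\lambda_1)-d/2>0$.
\end{enumerate}
Up to the harmless normalization by a positive scalar, these are exactly items (i) and (ii) of the corollary. As a cross-check, the explicit forms \eqref{eq:ce-spectral-gaussian}--\eqref{eq:ce-spectral-matern} in \Cref{lem:ce-spectral-law} give the same spectra. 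Radiality follows immediately, since both Gaussian and Mat\'ern kernels are radial; equivalently, \Cref{lem:ce-spectral-law} already shows $\widehat\kappa$ is radial, and Fourier uniqueness transfers this to $\kappa$.

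I expect no real obstacle, since all of the work was done in \Cref{thm:finite-order-closure-sharpness}. The only step needing a line of justification is the passage from the pointwise $C^2$ identity to the distributional one, together with the observation that \ref{ass:ce-eta} places $\eta$ in $W^{1,1}$.
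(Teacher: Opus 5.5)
Your proposal is correct and matches the paper's proof, which likewise applies \Cref{thm:finite-order-closure-sharpness} with $P(s)=\lambda_0+\lambda_1 s$. Your checks that $\eta\in W^{1,1}$ and that the pointwise $C^2$ identity holds in $\mathcal D'$ are the right justifications. The paper also records the converse inclusion: Gaussian and Mat\'ern kernels are companion-elliptic, by the Gaussian identity and \Cref{prop:matern-companion-elliptic}. The one-directional statement as written does not strictly require it.
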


\begin{proof}
Applying \Cref{thm:finite-order-closure-sharpness} with $P(s)=\lambda_0+\lambda_1s$ shows that every companion-elliptic kernel is a positive scalar multiple of a Gaussian or Mat\'ern kernel. The converse inclusion follows from the Gaussian identity and \Cref{prop:matern-companion-elliptic}.
\end{proof}

\begin{remark}[Why the Wronskian argument requires a second-order operator]
\label{rem:wronskian-closure-sharpness}
\Cref{thm:finite-order-closure-sharpness} says that passing to a higher-order companion relation does not enlarge the family of admissible kernels.
Independently of this, one can observe that the Wronskian step at the heart of the proof of \Cref{thm:companion-elliptic} is likewise tied exactly to operators of order two.
Indeed, suppose some kernel-companion pair satisfies \eqref{eq:finite-order-companion-closure} and that the analogue of \Cref{lem:companion-elliptic-identities} holds.
If $V_{p,q}^\kappa\equiv0$, then $a_p=a_q=:a$, and, with $b:=(a-x)/(c_1c_2)$, the two identities give
\[
\nabla\Psi_r=bP(-\Delta)\Psi_r,
\qquad r=p,q.
\]
Hence
\[
J:=\Psi_q\nabla\Psi_p-\Psi_p\nabla\Psi_q
=
b\bigl[
\Psi_qP(-\Delta)\Psi_p-\Psi_pP(-\Delta)\Psi_q
\bigr].
\]
Now if $P(s)=\lambda_0+\lambda_1s$, then
\[
\Psi_qP(-\Delta)\Psi_p-\Psi_pP(-\Delta)\Psi_q
=-\lambda_1\nabla\cdot J,
\]
so $J=-\lambda_1b\,\nabla\cdot J$.
Consequently, $s:=\nabla\cdot J$ satisfies
\[
s+\lambda_1\nabla\cdot(bs)=0,
\]
which is precisely the stationary continuity equation handed to the renormalization argument in \Cref{thm:companion-elliptic}. The crux of this reduction is that the bracket term $\Psi_qP(-\Delta)\Psi_p-\Psi_pP(-\Delta)\Psi_q$ can be expressed exactly as a constant multiple of the divergence of the Wronskian current $J$, and this is possible only for affine $P$.
Indeed, suppose that for some constant $\gamma$ every pair of smooth functions $\phi,\psi$ satisfies
\[
\psi P(-\Delta)\phi-\phi P(-\Delta)\psi
=\gamma\nabla\cdot(\psi\nabla\phi-\phi\nabla\psi).
\]
Substituting $\psi\equiv1$ and $\phi(x)=\cos\langle\xi,x\rangle$ gives
\[
P(\norm\xi^2)-P(0)=-\gamma\norm\xi^2,
\]
hence $P(s)=P(0)-\gamma s$.
In other words, the argument that converts the first-order relation supplied by the drift into a continuity equation via the Wronskian current works only when $P$ is affine.
\end{remark}

\subsection{Sharp identifiability condition for deconvolution}
\label{subsec:identifiability-sharpness}

\Cref{thm:finite-order-closure-sharpness,rem:wronskian-closure-sharpness} show that the structure used for the first implication is highly restrictive. The second implication is independent of the companion-elliptic identities and rests on whether the convolution operator can distinguish finite measures. The next theorem gives the sharp Fourier condition for this.

\begin{theorem}[Sharp identifiability condition for deconvolution]
\label{prop:identifiability-spectral-sharpness}
Let $\kappa:\R^d\to(0,\infty)$ satisfy
\[
\kappa\in C_0(\R^d)\cap L^1(\R^d),
\qquad
z\mapsto z\kappa(z)
\in C_0(\R^d;\R^d)\cap L^1(\R^d;\R^d),
\]
and set
\[
Z_\kappa:=\{\xi\in\R^d:\widehat\kappa(\xi)=0\}.
\]
Then the following hold.
\begin{enumerate}[label=(\roman*)]
\item\label{item:spectral-sharpness-injective}
If $\operatorname{int}Z_\kappa=\varnothing$, then $\kappa$-convolution is injective on finite signed Borel measures. That is, for every finite signed Borel measure $\mu$,
\[
\kappa*\mu=0
\quad\Longrightarrow\quad
\mu=0.
\]
In particular, for any $p,q\in\mathcal P(\R^d)$, $u_p^\kappa=u_q^\kappa$ implies $p=q$.
\item\label{item:spectral-sharpness-counterexample}
If $\operatorname{int}Z_\kappa\ne\varnothing$, then there exist distinct $p,q\in\mathcal P(\R^d)$ with bounded, everywhere positive $C^\infty$ densities such that
\[
u_p^\kappa=u_q^\kappa,
\qquad
w_p^\kappa=w_q^\kappa
\qquad\text{on }\R^d.
\]
In particular, $V_{p,q}^\kappa\equiv0$ while $p\ne q$.
\end{enumerate}
\end{theorem}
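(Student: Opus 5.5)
For part (i), I will argue on the Fourier side. Since $\kappa\in L^1$ and $\mu$ is a finite signed measure, $\kappa*\mu\in L^1$ and $\widehat{\kappa*\mu}=\widehat\kappa\,\widehat\mu$ pointwise. Hence $\kappa*\mu=0$ forces $\widehat\kappa(\xi)\widehat\mu(\xi)=0$ for every $\xi$, so $\widehat\mu=0$ on the open set $\R^d\setminus Z_\kappa$. Because $\operatorname{int}Z_\kappa=\varnothing$, this set is dense. Both $\widehat\mu$ and $\widehat\kappa$ are continuous, so $\widehat\mu\equiv0$, and Fourier uniqueness (F3) gives $\mu=0$. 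Applying this to $\mu=p-q$ yields the statement about $u_p=u_q$.

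For part (ii), I will construct the measures as a perturbation. Choose a closed ball $\overline B(\xi_0,2\rho)\subset\operatorname{int}Z_\kappa$. Because $\widehat\kappa(-\xi)=\overline{\widehat\kappa(\xi)}$, the reflected ball also lies in $Z_\kappa$. Take a real even $\chi\in C_c^\infty(B(0,\rho))$ and set
\[
\widehat h(\xi):=\chi(\xi-\xi_0)+\chi(\xi+\xi_0).
\]
Then $h:=\mathcal F^{-1}\widehat h$ is a real Schwartz function, $h=2\cos\langle\xi_0,x\rangle\,\mathcal F^{-1}\chi$, $\int h=\widehat h(0)=0$ as long as $\rho<\|\xi_0\|$, and $h\not\equiv0$. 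Next fix a positive Schwartz density $g$, for instance a Gaussian, with $\widehat g$ smooth. Define
\[
p:=g+\varepsilon h,\qquad q:=g-\varepsilon h,
\]
with $\varepsilon>0$ small.

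The main obstacle is positivity. A Gaussian decays faster than a generic Schwartz $h$, so $g\pm\varepsilon h>0$ may fail in the tails. I would fix this by taking $g(x)=c(1+\|x\|^2)^{-d}$ or a similar polynomially decaying positive density. Then $|h|\le C_N(1+\|x\|^2)^{-N}$ for $N\ge d$, so $|h|\le Cg$ and $g\pm\varepsilon h\ge(1-\varepsilon C)g>0$ for small $\varepsilon$. This $g$ is bounded and $C^\infty$, though not Schwartz, and that is harmless. Both densities then integrate to $1$, are bounded, smooth and everywhere positive, and they are distinct.

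Finally I will check that the two fields agree. We have $u_p-u_q=2\varepsilon\,\kappa*h$, whose Fourier transform is $2\varepsilon\widehat\kappa\widehat h=0$, since $\widehat h$ is supported in $Z_\kappa$. For the moment, $w_p-w_q=2\varepsilon\,(-z\kappa)*h$ by the definition $w_r(x)=\int(y-x)\kappa(x-y)\,r(dy)$. Its Fourier transform is a constant multiple of $\nabla\widehat\kappa\cdot\widehat h$; this uses $z\kappa\in L^1$, so $\widehat\kappa\in C^1$. On the open set $\operatorname{int}Z_\kappa$ the function $\widehat\kappa$ vanishes identically, so $\nabla\widehat\kappa=0$ there, and this product vanishes too. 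Fourier uniqueness for the $L^1$ functions $\kappa*h$ and $(z\kappa)*h$ then gives $u_p=u_q$ and $w_p=w_q$. Moreover $u_p>0$ because $\kappa>0$. The integrability of $(y-x)\kappa(x-y)$ holds because $z\kappa$ is bounded, so $D_p=D_q=\R^d$. Therefore $a_p=a_q$, and $V_{p,q}^\kappa\equiv0$ while $p\ne q$.
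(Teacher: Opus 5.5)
Your proposal is correct and follows the paper's proof essentially step for step. Part (i) is the same Fourier density argument. In part (ii) you use the same construction: a real, mean-zero Schwartz perturbation with Fourier support in $\operatorname{int}Z_\kappa$ (you symmetrize through $\pm\xi_0$ with an even $\chi$, where the paper uses $\psi(\xi)+\psi(-\xi)$ on a ball disjoint from its reflection), added to a polynomially decaying density, and both $\widehat\kappa$ and $\nabla\widehat\kappa$ vanish on that support.

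Two small points are left implicit but are routine. First, $\rho<\|\xi_0\|$ holds automatically, because $0\notin Z_\kappa$ since $\widehat\kappa(0)=\int\kappa>0$. Second, Fourier uniqueness gives the identities only a.e.; they hold at every $x$ because $\kappa*h$ and $(z\kappa)*h$ are continuous.
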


\begin{proof}
First, let $\mu$ be a finite signed Borel measure with $\kappa*\mu=0$.
Taking Fourier transforms gives
\[
\widehat\kappa(\xi)\widehat\mu(\xi)=0
\qquad(\xi\in\R^d).
\]
Hence $\widehat\mu=0$ on $\R^d\setminus Z_\kappa$.
If $\operatorname{int}Z_\kappa=\varnothing$, this complement is dense, and since the Fourier transform of a finite signed measure is continuous, $\widehat\mu\equiv0$.
By Fourier uniqueness, $\mu=0$. In particular, taking $\mu=p-q$, $u_p^\kappa=u_q^\kappa$ implies $p=q$.

Now suppose $\operatorname{int}Z_\kappa\ne\varnothing$.
Since $\kappa>0$ and $\kappa\in L^1$,
$\widehat\kappa(0)=\int\kappa>0$.
Moreover, since $\kappa$ is real-valued, $Z_\kappa$ is symmetric about the origin.
We may therefore choose an open ball $B$ with
\[
\overline B\subset\operatorname{int}Z_\kappa,
\qquad
B\cap(-B)=\varnothing.
\]
Pick a nonzero real function $\psi\in C_c^\infty(B)$ and set
\[
\varphi(\xi):=\psi(\xi)+\psi(-\xi),
\qquad
g:=\mathcal F^{-1}\varphi.
\]
Then $\varphi$ is a real-valued even function with
\[
0\ne\varphi\in C_c^\infty(\operatorname{int}Z_\kappa),
\qquad
\varphi(0)=0.
\]
Hence $g$ is a real-valued nonzero Schwartz function with $\int g=0$.

Fix $N>d/2$ and set
\[
h(x):=c_N(1+\norm x^2)^{-N},
\qquad
\int_{\R^d}h(x)\,dx=1.
\]
Since $g$ is a Schwartz function,
\[
M:=\sup_{x\in\R^d}\frac{|g(x)|}{h(x)}\in(0,\infty).
\]
Choose $0<\varepsilon<(2M)^{-1}$ and define
\[
p(dx):=(h(x)+\varepsilon g(x))\,dx,
\qquad
q(dx):=(h(x)-\varepsilon g(x))\,dx.
\]
Both densities dominate $h/2$, are smooth and bounded, and integrate to one.
Moreover, $g\ne0$ implies $p\ne q$.

On the other hand, since $z\kappa(z)\in L^1$, we have $\widehat\kappa\in C^1$ and, for each coordinate $j$,
\[
\widehat{z_j\kappa}(\xi)=i\,\partial_{\xi_j}\widehat\kappa(\xi).
\]
Since $\operatorname{supp}\varphi$ lies inside an open set on which $\widehat\kappa$ vanishes identically,
\[
\widehat\kappa\,\varphi=0,
\qquad
(\partial_{\xi_j}\widehat\kappa)\varphi=0
\quad(j=1,\ldots,d).
\]
Fourier uniqueness therefore gives
\[
\kappa*g=0,
\qquad
(z\kappa)*g=0.
\]
These convolutions are continuous, so the equalities hold at every $x\in\R^d$.
Consequently,
\[
u_p^\kappa-u_q^\kappa=2\varepsilon\,\kappa*g=0,
\]
and using the change of variable $z=x-y$,
\[
w_p^\kappa-w_q^\kappa
=-2\varepsilon\,(z\kappa)*g=0.
\]
By assumption, $\kappa$ and $z\kappa$ are bounded, so $D_p^\kappa=D_q^\kappa=\R^d$, and
$u_p^\kappa=u_q^\kappa>0$ together with $w_p^\kappa=w_q^\kappa$ yields
$a_p^\kappa=a_q^\kappa$.
Hence $V_{p,q}^\kappa\equiv0$.
\end{proof}

The next example shows that the Fourier transform of a kernel can vanish on an open set even when the kernel itself is everywhere positive.

\begin{example}[A positive kernel whose Fourier transform vanishes on an open set]
\label{ex:open-spectral-gap-kernel}
Let $\operatorname{sinc}(0)=1$ and
\[
\operatorname{sinc}(t)=\frac{\sin t}{t},
\qquad t\neq 0.
\]
First, in $d=1$, define
\[
\kappa_{\mathrm{gap}}(x)
:=
\operatorname{sinc}^4(x)
+
\operatorname{sinc}^4(\sqrt{2}\,x).
\]
This kernel is positive at every $x\in\R$. Since
$\operatorname{sinc}(x)=O(|x|^{-1})$ as $|x|\to\infty$,
\[
\kappa_{\mathrm{gap}}(x)=O(|x|^{-4}).
\]
Hence
\[
\kappa_{\mathrm{gap}},
\quad
x\kappa_{\mathrm{gap}}
\in C_0(\R)\cap L^1(\R).
\]

We now examine the Fourier transform. $\widehat{\operatorname{sinc}}$ is
a positive constant multiple of the indicator function of the interval $[-1,1]$,
understood in the sense of tempered distributions
\cite{folland1999real,grafakos2014classical}.
Since multiplication of functions corresponds to convolution on the Fourier side,
$\widehat{\operatorname{sinc}^4}$ is
a positive constant multiple of the convolution of four interval indicators.
It is therefore nonnegative with support contained in $[-4,4]$.
For the same reason,
$\widehat{\operatorname{sinc}^4(\sqrt{2}\,\cdot)}$ is nonnegative with
support contained in $[-4\sqrt{2},4\sqrt{2}]$.
Consequently,
\[
\widehat{\kappa_{\mathrm{gap}}}(\xi)\geq 0,
\qquad
\operatorname{supp}\widehat{\kappa_{\mathrm{gap}}}
\subseteq[-4\sqrt{2},4\sqrt{2}].
\]
In particular, $\kappa_{\mathrm{gap}}$ is positive definite \cite{rudin1990fourier}, and
\[
\widehat{\kappa_{\mathrm{gap}}}(\xi)=0,
\qquad |\xi|>4\sqrt{2}.
\]
Hence $Z_{\kappa_{\mathrm{gap}}}$ contains a nonempty open set.

Consequently, by \Cref{prop:identifiability-spectral-sharpness}, there exist
distinct smooth positive densities $p$ and $q$ with
\[
V_{p,q}^{\kappa_{\mathrm{gap}}}\equiv 0.
\]
In dimension $d\geq2$, it suffices to take
\[
\kappa_{\mathrm{gap}}^{(d)}(x)
:=
\kappa_{\mathrm{gap}}(x_1)
\exp\left(-\sum_{j=2}^d x_j^2\right).
\]
This kernel is everywhere positive, and
its Fourier transform is the product of
$\widehat{\kappa_{\mathrm{gap}}}(\xi_1)$ and
a Gaussian Fourier multiplier in the remaining variables.
Hence
\[
\widehat{\kappa_{\mathrm{gap}}^{(d)}}(\xi)=0
\qquad\text{whenever}\qquad
|\xi_1|>4\sqrt{2},
\]
so $Z_{\kappa_{\mathrm{gap}}^{(d)}}$ contains an open set in higher dimensions as well,
and the same non-identifiability conclusion follows.
\end{example}

\subsection{Identifiability in the block-separable companion-elliptic class}
\label{subsec:block-separable-ce-identifiability}

\noindent Finally, we return to the companion structure of the first stage and treat its extension. The scalar classification stems from the requirement that a single potential recover the centered first moment in every direction. Assigning a separate companion potential to each block extends the same reasoning to product kernels. Since the Fourier multiplier of each factor is everywhere positive, the multiplier of the product kernel is everywhere positive as well; in particular, the sharp identifiability condition $\operatorname{int}Z_\kappa=\varnothing$ of \Cref{prop:identifiability-spectral-sharpness} holds.

\begin{assumption}[Block-separable companion-elliptic kernel class]\label{ass:block-separable-ce}
Let $\kappa:\R^d\to(0,\infty)$ be a kernel. We assume the following.
\begin{enumerate}[label=(BCE\arabic*)]
\item\label{ass:bce-decomposition}
There exist an integer $M\ge1$ and integers $d_1,\ldots,d_M\ge1$ with $d_1+\cdots+d_M=d$, and the ambient space is identified as
\[
\R^d=\R^{d_1}\times\cdots\times\R^{d_M},
\qquad
z=(z^{(1)},\ldots,z^{(M)}),\quad z^{(\alpha)}\in\R^{d_\alpha}.
\]
Here $\nabla_\alpha$ and $\Delta_\alpha$ denote the gradient and Laplacian with respect to the $\alpha$-th block variable $z^{(\alpha)}$.

\item\label{ass:bce-factor-ce}
For each $\alpha=1,\ldots,M$ there exists a factor kernel
$\kappa_\alpha:\R^{d_\alpha}\to(0,\infty)$ satisfying on $\R^{d_\alpha}$ the same conditions as in \Cref{ass:companion-elliptic}.
Its companion and constants are denoted by
\[
\eta_\alpha,
\qquad
c_{1,\alpha},c_{2,\alpha},\lambda_{0,\alpha}>0,
\qquad
\lambda_{1,\alpha}\ge0.
\]
Thus, for all $z^{(\alpha)}\in\R^{d_\alpha}$,
\begin{equation}\label{eq:block-factor-companion}
\begin{aligned}
\nabla_\alpha\eta_\alpha(z^{(\alpha)})
&=-\frac{z^{(\alpha)}}{c_{1,\alpha}}\kappa_\alpha(z^{(\alpha)}),\\
(\lambda_{0,\alpha}I-\lambda_{1,\alpha}\Delta_\alpha)
\eta_\alpha(z^{(\alpha)})
&=c_{2,\alpha}\kappa_\alpha(z^{(\alpha)}).
\end{aligned}
\end{equation}

\item\label{ass:bce-product}
The full kernel $\kappa$ is the product of the factors:
\begin{equation}\label{eq:block-product-kernel}
\kappa(z):=\prod_{\alpha=1}^M\kappa_\alpha(z^{(\alpha)}).
\end{equation}
Moreover, for each block $\alpha$, the full-space companion is defined by
\begin{equation}\label{eq:block-companion-def}
\eta^{[\alpha]}(z)
:=\eta_\alpha(z^{(\alpha)})
\prod_{\substack{\beta=1\\ \beta\ne\alpha}}^M
\kappa_\beta(z^{(\beta)}).
\end{equation}
\end{enumerate}
A kernel satisfying this assumption is called a block-separable companion-elliptic kernel.
\end{assumption}

The essential point of this assumption is that $\eta^{[\alpha]}$ is a function on the full space, using the companion $\eta_\alpha$ in the $\alpha$-th block and the original kernel factors in the remaining blocks. At the kernel level, we therefore have
\begin{equation}\label{eq:block-kernel-identities}
\nabla_\alpha\eta^{[\alpha]}(z)
=-\frac{z^{(\alpha)}}{c_{1,\alpha}}\kappa(z),
\qquad
(\lambda_{0,\alpha}I-\lambda_{1,\alpha}\Delta_\alpha)
\eta^{[\alpha]}(z)
=c_{2,\alpha}\kappa(z).
\end{equation}
These are the blockwise analogues of \Cref{eq:ce-companion}. Throughout the remainder of this subsection, $\kappa$ is assumed to satisfy \Cref{ass:block-separable-ce}.

\begin{lemma}[Companion identities for the block-separable companion-elliptic class]\label{lem:block-ce-identities}
For $r\in\mathcal P(\R^d)$, set
\[
u_r^\kappa:=\kappa*r,
\qquad
\Psi_{\alpha,r}^\kappa:=\eta^{[\alpha]}*r,
\qquad
\alpha=1,\ldots,M.
\]
Then $D_r^\kappa=\R^d$, and $u_r^\kappa,m_r^\kappa,a_r^\kappa$ are continuous on $\R^d$.
Furthermore,
\[
u_r^\kappa\in C_0(\R^d)\cap L^1(\R^d)\cap L^\infty(\R^d)
\cap W^{1,1}_{\mathrm{loc}}(\R^d).
\]
For each $\alpha=1,\ldots,M$,
\[
\Psi_{\alpha,r}^\kappa,
\nabla_\alpha\Psi_{\alpha,r}^\kappa
\in C_0(\R^d)\cap L^1(\R^d)\cap L^\infty(\R^d),
\qquad
\Psi_{\alpha,r}^\kappa>0,
\]
and
\[
\nabla_\alpha\Psi_{\alpha,r}^\kappa
\in W^{1,1}_{\mathrm{loc}}(\R^d;\R^{d_\alpha}).
\]
Moreover, $\nabla_\alpha\Psi_{\alpha,r}^\kappa$ and
$\nabla_\alpha^2\Psi_{\alpha,r}^\kappa$ are continuous.
Finally, for all $x\in\R^d$ and all $\alpha=1,\ldots,M$,
\begin{align}
 m_r^{\kappa,(\alpha)}(x)-x^{(\alpha)}u_r^\kappa(x)
 &=c_{1,\alpha}\nabla_\alpha\Psi_{\alpha,r}^\kappa(x),
 \label{eq:block-grad-id}\\
 (\lambda_{0,\alpha}I-\lambda_{1,\alpha}\Delta_\alpha)
 \Psi_{\alpha,r}^\kappa(x)
 &=c_{2,\alpha}u_r^\kappa(x),
 \label{eq:block-elliptic-id}\\
 a_r^{\kappa,(\alpha)}(x)-x^{(\alpha)}
 &=c_{1,\alpha}c_{2,\alpha}
 \frac{\nabla_\alpha\Psi_{\alpha,r}^\kappa(x)}
 {(\lambda_{0,\alpha}I-\lambda_{1,\alpha}\Delta_\alpha)
 \Psi_{\alpha,r}^\kappa(x)}.
 \label{eq:block-barycenter-id}
\end{align}
Here $m_r^{\kappa,(\alpha)}$ and $a_r^{\kappa,(\alpha)}$ denote the $\alpha$-th block components of $m_r^\kappa$ and $a_r^\kappa$, respectively.
\end{lemma}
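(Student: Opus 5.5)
The proof follows that of \Cref{lem:companion-elliptic-identities}, applied factor by factor. Everything is built from the tensor structure \eqref{eq:block-product-kernel}--\eqref{eq:block-companion-def} and the kernel-level identities \eqref{eq:block-kernel-identities}.

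\emph{Step 1: integrability and regularity of the full-space kernels.} I first record what the full-space kernels inherit from their factors. Each factor $\kappa_\beta$ lies in $C_0\cap L^1\cap L^\infty\cap W^{1,1}(\R^{d_\beta})$. Each companion satisfies $\eta_\alpha,\nabla_\alpha\eta_\alpha\in C_0\cap L^1\cap L^\infty$, and $\nabla_\alpha^2\eta_\alpha$ is bounded and continuous. Products of such functions in separated variables stay in $C_0\cap L^1\cap L^\infty$, by Fubini for $L^1$ and by a direct argument for $C_0$. Hence
\[
\kappa,\ \eta^{[\alpha]},\ \nabla_\alpha\eta^{[\alpha]}\in C_0\cap L^1\cap L^\infty.
\]
The weak gradient of a tensor product is computed blockwise, by the product rule in separated variables (testing against tensor test functions and using density). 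This gives $\kappa\in W^{1,1}(\R^d)$, because $\nabla_\beta\kappa=(\nabla_\beta\kappa_\beta)\prod_{\gamma\ne\beta}\kappa_\gamma\in L^1$. For $\nabla_\alpha\eta^{[\alpha]}=-(z^{(\alpha)}/c_{1,\alpha})\kappa$, I need a local $W^{1,1}$ statement. Its $\alpha$-block derivatives are $\nabla_\alpha^2\eta_\alpha\cdot\prod_{\beta\ne\alpha}\kappa_\beta$, which is bounded and continuous. Its $\beta$-block derivatives, $\beta\ne\alpha$, are $\nabla_\alpha\eta_\alpha\otimes\nabla_\beta\kappa_\beta\cdot\prod\kappa_\gamma$. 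This is a bounded function times an $L^1$ function, so it lies in $L^1$. Therefore $\nabla_\alpha\eta^{[\alpha]}\in W^{1,1}(\R^d)$.

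\emph{Step 2: properties of the convolutions.} Convolution with $r$ preserves $C_0\cap L^1\cap L^\infty$ by (F2). This gives the stated memberships of $u_r$, $\Psi_{\alpha,r}$ and $\nabla_\alpha\Psi_{\alpha,r}$. I would also show that convolution with a probability measure preserves $W^{1,1}$, via $\|\nabla(f*r)\|_{L^1}\le\|\nabla f\|_{L^1}$ with the weak gradient commuting with convolution. This can be checked by Fubini against test functions. It yields $u_r\in W^{1,1}_{\mathrm{loc}}$ and $\nabla_\alpha\Psi_{\alpha,r}\in W^{1,1}_{\mathrm{loc}}$. Since $\eta_\alpha$ is $C^2$ with bounded first and second derivatives, differentiation under the integral in the $\alpha$-variables is justified. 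It gives $\nabla_\alpha\Psi_{\alpha,r}=(\nabla_\alpha\eta^{[\alpha]})*r$ and $\nabla_\alpha^2\Psi_{\alpha,r}=(\nabla_\alpha^2\eta^{[\alpha]})*r$, and both are continuous by dominated convergence. Positivity of $\Psi_{\alpha,r}$ follows from positivity of every factor.

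\emph{Step 3: the identities.} For the well-definedness, note that $\|z\|\kappa(z)\le\sum_\alpha\|z^{(\alpha)}\|\kappa(z)$. Each summand equals $c_{1,\alpha}\|\nabla_\alpha\eta^{[\alpha]}\|$, which is bounded, so $D_r^\kappa=\R^d$. The $\alpha$-block of $y-x$ integrated against $\kappa(x-y)$ equals $c_{1,\alpha}\nabla_{x,\alpha}\eta^{[\alpha]}(x-y)$ by \eqref{eq:block-kernel-identities}. Integrating in $r$ gives \eqref{eq:block-grad-id}. Convolving the second identity of \eqref{eq:block-kernel-identities} with $r$, which is legitimate pointwise by Step 2, gives \eqref{eq:block-elliptic-id}. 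Dividing by $u_r>0$ gives \eqref{eq:block-barycenter-id}. Finally, $m_r=xu_r+(c_{1,\alpha}\nabla_\alpha\Psi_{\alpha,r})_\alpha$ is continuous, and so $a_r=m_r/u_r$ is continuous.

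\emph{Main obstacle.} The only non-routine point is the Sobolev bookkeeping in Step 1, in particular that the mixed derivatives $\nabla_\beta\nabla_\alpha\eta^{[\alpha]}$ for $\beta\ne\alpha$ are integrable. The factor $\nabla_\alpha\eta_\alpha$ is only bounded, not $W^{1,1}$ in the other variables, so I must keep the $L^\infty\times L^1$ pairing straight. Rigorously justifying the tensor product rule for weak derivatives is where I would be most careful. I would first verify it against test functions of the form $\prod_\beta\zeta_\beta(z^{(\beta)})$ and then extend by density of their span in $C_c^\infty$.
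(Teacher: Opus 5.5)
Your overall plan matches the paper's: lift the factor properties by Fubini, convolve with $r$, differentiate under the integral, and divide by $u_r^\kappa$. However, Step 1 contains an invalid deduction, and Step 2 depends on it. You conclude $\nabla_\alpha\eta^{[\alpha]}\in W^{1,1}(\R^d)$ from the fact that its $\alpha$-block derivatives
\[
\nabla_\alpha^2\eta_\alpha(z^{(\alpha)})\prod_{\beta\ne\alpha}\kappa_\beta(z^{(\beta)})
\]
are ``bounded and continuous.'' Bounded does not imply integrable. Assumption \ref{ass:ce-eta} gives only $\nabla^2\eta_\alpha\in L^\infty(\R^{d_\alpha})$, not $L^1$. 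So this function is $L^\infty$ in $z^{(\alpha)}$ times $L^1$ in the remaining variables, which in general is not in $L^1(\R^d)$.

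As a result, the estimate $\|\nabla(f*r)\|_{L^1}\le\|\nabla f\|_{L^1}$ you invoke in Step 2 is not available for $f=\nabla_\alpha\eta^{[\alpha]}$. Falling back on $f\in W^{1,1}_{\mathrm{loc}}$ does not help, because that property does not pass to $f*r$ for an arbitrary $r\in\mathcal P(\R^d)$: $r$ can spread mass over infinitely many translates on which $\int|\nabla f|$ is unbounded. Your justification for the mixed derivatives has the same confusion. ``Bounded times $L^1$'' is not why
\[
\nabla_\alpha\eta_\alpha\otimes\nabla_\beta\kappa_\beta\prod_{\gamma\ne\alpha,\beta}\kappa_\gamma
\]
is integrable. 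It is integrable because $\nabla_\alpha\eta_\alpha\in L^1(\R^{d_\alpha})$ as well, so it is a tensor product of $L^1$ factors. The pure block Hessian is precisely the term where that reasoning breaks down, and this, not the mixed term, is the real obstacle.

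The missing ingredient is what the paper calls the one delicate point: a bound on $\int_{K-y}|D g_i|$ that is uniform in $y$.
\begin{itemize}
\item For the $\alpha$-block derivatives, $\int_{K-y}\bigl|\nabla_\alpha^2\eta_\alpha\prod_{\beta\ne\alpha}\kappa_\beta\bigr|\le\|\nabla_\alpha^2\eta_\alpha\|_{L^\infty}|K_\alpha|\prod_{\beta\ne\alpha}\|\kappa_\beta\|_{L^1}$.
\item For the cross-block derivatives, the bound is the product of the $L^1$ norms of the factors.
\end{itemize}
Fubini then gives $D_j(g_i*r)=(D_jg_i)*r\in L^1(K)$.

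You can also repair Step 2 with what you already have, by treating the two kinds of derivatives separately. The $\alpha$-block derivatives of $\nabla_\alpha\Psi_{\alpha,r}^\kappa$ are the bounded continuous functions $(\nabla_\alpha^2\eta^{[\alpha]})*r$ you obtained by classical differentiation, so they are locally integrable. The cross-block weak derivatives are $(\nabla_\beta\nabla_\alpha\eta^{[\alpha]})*r\in L^1(\R^d)$.

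Another route is to invoke \Cref{cor:ce-classification}. It shows each factor is Gaussian or Mat\'ern, so $\eta_\alpha$ has an exponentially decaying Hessian, and your global $W^{1,1}$ claim becomes true. But you did not use this, and the bounds you cite do not imply it.

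Everything else is correct and follows the paper: $D_r^\kappa=\R^d$, the $C_0\cap L^1\cap L^\infty$ memberships, $u_r^\kappa\in W^{1,1}$, positivity, the continuity of $m_r^\kappa$ and $a_r^\kappa$, and the three identities.
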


\begin{proof}
The complete proof is given in \ref{app:block-proofs}.
\end{proof}

Applying the preceding Wronskian--renormalization argument to the blockwise companion identities now yields identifiability for the full product kernel.

\begin{theorem}[Field identifiability in the block-separable companion-elliptic class]\label{thm:block-separable-ce-identifiability}
For any $p,q\in\mathcal P(\R^d)$, the field $V_{p,q}^\kappa$ is well defined on all of $\R^d$, and
\[
V_{p,q}^\kappa\equiv0
\qquad\Longrightarrow\qquad
p=q.
\]
\end{theorem}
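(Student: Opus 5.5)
We will run the two-stage scheme \eqref{eq:two-stage-identifiability} of \Cref{thm:companion-elliptic} one block at a time, using \Cref{lem:block-ce-identities} in place of \Cref{lem:companion-elliptic-identities}. Suppose $V_{p,q}^\kappa\equiv0$. Then every block component satisfies $a_p^{(\alpha)}=a_q^{(\alpha)}$. Fix a block $\alpha$ and set $b_\alpha(x):=(a^{(\alpha)}(x)-x^{(\alpha)})/(c_{1,\alpha}c_{2,\alpha})$, which maps $\R^d$ to $\R^{d_\alpha}$. By \eqref{eq:block-barycenter-id} and \eqref{eq:block-elliptic-id}, for $r=p,q$ we have
\[
\nabla_\alpha\Psi_{\alpha,r}=b_\alpha(\lambda_{0,\alpha}I-\lambda_{1,\alpha}\Delta_\alpha)\Psi_{\alpha,r}.
\]
Define the blockwise Wronskian current $J_\alpha:=\Psi_{\alpha,q}\nabla_\alpha\Psi_{\alpha,p}-\Psi_{\alpha,p}\nabla_\alpha\Psi_{\alpha,q}$.

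If $\lambda_{1,\alpha}=0$, subtracting the two relations gives $J_\alpha=0$ directly. If $\lambda_{1,\alpha}>0$, set $s_\alpha:=\nabla_\alpha\cdot J_\alpha=\Psi_{\alpha,q}\Delta_\alpha\Psi_{\alpha,p}-\Psi_{\alpha,p}\Delta_\alpha\Psi_{\alpha,q}$. Then $J_\alpha=-\lambda_{1,\alpha}b_\alpha s_\alpha$, and hence
\[
s_\alpha+\lambda_{1,\alpha}\nabla_\alpha\cdot(b_\alpha s_\alpha)=0.
\]
To return to the full-space setting of \Cref{thm:renorm-sobolev}, we extend $b_\alpha$ by zero in the other blocks. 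This gives a vector field $B_\alpha$ on $\R^d$ with $\nabla\cdot(B_\alpha s_\alpha)=\nabla_\alpha\cdot(b_\alpha s_\alpha)$, so $s_\alpha+\lambda_{1,\alpha}\nabla\cdot(B_\alpha s_\alpha)=0$ in $\mathcal D'(\R^d)$.

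The integrability hypotheses are checked as in the scalar case. By \eqref{eq:block-elliptic-id}, $s_\alpha=(c_{2,\alpha}/\lambda_{1,\alpha})(\Psi_{\alpha,p}u_q-\Psi_{\alpha,q}u_p)$, which lies in $L^1\cap L^\infty$ and is continuous. The current $J_\alpha$ lies in $L^1$ because $\Psi_{\alpha,r}$ and $\nabla_\alpha\Psi_{\alpha,r}$ are in $L^1\cap L^\infty$. The main obstacle is the Sobolev regularity of the velocity. We have $b_\alpha=\nabla_\alpha\Psi_{\alpha,q}/(c_{2,\alpha}u_q)$. \Cref{lem:block-ce-identities} gives $\nabla_\alpha\Psi_{\alpha,q}\in W^{1,1}_{\mathrm{loc}}$ (including derivatives in the other blocks) and $u_q\in W^{1,1}_{\mathrm{loc}}\cap C$ with $u_q>0$. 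For the chain rule, the quotient of two $W^{1,1}_{\mathrm{loc}}$ functions needs one of them to be locally bounded; here both $\nabla_\alpha\Psi_{\alpha,q}$ and $u_q$ are continuous, and $u_q$ is bounded below on compacts. Hence $B_\alpha\in W^{1,1}_{\mathrm{loc}}(\R^d;\R^d)$. With this in hand, set $\rho(t,x)=e^{t/\lambda_{1,\alpha}}s_\alpha(x)$. \Cref{thm:renorm-sobolev} and \Cref{lem:renorm-mass} force $\|\rho(t)\|_{L^1}$ to be constant, so $s_\alpha\equiv0$ and therefore $J_\alpha\equiv0$.

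So for every $\alpha$ we get $\nabla_\alpha(\Psi_{\alpha,p}/\Psi_{\alpha,q})=0$. Applying $(\lambda_{0,\alpha}I-\lambda_{1,\alpha}\Delta_\alpha)$ to $\Psi_{\alpha,p}=\Psi_{\alpha,q}f_\alpha$, where $f_\alpha$ is independent of $x^{(\alpha)}$, gives $u_p=f_\alpha u_q$. Consequently the ratio $u_p/u_q$ (both continuous and positive) is independent of $x^{(\alpha)}$ for each $\alpha$. Since the blocks exhaust all coordinates, $u_p/u_q$ is a constant $c$, and $\int u_p=\int u_q=\int\kappa$ forces $c=1$. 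For the deconvolution stage, $\widehat\kappa(\xi)=\prod_\alpha\widehat{\kappa_\alpha}(\xi^{(\alpha)})>0$ by \Cref{lem:ce-spectral-law} applied to each factor. Then either \Cref{prop:identifiability-spectral-sharpness}\ref{item:spectral-sharpness-injective} or Fourier uniqueness directly gives $p=q$.
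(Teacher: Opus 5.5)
Your proposal is correct and follows the paper's proof essentially step for step: the blockwise Wronskian current, the zero-extension $B_\alpha$ of $b_\alpha$ so that the renormalization and mass-conservation lemmas apply to $\rho=e^{t/\lambda_{1,\alpha}}s_\alpha$, the fiberwise constancy of $u_p/u_q$ patched across blocks, the normalization $c=1$, and positivity of the product multiplier $\widehat\kappa=\prod_\alpha\widehat{\kappa_\alpha}$. The paper is slightly more explicit only about the $W^{1,1}_{\mathrm{loc}}$ regularity of $b_\alpha=\nabla_\alpha\Psi_{\alpha,q}/(c_{2,\alpha}u_q)$, using a smooth bounded truncation $\theta$ of $t\mapsto 1/t$. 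Your continuity-plus-lower-bound argument establishes the same fact, since what the product rule actually needs is a locally bounded numerator and a denominator locally bounded below, and you verify both.
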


\begin{proof}
The complete proof is given in \ref{app:block-proofs}.
\end{proof}

Applying \Cref{cor:ce-classification} to each block factor shows that every kernel satisfying \Cref{ass:block-separable-ce} is a product of Gaussian or Mat\'ern factors across the blocks.
Up to positive scalar multiples, every such kernel can be written as
\[
\kappa(z)
=
\prod_{\alpha\in G}
\exp\left(-\frac{\norm{z^{(\alpha)}}^2}{2\sigma_\alpha^2}\right)
\prod_{\alpha\in H}
\left[
\ell_\alpha\norm{z^{(\alpha)}}^{\nu_\alpha}
K_{\nu_\alpha}\left(\frac{\norm{z^{(\alpha)}}}{\ell_\alpha}\right)
\right],
\]
where $G\cup H=\{1,\ldots,M\}$, $G\cap H=\emptyset$, and $\sigma_\alpha,\ell_\alpha>0$, $\nu_\alpha>0$.

The next example shows that this extension indeed contains anisotropic kernels not covered by the scalar companion-elliptic class.

\begin{example}[$\ell^1$-Laplace kernel]\label{ex:l1-laplace-block-ce}
Take $M=d$ and $d_1=\cdots=d_d=1$.
For each coordinate, choose the factor
\[
\kappa_j(t)=e^{-|t|/\tau_j},
\qquad \tau_j>0,
\]
which is a positive scalar multiple of the one-dimensional Mat\'ern kernel with smoothness $\nu=1/2$.
Hence
\[
\kappa(z)=\prod_{j=1}^d e^{-|z_j|/\tau_j}
=
\exp\left(-\sum_{j=1}^d\frac{|z_j|}{\tau_j}\right)
\]
satisfies \Cref{ass:block-separable-ce}; in particular, if $\tau_1=\cdots=\tau_d=\tau$, then $\kappa(z)=e^{-\norm{z}_1/\tau}$.
For $d\ge2$ this kernel is not Euclidean radial, and it therefore provides a concrete example distinct from the Gaussian and Euclidean Mat\'ern kernels arising in the scalar companion-elliptic classification.
\end{example}

\section{Stability}
\label{sec:quantitative-identifiability}
\label{sec:approximate}

\noindent This section answers (Q2) and (Q3); throughout, $\kappa$ is assumed to satisfy \Cref{ass:block-separable-ce}. \Cref{subsec:mass-escape-approx-wronskian} presents the mass-escape counterexample and extends the Wronskian relations of the exact case to nonvanishing fields, assembling the tools that the quantitative estimate requires. \Cref{subsec:explicit-first-stage-stability} proves the explicit stability inequality for the first identification stage, \Cref{subsec:deconvolution-two-stage} combines it with the amplification estimate for deconvolution and carries out the kernel comparison, and \Cref{subsec:qualitative-stability-tightness} establishes qualitative stability under tightness.

\subsection{Mass-escape counterexample and approximate Wronskian relations}
\label{subsec:mass-escape-approx-wronskian}

To make precise what it means for the field to be small only on bounded regions, we fix a metric for uniform convergence on compact sets: for continuous fields $F,G\in C(\R^d;\R^d)$, set
\begin{equation}\label{eq:compact-open-field-metric}
d_{\mathrm{loc}}(F,G)
:=\sum_{m=1}^\infty 2^{-m}
\min\left\{1,
\sup_{x\in\overline B_m}\|F(x)-G(x)\|
\right\}.
\end{equation}

\begin{counterexample}[Failure of global stability estimates due to mass escape]
\label{cex:local-approx-failure}
Let $p\in\mathcal P(\R^d)$ and $\eps\in(0,1)$, let $(z_n)\subset\R^d$ satisfy $\norm{z_n}\to\infty$, and set
\[
q_n=(1-\eps)p+\eps\delta_{z_n}.
\]
Then
\[
d_{\mathrm{loc}}(V_{p,q_n}^\kappa,0)\longrightarrow0,
\]
yet
\[
\|u_p^\kappa-u_{q_n}^\kappa\|_{L^1}
\longrightarrow2\eps\|\kappa\|_{L^1}>0.
\]
In particular, $q_n$ does not converge weakly to $p$.
\end{counterexample}

\begin{proof}
Fix a compact set $K$. For all $x$,
\[
u_{q_n}^\kappa(x)=(1-\eps)u_p^\kappa(x)+\eps\kappa(x-z_n),
\qquad
m_{q_n}^\kappa(x)=(1-\eps)m_p^\kappa(x)+\eps z_n\kappa(x-z_n),
\]
and subtracting with the aid of $a_p^\kappa=m_p^\kappa/u_p^\kappa$ gives
\[
a_{q_n}^\kappa(x)-a_p^\kappa(x)
=\frac{\eps\kappa(x-z_n)}
{(1-\eps)u_p^\kappa(x)+\eps\kappa(x-z_n)}
\bigl(z_n-a_p^\kappa(x)\bigr).
\]
By \Cref{lem:block-ce-identities},
\[
c_K:=\inf_K u_p^\kappa>0,
\qquad
A_K:=\sup_K\|a_p^\kappa(x)-x\|<\infty.
\]
Hence
\[
\sup_{x\in K}\|a_{q_n}^\kappa(x)-a_p^\kappa(x)\|
\le
\frac{\eps(1+A_K)}{(1-\eps)c_K}
\sup_{x\in K}(1+\norm{x-z_n})\kappa(x-z_n).
\]
From \Cref{ass:block-separable-ce,eq:block-kernel-identities} and the tensor-product closure of $C_0$ functions on product spaces \cite{folland1999real},
\[
\kappa\in C_0(\R^d),
\qquad
z\longmapsto z\kappa(z)\in C_0(\R^d;\R^d),
\]
so $h(z):=(1+\norm z)\kappa(z)\in C_0(\R^d)$. Moreover, $K-z_n$ escapes to infinity, so the last supremum converges to zero. Therefore $V_{p,q_n}^\kappa=a_p^\kappa-a_{q_n}^\kappa\to0$ uniformly on $K$.

On the other hand,
\[
u_{q_n}^\kappa=(1-\eps)u_p^\kappa+\eps\kappa(\,\cdot-z_n),
\qquad
\|u_p^\kappa-u_{q_n}^\kappa\|_{L^1}
=\eps\|u_p^\kappa-\kappa(\,\cdot-z_n)\|_{L^1}.
\]
The two functions $u_p^\kappa$ and $\kappa(\,\cdot-z_n)$ are nonnegative and both integrate to $\|\kappa\|_{L^1}$. Moreover, for any $R>0$,
\[
\int_{\R^d}\min\{u_p^\kappa(x),\kappa(x-z_n)\}\,dx
\le
|B_R|\sup_{x\in B_R}\kappa(x-z_n)
+\int_{B_R^c}u_p^\kappa(x)\,dx.
\]
First letting $n\to\infty$ sends the first term to zero, and then letting $R\to\infty$ sends the second term to zero as well. Hence
\[
\begin{aligned}
\|u_p^\kappa-\kappa(\,\cdot-z_n)\|_{L^1}
&=2\|\kappa\|_{L^1}
-2\int_{\R^d}\min\{u_p^\kappa(x),\kappa(x-z_n)\}\,dx\\
&\longrightarrow2\|\kappa\|_{L^1}.
\end{aligned}
\]
Finally, for every $R>0$ and all sufficiently large $n$, we have $z_n\notin B_R$, hence $q_n(B_R^c)\ge\eps$. Therefore $(q_n)$ is not tight and consequently cannot converge weakly to $p$ \cite{bogachev2007measure,kallenberg2021foundations}.
\end{proof}

\noindent This counterexample shows that there is no global modulus $\omega:[0,\infty)\to[0,\infty)$ satisfying
\begin{equation}\label{eq:no-global-first-stage-modulus}
\lim_{t\downarrow0}\omega(t)=0,
\qquad
\frac{\|u_p^\kappa-u_q^\kappa\|_{L^1}}
{\|\kappa\|_{L^1}}
\le
\omega\bigl(d_{\mathrm{loc}}(V_{p,q}^\kappa,0)\bigr)
\quad
\text{for every }q\in\mathcal P(\R^d).
\end{equation}
The cause is clear: $d_{\mathrm{loc}}$ observes the field only on bounded sets and is blind to mass escaping to infinity. A global estimate therefore requires a term measuring the mass outside the observation region, and when that mass cannot be measured, tightness must be invoked to rule out mass escape. The former leads to the tail term in \Cref{thm:explicit-first-stage-stability}, and the latter to \Cref{thm:tight-field-stability}.

\noindent Besides the tail term demanded by the counterexample, the quantification involves one more term originating in the structure of the exact proof. In the exact case, $V_{p,q}^\kappa\equiv0$ imposed the same first-order relation on the two companion potentials, and the Wronskian argument pinned their ratio to a constant. When the field is nonzero, the same computation leaves a field residual as a forcing term on the right-hand side. For Gaussian factors with $\lambda_{1,\alpha}=0$, the ratio of the potentials coincides exactly with the ratio of the convolution densities, whereas for Mat\'ern factors with $\lambda_{1,\alpha}>0$ the two ratios differ by the divergence of the Wronskian current, so the divergence of the weighted field residual must be controlled as well. For this quantification we establish an explicit $L^1$-Poincar\'e inequality on rectangular boxes and an $L^1$ estimate for transport equations with a forcing term.

We begin with the $L^1$-Poincar\'e inequality to be used on rectangular boxes.
For $m\ge1$ and bounded intervals $I_j=(a_j,b_j)$, let
$Q:=\prod_{j=1}^m I_j$ and
\begin{equation}\label{eq:box-poincare-constant}
 C_Q:=\frac12\left(\sum_{j=1}^m(b_j-a_j)^2\right)^{1/2}
 =\frac{\operatorname{diam}(Q)}{2}.
\end{equation}

\begin{lemma}[$L^1$-Poincar\'e inequality on a box]
\label{lem:box-l1-poincare}
For $g\in W^{1,1}(Q)$ and $g_Q:=|Q|^{-1}\int_Qg$,
\begin{equation}\label{eq:box-l1-poincare}
 \|g-g_Q\|_{L^1(Q)}
 \le C_Q\|\nabla g\|_{L^1(Q)}.
\end{equation}
\end{lemma}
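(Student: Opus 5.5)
The plan is to reduce the estimate to a sharp one-dimensional inequality and then tensorize it across the coordinate directions by a telescoping average, using Cauchy--Schwarz to assemble the constant $C_Q$. Since $Q$ is a Lipschitz domain, $C^\infty(\overline Q)$ is dense in $W^{1,1}(Q)$. Both sides of \eqref{eq:box-l1-poincare} are continuous in the $W^{1,1}(Q)$ norm, so it suffices to treat $g\in C^1(\overline Q)$.

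\emph{Step 1 (one dimension).} Let $I=(a,b)$ with $L=b-a$ and $f\in C^1(\overline I)$. Since $f(x)-f_I=L^{-1}\int_I(f(x)-f(y))\,dy$, we get
\[
\|f-f_I\|_{L^1(I)}\le \frac1L\int_I\!\int_I|f(x)-f(y)|\,dy\,dx
\le \frac1L\int_I\!\int_I\int_{\min(x,y)}^{\max(x,y)}|f'(s)|\,ds\,dy\,dx .
\]
Exchanging the order of integration, the weight attached to $|f'(s)|$ is the measure of pairs $(x,y)$ separated by $s$, namely $2(s-a)(b-s)$. This weight is at most $L^2/2$, so $\|f-f_I\|_{L^1(I)}\le \tfrac L2\|f'\|_{L^1(I)}$.

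\emph{Step 2 (tensorization).} Let $L_j=b_j-a_j$. Let $A_0g=g$, and let $A_jg$ be the average of $g$ over the first $j$ coordinates $(x_1,\dots,x_j)\in I_1\times\cdots\times I_j$, regarded as a function on $Q$; thus $A_mg=g_Q$. Then
\[
g-g_Q=\sum_{j=1}^m\bigl(A_{j-1}g-A_jg\bigr).
\]
For fixed values of the other coordinates, $A_jg$ is the $x_j$-average of $A_{j-1}g$ over $I_j$. Applying Step~1 in $x_j$ and integrating over the remaining variables gives
\[
\|A_{j-1}g-A_jg\|_{L^1(Q)}\le \frac{L_j}{2}\|\partial_jA_{j-1}g\|_{L^1(Q)}.
\]
Because $\partial_j$ commutes with averaging in the other variables and averaging is an $L^1$ contraction, the right-hand side is at most $\tfrac{L_j}{2}\|\partial_jg\|_{L^1(Q)}$.

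\emph{Step 3 (constant).} Summing over $j$ and applying Cauchy--Schwarz pointwise,
\[
\sum_jL_j|\partial_jg(x)|\le \Bigl(\sum_jL_j^2\Bigr)^{1/2}\norm{\nabla g(x)},
\]
which yields $\|g-g_Q\|_{L^1(Q)}\le C_Q\|\nabla g\|_{L^1(Q)}$ with $C_Q$ as in \eqref{eq:box-poincare-constant}.

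The only delicate point is the sharp one-dimensional constant $L/2$, which comes from the weight computation $2(s-a)(b-s)\le L^2/2$. A cruder bound would give the constant $L$ and lose the factor $\tfrac12$ in $C_Q$. The density and commutation arguments are routine.
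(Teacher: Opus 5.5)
Your proof is correct and follows essentially the same route as the paper: the one-dimensional bound $\|h-\bar h_I\|_{L^1(I)}\le \tfrac{|I|}{2}\|h'\|_{L^1(I)}$ from the double integral, then successive coordinatewise averaging using that averaging is an $L^1$ contraction, then a pointwise Cauchy--Schwarz step to assemble $C_Q$. You also make explicit two details the paper leaves implicit: the density reduction to $C^1(\overline Q)$ and the weight computation $2(s-a)(b-s)\le L^2/2$.
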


\begin{proof}
First, on an interval $I=(a,b)$ with $\bar h_I:=|I|^{-1}\int_Ih$,
Fubini's theorem and the fundamental theorem of calculus give
\[
 \int_I|h-\bar h_I|
 \le \frac1{|I|}\int_I\int_I|h(x)-h(y)|\,dx\,dy
 \le \frac{|I|}{2}\int_I|h'(t)|\,dt.
\]
Applying the averaging operators in each coordinate successively, and using that
averaging is an $L^1$ contraction, we obtain
\[
 \|g-g_Q\|_{L^1(Q)}
 \le \sum_{j=1}^m\frac{b_j-a_j}{2}
 \|\partial_jg\|_{L^1(Q)}.
\]
Finally, integrating the pointwise Cauchy--Schwarz inequality in the index $j$,
\[
 \sum_{j=1}^m(b_j-a_j)\,|\partial_jg(x)|
 \le
 \left(\sum_{j=1}^m(b_j-a_j)^2\right)^{1/2}
 \norm{\nabla g(x)}
 =2C_Q\norm{\nabla g(x)},
\]
yields \eqref{eq:box-l1-poincare}.
\end{proof}

\noindent The constant $C_Q=\operatorname{diam}(Q)/2$ agrees with the optimal constant in the $L^1$-Poincar\'e inequality for convex domains \cite{acosta2004optimal}. The next lemma extends the mass conservation argument used in the exact case to equations with a forcing term.

\begin{lemma}[$L^1$ estimate for transport equations with forcing]
\label{lem:forced-transport-l1-contraction}
Let $\lambda>0$ and $b\in W^{1,1}_{\mathrm{loc}}(\R^d;\R^d)$.
Suppose $s\in L^1(\R^d)\cap L^\infty(\R^d)$ and $f\in L^1(\R^d)$ satisfy
\begin{equation}\label{eq:forced-transport-resolvent}
 s+\lambda\nabla\cdot(bs)=f
 \qquad\text{in }\mathcal D'(\R^d),
\end{equation}
and that $bs\in L^1(\R^d;\R^d)$.
Then
\begin{equation}\label{eq:forced-transport-l1-contraction}
 \|s\|_{L^1}\le \|f\|_{L^1}.
\end{equation}
\end{lemma}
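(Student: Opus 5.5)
The plan is to reduce the stationary forced equation to a time-dependent continuity equation, as in the proof of \Cref{thm:companion-elliptic}, and then use renormalization to control the evolution of the $L^1$ norm. I would work with renormalized quantities directly. The main difficulty is that \Cref{thm:renorm-sobolev} is stated for the homogeneous equation, so the forcing $f$ has to be carried through the renormalization.

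The first step is to rewrite the equation as a transport equation with a source. Set $\rho(t,x)=e^{t/\lambda}s(x)$. Then
\[
\partial_t\rho+\nabla\cdot(b\rho)=\lambda^{-1}e^{t/\lambda}f =: F(t,x),
\]
with $F\in L^\infty_{\mathrm{loc}}(\R_t;L^1)$.

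The second step is the forced renormalization. The DiPerna--Lions commutator lemma for $b\in W^{1,1}_{\mathrm{loc}}$ and $\rho\in L^\infty_{\mathrm{loc}}$ applies verbatim when an $L^1_{\mathrm{loc}}$ source is present. Mollifying gives $\partial_t\rho_\epsilon+\nabla\cdot(b\rho_\epsilon)=F_\epsilon+r_\epsilon$ with $r_\epsilon\to0$ in $L^1_{\mathrm{loc}}$. The chain rule for $\beta_\delta$ then yields
\[
\partial_t\beta_\delta(\rho)+\nabla\cdot(b\beta_\delta(\rho))+(\beta_\delta'(\rho)\rho-\beta_\delta(\rho))\nabla\cdot b=\beta_\delta'(\rho)F.
\]
Since $|\beta_\delta'|\le1$, letting $\delta\downarrow0$ gives
\[
\partial_t|\rho|+\nabla\cdot(b|\rho|)=\sigma,
\]
where $\sigma$ is a weak limit of $\beta_\delta'(\rho)F$ and satisfies $|\sigma|\le|F|$.

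Alternatively, I could stay stationary. Multiply the stationary equation by $\beta_\delta'(s)$, with the same commutator justification, to get
\[
\beta_\delta(s)+\lambda\nabla\cdot(b\beta_\delta(s))+\lambda(\beta_\delta'(s)s-\beta_\delta(s))\nabla\cdot b+(\beta_\delta'(s)s-\beta_\delta(s))=\beta_\delta'(s)f.
\]
Passing to the limit gives
\[
|s|+\lambda\nabla\cdot(b|s|)=g,\qquad |g|\le|f| \text{ a.e.}
\]
The limit is justified because $\beta_\delta'(s)\to\operatorname{sgn}(s)$ on $\{s\ne0\}$, and on $\{s=0\}$ the term vanishes. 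I would present this stationary version, since it avoids the time variable entirely.

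The third step tests this identity against the cutoff $\chi_R$ from \Cref{lem:renorm-mass}:
\[
\int\chi_R|s|-\lambda\int b|s|\cdot\nabla\chi_R=\int\chi_R g\le\|f\|_{L^1}.
\]
Because $bs\in L^1$, the flux term is bounded by $\lambda\|\nabla\chi\|_\infty R^{-1}\|bs\|_{L^1}\to0$. Monotone convergence then gives $\|s\|_{L^1}\le\|f\|_{L^1}$.

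I expect the main obstacle to be the rigorous renormalization with a source term in the stationary setting. Concretely, one must check that the DiPerna--Lions commutator $\nabla\cdot(b s_\epsilon)-(\nabla\cdot(bs))_\epsilon\to 0$ in $L^1_{\mathrm{loc}}$ needs only $s\in L^\infty_{\mathrm{loc}}$ and $b\in W^{1,1}_{\mathrm{loc}}$. One must also check that the $\delta$-limit of the term $(\beta_\delta's-\beta_\delta)(1+\lambda\nabla\cdot b)$ vanishes in $L^1_{\mathrm{loc}}$; this follows from the bound $\delta(1+\lambda|\nabla\cdot b|)$. If quoting the theorem with a source proves awkward, I would fall back on the time-dependent reduction, where the cited theorem for $\partial_t\rho+\nabla\cdot(b\rho)=F$ with $F\in L^1_{\mathrm{loc}}$ is standard in \cite{diperna1989transport}.
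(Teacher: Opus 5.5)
Your proposal is correct, and the stationary version you choose to present is exactly the paper's argument: mollify, control the DiPerna--Lions commutator in $L^1_{\mathrm{loc}}$, renormalize with $\beta_\delta$ to get $|s|+\lambda\nabla\cdot(b|s|)=\operatorname{sgn}(s)f\le|f|$, and test against $\chi_R$, using $bs\in L^1$ to make the flux term vanish. The time-dependent reduction via $\rho=e^{t/\lambda}s$ is unnecessary, and the paper does not use it here; in the last step, cite dominated convergence rather than monotone convergence, since $\chi_R$ need not increase in $R$.
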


\begin{proof}
For a standard spatial mollifier $\varphi_\varepsilon$, set
$s_\varepsilon:=s*\varphi_\varepsilon$,
$f_\varepsilon:=f*\varphi_\varepsilon$, and define
\[
 r_\varepsilon
 :=\nabla\cdot(bs_\varepsilon)
 -\bigl(\nabla\cdot(bs)\bigr)*\varphi_\varepsilon.
\]
Since $s\in L^\infty(\R^d)$ and
$b\in W^{1,1}_{\mathrm{loc}}(\R^d;\R^d)$,
the Friedrichs commutator estimate underlying \Cref{thm:renorm-sobolev}
\cite{diperna1989transport,ambrosio2004transport} gives
\begin{equation}\label{eq:forced-transport-commutator}
 r_\varepsilon\longrightarrow0
 \qquad\text{in }L^1_{\mathrm{loc}}(\R^d).
\end{equation}
Convolving \eqref{eq:forced-transport-resolvent} with $\varphi_\varepsilon$ yields
\begin{equation}\label{eq:forced-transport-mollified}
 s_\varepsilon+\lambda\nabla\cdot(bs_\varepsilon)
 =f_\varepsilon+\lambda r_\varepsilon
 \qquad\text{a.e. on }\R^d.
\end{equation}

Let $\beta\in C^1(\R)$ satisfy $\beta(0)=0$, $\beta'\in L^\infty$, and
$z\beta'(z)-\beta(z)\in L^\infty$.
Multiplying \eqref{eq:forced-transport-mollified} by $\beta'(s_\varepsilon)$ and
applying the chain rule gives
\begin{align}
 \beta'(s_\varepsilon)s_\varepsilon
 &+\lambda\nabla\cdot\bigl(b\beta(s_\varepsilon)\bigr)
 +\lambda\bigl(\beta'(s_\varepsilon)s_\varepsilon
              -\beta(s_\varepsilon)\bigr)\nabla\cdot b \notag\\
 &=\beta'(s_\varepsilon)f_\varepsilon
 +\lambda\beta'(s_\varepsilon)r_\varepsilon.
 \label{eq:forced-transport-renormalized-mollified}
\end{align}
The standard convergence properties of mollification together with \eqref{eq:forced-transport-commutator} allow passing to the limit
$\varepsilon\downarrow0$.
Indeed, the boundedness of $\beta'$ gives
$\beta'(s_\varepsilon)r_\varepsilon\to0$, and
the boundedness of $z\beta'(z)-\beta(z)$ together with dominated convergence handles
the term multiplied by $\nabla\cdot b$.
For the source term in particular,
\[
 \beta'(s_\varepsilon)f_\varepsilon-\beta'(s)f
 =\beta'(s_\varepsilon)(f_\varepsilon-f)
 +\bigl(\beta'(s_\varepsilon)-\beta'(s)\bigr)f
 \longrightarrow0
 \quad\text{in }L^1_{\mathrm{loc}},
\]
where the first term is handled by the boundedness of $\beta'$ and the second by
$s_\varepsilon\to s$ a.e.\ and dominated convergence.
Hence
\begin{align}
 \beta'(s)s
 &+\lambda\nabla\cdot\bigl(b\beta(s)\bigr)
 +\lambda\bigl(\beta'(s)s-\beta(s)\bigr)\nabla\cdot b
 =\beta'(s)f
 \label{eq:forced-transport-renormalized}
\end{align}
holds in the distributional sense.

Now take $\beta_\delta(z)=\sqrt{z^2+\delta^2}-\delta$.
Since $|\beta_\delta'|\le1$ and
$|z\beta_\delta'(z)-\beta_\delta(z)|\le\delta$,
letting $\delta\downarrow0$ in \eqref{eq:forced-transport-renormalized} gives
\begin{equation}\label{eq:forced-transport-kato}
 |s|+\lambda\nabla\cdot(b|s|)
 =\operatorname{sgn}(s)f
 \le |f|
 \qquad\text{in }\mathcal D'(\R^d).
\end{equation}
Choosing $\chi_R(x)=\chi(x/R)$ as in the proof of \Cref{lem:renorm-mass} and
testing \eqref{eq:forced-transport-kato} against $\chi_R$ yields
\[
 \int_{\R^d}\chi_R|s|\,dx
 \le
 \int_{\R^d}\chi_R|f|\,dx
 +\frac{\lambda\|\nabla\chi\|_{L^\infty}}{R}
  \|bs\|_{L^1}.
\]
Letting $R\to\infty$ and applying dominated convergence gives
\eqref{eq:forced-transport-l1-contraction}.
\end{proof}

We now extend the blockwise Wronskian computation to the case of a nonvanishing field. The four identities of the next lemma transcribe each step of the exact proof into a form carrying residuals. Fix $p,q\in\mathcal P(\R^d)$ and a block $\alpha$, and set
\[
 V_\alpha:=V_{p,q}^{\kappa,(\alpha)},
 \qquad
 L_\alpha:=\lambda_{0,\alpha}I-\lambda_{1,\alpha}\Delta_\alpha,
 \qquad
 b_{\alpha,q}:=
 \frac{a_q^{\kappa,(\alpha)}-x^{(\alpha)}}
 {c_{1,\alpha}c_{2,\alpha}}.
\]
Also set
\begin{align*}
 J_\alpha
 &:={\Psi}_{\alpha,q}^\kappa\nabla_\alpha{\Psi}_{\alpha,p}^\kappa
 -{\Psi}_{\alpha,p}^\kappa\nabla_\alpha{\Psi}_{\alpha,q}^\kappa,
 &s_\alpha&:=\nabla_\alpha\cdot J_\alpha,\\
 F_\alpha
 &:=\frac{{\Psi}_{\alpha,q}^\kappa u_p^\kappa}{c_{1,\alpha}}V_\alpha,
 &R_\alpha&:=\frac{{\Psi}_{\alpha,p}^\kappa}{{\Psi}_{\alpha,q}^\kappa},
 \qquad H:=\frac{u_p^\kappa}{u_q^\kappa}.
\end{align*}

\begin{lemma}[Approximate blockwise Wronskian relations]
\label{prop:approximate-block-wronskian}
With the notation above,
\begin{align}
 J_\alpha+\lambda_{1,\alpha}b_{\alpha,q}s_\alpha
 &=F_\alpha,
 \label{eq:approx-block-wronskian-current}\\
 s_\alpha+
 \lambda_{1,\alpha}\nabla_\alpha\cdot(b_{\alpha,q}s_\alpha)
 &=\nabla_\alpha\cdot F_\alpha,
 \label{eq:approx-block-wronskian-resolvent}\\
 J_\alpha
 &=({\Psi}_{\alpha,q}^\kappa)^2\nabla_\alpha R_\alpha,
 \label{eq:approx-block-ratio-gradient}\\
 H-R_\alpha
 &=-\frac{\lambda_{1,\alpha}}{c_{2,\alpha}}
 \frac{s_\alpha}{{\Psi}_{\alpha,q}^\kappa u_q^\kappa}.
 \label{eq:approx-block-ratio-error}
\end{align}
In particular, if $\lambda_{1,\alpha}=0$, then $H=R_\alpha$.
\end{lemma}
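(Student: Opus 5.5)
The plan is to derive all four identities directly from the blockwise companion identities of \Cref{lem:block-ce-identities}, by the same algebra as in the proof of \Cref{thm:companion-elliptic}. The difference is that the field $V_\alpha=a_p^{\kappa,(\alpha)}-a_q^{\kappa,(\alpha)}$ is no longer set to zero but carried along as a residual. Throughout I write $\Psi_r$ for $\Psi_{\alpha,r}^\kappa$ and drop the block index where harmless.

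\emph{Step 1: first-order relations with residual.} Combining \eqref{eq:block-grad-id} with $m_r^{\kappa,(\alpha)}=a_r^{\kappa,(\alpha)}u_r$ gives
\[
c_{1,\alpha}\nabla_\alpha\Psi_r=(a_r^{\kappa,(\alpha)}-x^{(\alpha)})u_r .
\]
For $r=q$, using \eqref{eq:block-elliptic-id}, this reads $\nabla_\alpha\Psi_q=c_{2,\alpha}b_{\alpha,q}u_q=b_{\alpha,q}L_\alpha\Psi_q$. For $r=p$, I write $a_p^{\kappa,(\alpha)}-x^{(\alpha)}=c_{1,\alpha}c_{2,\alpha}b_{\alpha,q}+V_\alpha$, which gives
\[
\nabla_\alpha\Psi_p=b_{\alpha,q}L_\alpha\Psi_p+\frac{u_pV_\alpha}{c_{1,\alpha}} .
\]
Multiplying the $p$-relation by $\Psi_q$ and the $q$-relation by $\Psi_p$ and subtracting, the $\lambda_{0,\alpha}$ terms cancel. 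What remains is
\[
J_\alpha=-\lambda_{1,\alpha}b_{\alpha,q}\bigl(\Psi_q\Delta_\alpha\Psi_p-\Psi_p\Delta_\alpha\Psi_q\bigr)+F_\alpha .
\]
Since $\nabla_\alpha\Psi_r$ and $\nabla_\alpha^2\Psi_r$ are continuous by \Cref{lem:block-ce-identities}, $J_\alpha$ is $C^1$ in the block variable. The product rule then gives $s_\alpha=\nabla_\alpha\cdot J_\alpha=\Psi_q\Delta_\alpha\Psi_p-\Psi_p\Delta_\alpha\Psi_q$ pointwise, because the cross terms $\nabla_\alpha\Psi_q\cdot\nabla_\alpha\Psi_p$ cancel. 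This yields \eqref{eq:approx-block-wronskian-current}.

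\emph{Step 2: the resolvent equation.} I take the block divergence $\nabla_\alpha\cdot$ of \eqref{eq:approx-block-wronskian-current} in $\mathcal D'(\R^d)$, using $\nabla_\alpha\cdot J_\alpha=s_\alpha$ classically. This gives \eqref{eq:approx-block-wronskian-resolvent}. The point to check is that every term is a locally integrable function, so that the distributional divergences make sense: $s_\alpha$ is continuous, $b_{\alpha,q}=\nabla_\alpha\Psi_q/(c_{2,\alpha}u_q)$ is continuous since $u_q>0$ is continuous, and $F_\alpha$ is continuous. This regularity bookkeeping is the only mildly delicate part, and I expect it to be the main (though modest) obstacle. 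It is handled by the continuity statements of \Cref{lem:block-ce-identities}.

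\emph{Step 3: ratio identities.} Identity \eqref{eq:approx-block-ratio-gradient} is the quotient rule $\nabla_\alpha(\Psi_p/\Psi_q)=J_\alpha/\Psi_q^2$, which is valid since $\Psi_q>0$ is $C^1$ in the block variable. For \eqref{eq:approx-block-ratio-error}, I substitute $\lambda_{1,\alpha}\Delta_\alpha\Psi_r=\lambda_{0,\alpha}\Psi_r-c_{2,\alpha}u_r$ from \eqref{eq:block-elliptic-id} into $\lambda_{1,\alpha}s_\alpha$. The $\lambda_{0,\alpha}$ terms cancel, leaving $\lambda_{1,\alpha}s_\alpha=c_{2,\alpha}(\Psi_pu_q-\Psi_qu_p)$. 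Dividing by $c_{2,\alpha}\Psi_qu_q>0$ gives
\[
\frac{\lambda_{1,\alpha}s_\alpha}{c_{2,\alpha}\Psi_qu_q}=R_\alpha-H,
\]
which is \eqref{eq:approx-block-ratio-error}. The derivation never divides by $\lambda_{1,\alpha}$, so it also covers $\lambda_{1,\alpha}=0$. In that case the right-hand side vanishes and $H=R_\alpha$, which is also visible directly from $\lambda_{0,\alpha}\Psi_r=c_{2,\alpha}u_r$.
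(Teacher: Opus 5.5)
Your proof is correct and follows essentially the same route as the paper. Both write the first-order relations $\nabla_\alpha\Psi_{\alpha,r}^\kappa=b_{\alpha,r}L_\alpha\Psi_{\alpha,r}^\kappa$ with $b_{\alpha,p}-b_{\alpha,q}=V_\alpha/(c_{1,\alpha}c_{2,\alpha})$, cross-multiply and subtract to get the current identity, then take the block divergence and apply the quotient rule. The differences are cosmetic: the paper obtains the ratio-error identity through the commutator $L_\alpha(R_\alpha\Psi_{\alpha,q}^\kappa)-R_\alpha L_\alpha\Psi_{\alpha,q}^\kappa=-\lambda_{1,\alpha}s_\alpha/\Psi_{\alpha,q}^\kappa$, whereas you substitute the elliptic identity directly into $\lambda_{1,\alpha}s_\alpha$, and your regularity bookkeeping for reading the divergence in $\mathcal D'(\R^d)$ is more explicit than the paper's.
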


\begin{proof}
By \Cref{eq:block-barycenter-id,eq:block-elliptic-id},
\[
 \nabla_\alpha\Psi_{\alpha,r}^\kappa
 =b_{\alpha,r}L_\alpha\Psi_{\alpha,r}^\kappa,
 \qquad
 b_{\alpha,p}-b_{\alpha,q}
 =\frac{V_\alpha}{c_{1,\alpha}c_{2,\alpha}},
 \qquad
 L_\alpha\Psi_{\alpha,r}^\kappa=c_{2,\alpha}u_r^\kappa.
\]
Hence
\begin{align*}
 J_\alpha
 &=b_{\alpha,q}
 \left(\Psi_{\alpha,q}^\kappa L_\alpha\Psi_{\alpha,p}^\kappa
 -\Psi_{\alpha,p}^\kappa L_\alpha\Psi_{\alpha,q}^\kappa\right)
 +\Psi_{\alpha,q}^\kappa
 (b_{\alpha,p}-b_{\alpha,q})L_\alpha\Psi_{\alpha,p}^\kappa\\
 &=-\lambda_{1,\alpha}b_{\alpha,q}s_\alpha+F_\alpha,
\end{align*}
which proves \eqref{eq:approx-block-wronskian-current}.
Taking the divergence gives \eqref{eq:approx-block-wronskian-resolvent}, and
the quotient rule gives \eqref{eq:approx-block-ratio-gradient}.
Finally, since
\[
 L_\alpha(R_\alpha\Psi_{\alpha,q}^\kappa)
 -R_\alpha L_\alpha\Psi_{\alpha,q}^\kappa
 =-\lambda_{1,\alpha}
 \frac{s_\alpha}{\Psi_{\alpha,q}^\kappa},
\]
substituting $L_\alpha\Psi_{\alpha,r}^\kappa=c_{2,\alpha}u_r^\kappa$ and
dividing by $u_q^\kappa$ yields
\eqref{eq:approx-block-ratio-error}.
\end{proof}

\subsection{Explicit stability of the first identification stage}
\label{subsec:explicit-first-stage-stability}
With the tools of \Cref{subsec:mass-escape-approx-wronskian} in place, we assemble the quantities entering the bound. Fix $p,q\in\mathcal P(\R^d)$ and a bounded rectangular box
\[
 \Omega_\alpha=\prod_{j=1}^{d_\alpha}(a_{\alpha j},b_{\alpha j}),
 \qquad
 \Omega:=\prod_{\alpha=1}^M\Omega_\alpha,
\]
and write $C_{\Omega_\alpha}$ for the constant in \eqref{eq:box-poincare-constant}. Define
\begin{align*}
 K_\kappa&:=\|\kappa\|_{L^1},
 &U_{r,\Omega}&:=\int_\Omega u_r^\kappa(x)\,dx\quad(r=p,q),\\
 T_{r,\Omega}&:=\int_{\Omega^c}u_r^\kappa(x)\,dx\quad(r=p,q),
 &\underline u_\Omega&:=\inf_{\overline\Omega}u_q^\kappa,
 \qquad
 \overline u_\Omega:=\sup_{\overline\Omega}u_q^\kappa.
\end{align*}
For each block $\alpha$, set
\begin{align*}
 \underline\psi_{\alpha,\Omega}
 &:=\inf_{\overline\Omega}\Psi_{\alpha,q}^\kappa,
 &B_{\alpha,\Omega}
 &:=\sup_{\overline\Omega}\|b_{\alpha,q}\|,\\
 E_{\alpha,\Omega}^{(0)}
 &:=\|F_\alpha\|_{L^1(\Omega)},
 &E_\alpha^{(1)}
 &:=
 \begin{cases}
 0,&\lambda_{1,\alpha}=0,\\
 \|\nabla_\alpha\cdot F_\alpha\|_{L^1(\R^d)},
 &\lambda_{1,\alpha}>0,
 \end{cases}
\end{align*}
and define
\begin{equation}\label{eq:block-first-stage-local-certificate}
 \mathcal E_{\alpha,\Omega}(p,q)
 :=
 C_{\Omega_\alpha}\underline\psi_{\alpha,\Omega}^{-2}
 \left(
 E_{\alpha,\Omega}^{(0)}
 +\lambda_{1,\alpha}B_{\alpha,\Omega}E_\alpha^{(1)}
 \right)
 +\frac{2\lambda_{1,\alpha}}
 {c_{2,\alpha}\underline\psi_{\alpha,\Omega}\underline u_\Omega}
 E_\alpha^{(1)}.
\end{equation}
By the companion identities, all denominators above are positive. Here $T_{r,\Omega}$ measures the convolution mass outside the box, $E_{\alpha,\Omega}^{(0)}$ and $E_\alpha^{(1)}$ are the field and divergence residuals, and $\mathcal E_{\alpha,\Omega}$ combines them into the blockwise certificate.

\begin{theorem}[Explicit stability of the first identification stage]
\label{thm:explicit-first-stage-stability}
Suppose $\kappa$ satisfies \Cref{ass:block-separable-ce}. If, for each block with $\lambda_{1,\alpha}>0$,
\[
 F_\alpha\in L^1(\R^d;\R^{d_\alpha}),
 \qquad
 \nabla_\alpha\cdot F_\alpha\in L^1(\R^d),
\]
then
\begin{equation}\label{eq:explicit-first-stage-stability}
 {
 \|u_p^\kappa-u_q^\kappa\|_{L^1(\R^d)}
 \le
 2\overline u_\Omega
 \sum_{\alpha=1}^M\mathcal E_{\alpha,\Omega}(p,q)
 +2\max\{T_{p,\Omega},T_{q,\Omega}\}.
 }
\end{equation}
\end{theorem}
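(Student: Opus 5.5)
The plan is to reduce the global $L^1$ error to an oscillation estimate for the ratio $H=u_p^\kappa/u_q^\kappa$ on the box $\Omega$, and then to control that oscillation block by block through \Cref{prop:approximate-block-wronskian}.

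\textbf{Step 1: reduction to an oscillation of $H$ on $\Omega$.} First split $\|u_p-u_q\|_{L^1(\R^d)}\le\|u_p-u_q\|_{L^1(\Omega)}+T_{p,\Omega}+T_{q,\Omega}$. On $\Omega$ write $u_p-u_q=u_q(H-1)$ and fix an arbitrary constant $c$, to be chosen as the box average of $H$. Since $\int u_p=\int u_q=K_\kappa$, we have $U_{p,\Omega}-U_{q,\Omega}=T_{q,\Omega}-T_{p,\Omega}$, that is, $\int_\Omega u_q(H-1)=T_{q,\Omega}-T_{p,\Omega}$. Hence
\[
|c-1|\,U_{q,\Omega}\le\int_\Omega u_q|H-c|+|T_{q,\Omega}-T_{p,\Omega}|,
\]
and therefore
\[
\|u_p-u_q\|_{L^1(\Omega)}\le 2\overline u_\Omega\|H-c\|_{L^1(\Omega)}+|T_{q,\Omega}-T_{p,\Omega}|.
\]
Finally, $|T_q-T_p|+T_p+T_q=2\max\{T_p,T_q\}$, which produces exactly the tail term in \eqref{eq:explicit-first-stage-stability}. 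It therefore remains to show $\|H-H_\Omega\|_{L^1(\Omega)}\le\sum_\alpha\mathcal E_{\alpha,\Omega}(p,q)$.

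\textbf{Step 2: blockwise telescoping.} Let $A_\alpha$ denote averaging over the block variable $x^{(\alpha)}\in\Omega_\alpha$, exactly as in the proof of \Cref{lem:box-l1-poincare}. Writing $H-A_1\cdots A_MH$ as a telescoping sum and using that each $A_\beta$ is an $L^1(\Omega)$ contraction gives
\[
\|H-H_\Omega\|_{L^1(\Omega)}\le\sum_\alpha\|H-A_\alpha H\|_{L^1(\Omega)}.
\]
For each block, insert $R_\alpha$:
\[
\|H-A_\alpha H\|\le 2\|H-R_\alpha\|_{L^1(\Omega)}+\|R_\alpha-A_\alpha R_\alpha\|_{L^1(\Omega)}.
\]
For the last term, apply \Cref{lem:box-l1-poincare} fiberwise on $\Omega_\alpha$ and integrate over the other blocks. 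This bounds it by $C_{\Omega_\alpha}\|\nabla_\alpha R_\alpha\|_{L^1(\Omega)}$, which by \eqref{eq:approx-block-ratio-gradient} is at most $C_{\Omega_\alpha}\underline\psi_{\alpha,\Omega}^{-2}\|J_\alpha\|_{L^1(\Omega)}$. Fiberwise $W^{1,1}$ regularity of $R_\alpha$ comes from \Cref{lem:block-ce-identities}.

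\textbf{Step 3: residual bounds.} By \eqref{eq:approx-block-wronskian-current},
\[
\|J_\alpha\|_{L^1(\Omega)}\le E^{(0)}_{\alpha,\Omega}+\lambda_{1,\alpha}B_{\alpha,\Omega}\|s_\alpha\|_{L^1}.
\]
By \eqref{eq:approx-block-ratio-error},
\[
\|H-R_\alpha\|_{L^1(\Omega)}\le\frac{\lambda_{1,\alpha}}{c_{2,\alpha}\underline\psi_{\alpha,\Omega}\underline u_\Omega}\|s_\alpha\|_{L^1}.
\]
When $\lambda_{1,\alpha}=0$ both $s_\alpha$-terms are absent. When $\lambda_{1,\alpha}>0$ it suffices to prove $\|s_\alpha\|_{L^1}\le E^{(1)}_\alpha$; substituting then reproduces $\mathcal E_{\alpha,\Omega}$ exactly, including the factor $2$ on the second term.

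\textbf{Main obstacle.} I expect this to be proving $\|s_\alpha\|_{L^1}\le E^{(1)}_\alpha$, i.e.\ applying \Cref{lem:forced-transport-l1-contraction} to \eqref{eq:approx-block-wronskian-resolvent} with $\lambda=\lambda_{1,\alpha}$ and $f=\nabla_\alpha\cdot F_\alpha$. The difficulty is that the divergence acts only in the $\alpha$-block. I would handle this by regarding $b_{\alpha,q}$ as the full-space field whose non-$\alpha$ components vanish, so that $\nabla_\alpha\cdot(b_{\alpha,q}s_\alpha)=\nabla\cdot(\tilde b s_\alpha)$. The hypotheses of the lemma then need to be checked one by one.
\begin{itemize}
\item $s_\alpha=(c_{2,\alpha}/\lambda_{1,\alpha})(\Psi_{\alpha,p}u_q-\Psi_{\alpha,q}u_p)$ lies in $L^1\cap L^\infty$, by \eqref{eq:block-elliptic-id}.
\item $b_{\alpha,q}s_\alpha=(F_\alpha-J_\alpha)/\lambda_{1,\alpha}$ lies in $L^1$, using the hypothesis $F_\alpha\in L^1$ and $J_\alpha\in L^1$ from \Cref{lem:block-ce-identities}.
\item $\tilde b=\nabla_\alpha\Psi_{\alpha,q}/(c_{2,\alpha}u_q)$ lies in $W^{1,1}_{\mathrm{loc}}$, as in the exact proof, since $u_q$ is locally bounded below.
\end{itemize}
If the full-space commutator argument had trouble with the anisotropic field, I would instead redo the renormalization of \Cref{lem:forced-transport-l1-contraction} with mollification only in the $\alpha$-variables, treating the other blocks as parameters.
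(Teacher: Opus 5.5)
Your proposal is correct and follows the paper's proof essentially step for step. It uses the same forced-transport bound $\|s_\alpha\|_{L^1}\le E^{(1)}_\alpha$ (with $b_{\alpha,q}$ extended by zero to the other blocks), the same fiberwise Poincar\'e and telescoping over block averages, and the same normalization and tail argument. The only cosmetic difference is in Step 1: you bound $|c-1|\,U_{q,\Omega}$ directly from $\int_\Omega u_q(H-1)=T_{q,\Omega}-T_{p,\Omega}$, whereas the paper passes through $c_\Omega=U_{p,\Omega}/U_{q,\Omega}$; both yield $2\overline u_\Omega\|H-\overline H_\Omega\|_{L^1(\Omega)}+|T_{p,\Omega}-T_{q,\Omega}|$.
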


\begin{proof}
We follow the notation of \Cref{prop:approximate-block-wronskian}.
If $\lambda_{1,\alpha}>0$, then
\[
 s_\alpha
 =\frac{c_{2,\alpha}}{\lambda_{1,\alpha}}
 \left(\Psi_{\alpha,p}^\kappa u_q^\kappa
 -\Psi_{\alpha,q}^\kappa u_p^\kappa\right)
 \in L^1\cap L^\infty.
\]
Moreover $J_\alpha\in L^1$, and
\eqref{eq:approx-block-wronskian-current} together with $F_\alpha\in L^1$ gives
$b_{\alpha,q}s_\alpha\in L^1$. Furthermore,
\Cref{lem:block-ce-identities} and $u_q^\kappa>0$ give
\[
b_{\alpha,q}
=\frac{\nabla_\alpha\Psi_{\alpha,q}^\kappa}
{c_{2,\alpha}u_q^\kappa}
\in W^{1,1}_{\mathrm{loc}}(\R^d;\R^{d_\alpha}).
\]
Regarding the $\R^{d_\alpha}$-valued field $b_{\alpha,q}$ as an
$\R^d$-valued field vanishing on the remaining blocks and applying
\Cref{lem:forced-transport-l1-contraction} to
\eqref{eq:approx-block-wronskian-resolvent} gives
\begin{equation}\label{eq:s-alpha-contraction}
 \|s_\alpha\|_{L^1}\le E_\alpha^{(1)}.
\end{equation}
When $\lambda_{1,\alpha}=0$, all $s_\alpha$ terms below vanish.
Hence \eqref{eq:approx-block-wronskian-current} yields
\begin{equation}\label{eq:J-alpha-local-bound}
 \|J_\alpha\|_{L^1(\Omega)}
 \le E_{\alpha,\Omega}^{(0)}
 +\lambda_{1,\alpha}B_{\alpha,\Omega}E_\alpha^{(1)}.
\end{equation}

Set $H=u_p^\kappa/u_q^\kappa$ and
$R_\alpha=\Psi_{\alpha,p}^\kappa/\Psi_{\alpha,q}^\kappa$.
By \Cref{eq:approx-block-ratio-error,eq:s-alpha-contraction},
\begin{equation}\label{eq:H-R-alpha-bound}
 \|H-R_\alpha\|_{L^1(\Omega)}
 \le
 \frac{\lambda_{1,\alpha}}
 {c_{2,\alpha}\underline\psi_{\alpha,\Omega}\underline u_\Omega}
 E_\alpha^{(1)}.
\end{equation}
Let $P_\alpha$ denote the averaging operator with respect to $x^{(\alpha)}$ over $\Omega_\alpha$.
From \Cref{lem:box-l1-poincare,eq:approx-block-ratio-gradient,eq:J-alpha-local-bound},
\begin{equation}\label{eq:R-alpha-fiber-bound}
 \|R_\alpha-P_\alpha R_\alpha\|_{L^1(\Omega)}
 \le
 C_{\Omega_\alpha}\underline\psi_{\alpha,\Omega}^{-2}
 \left(
 E_{\alpha,\Omega}^{(0)}
 +\lambda_{1,\alpha}B_{\alpha,\Omega}E_\alpha^{(1)}
 \right).
\end{equation}
Moreover, since $P_\alpha$ is an $L^1$ contraction, \eqref{eq:H-R-alpha-bound} gives
\[
 \|(I-P_\alpha)(H-R_\alpha)\|_{L^1(\Omega)}
 \le2\|H-R_\alpha\|_{L^1(\Omega)}
 \le\frac{2\lambda_{1,\alpha}}
 {c_{2,\alpha}\underline\psi_{\alpha,\Omega}\underline u_\Omega}
 E_\alpha^{(1)}.
\]
Since $(I-P_\alpha)H=(I-P_\alpha)R_\alpha+(I-P_\alpha)(H-R_\alpha)$, adding this to \eqref{eq:R-alpha-fiber-bound} gives
\begin{equation}\label{eq:H-coordinate-independent-approx}
 \|(I-P_\alpha)H\|_{L^1(\Omega)}
 \le\mathcal E_{\alpha,\Omega}(p,q).
\end{equation}

The averaging operators commute with one another and are $L^1$ contractions.
Setting $P:=P_1\cdots P_M$, the function $PH$ is the constant
$\overline H_\Omega:=|\Omega|^{-1}\int_\Omega H$ on $\Omega$, so a telescoping decomposition together with \eqref{eq:H-coordinate-independent-approx} gives
\begin{align}
 \|H-\overline H_\Omega\|_{L^1(\Omega)}
 &\le\sum_{\alpha=1}^M\|(I-P_\alpha)H\|_{L^1(\Omega)}
 \le\sum_{\alpha=1}^M\mathcal E_{\alpha,\Omega}(p,q).
 \label{eq:H-global-constant-bound}
\end{align}
That is, the approximate constancy obtained in each block combines into approximate constancy on the whole box.

We now pin down the local multiplicative constant by the probability normalization.
Set
\[
 c_\Omega:=
 \frac{\int_\Omega u_p^\kappa}{\int_\Omega u_q^\kappa}
 =\frac{K_\kappa-T_{p,\Omega}}
 {K_\kappa-T_{q,\Omega}}.
\]
Then
\begin{align*}
 \|u_p^\kappa-c_\Omega u_q^\kappa\|_{L^1(\Omega)}
 &=\int_\Omega u_q^\kappa|H-c_\Omega|\\
 &\le
 \overline u_\Omega\|H-\overline H_\Omega\|_{L^1(\Omega)}
 +U_{q,\Omega}|\overline H_\Omega-c_\Omega|.
\end{align*}
On the other hand,
\[
 U_{q,\Omega}|\overline H_\Omega-c_\Omega|
 =\left|\int_\Omega
 (\overline H_\Omega-H)u_q^\kappa\right|
 \le\overline u_\Omega
 \|H-\overline H_\Omega\|_{L^1(\Omega)}.
\]
Hence \eqref{eq:H-global-constant-bound} gives
\begin{equation}\label{eq:local-scaled-convolution-bound}
 \|u_p^\kappa-c_\Omega u_q^\kappa\|_{L^1(\Omega)}
 \le2\overline u_\Omega
 \sum_{\alpha=1}^M\mathcal E_{\alpha,\Omega}(p,q).
\end{equation}
It remains to replace $c_\Omega$ by $1$ and to handle the exterior of the box. Since $U_{r,\Omega}=K_\kappa-T_{r,\Omega}$,
\[
 |1-c_\Omega|\,U_{q,\Omega}
 =|U_{p,\Omega}-U_{q,\Omega}|
 =|T_{p,\Omega}-T_{q,\Omega}|,
\]
and combining this with \eqref{eq:local-scaled-convolution-bound} gives
\[
\begin{aligned}
 \|u_p^\kappa-u_q^\kappa\|_{L^1(\Omega)}
 &\le\|u_p^\kappa-c_\Omega u_q^\kappa\|_{L^1(\Omega)}
 +|1-c_\Omega|\,U_{q,\Omega}\\
 &\le2\overline u_\Omega
 \sum_{\alpha=1}^M\mathcal E_{\alpha,\Omega}(p,q)
 +|T_{p,\Omega}-T_{q,\Omega}|.
\end{aligned}
\]
Outside the box, $u_p^\kappa,u_q^\kappa\ge0$ gives
\[
 \|u_p^\kappa-u_q^\kappa\|_{L^1(\Omega^c)}
 \le T_{p,\Omega}+T_{q,\Omega}.
\]
Adding the two inequalities and using
\[
 |T_{p,\Omega}-T_{q,\Omega}|
 +T_{p,\Omega}+T_{q,\Omega}
 =2\max\{T_{p,\Omega},T_{q,\Omega}\}
\]
yields \eqref{eq:explicit-first-stage-stability}.
\end{proof}

\begin{remark}
\label{rem:first-stage-assumptions}
If $V_{p,q}^\kappa\equiv0$, then all residual terms vanish, and if $\Omega$ is expanded to $\R^d$ so that the tail terms tend to $0$, then inequality \eqref{eq:explicit-first-stage-stability} recovers $u_p^\kappa=u_q^\kappa$, that is, the first identification stage of the exact case. Furthermore, the coefficient $2$ of the tail term cannot be improved. Suppose all factors are Gaussian and apply the theorem to the mass-escape counterexample $q_n=(1-\eps)p+\eps\delta_{z_n}$. Since the Gaussian residuals involve only quantities on $\Omega$, for fixed $\Omega$ we have $\mathcal E_{\alpha,\Omega}(p,q_n)\to0$ and $T_{q_n,\Omega}\to(1-\eps)T_{p,\Omega}+\eps K_\kappa$ as $n\to\infty$. The right-hand side therefore converges to $2\bigl((1-\eps)T_{p,\Omega}+\eps K_\kappa\bigr)$ and the left-hand side to $2\eps K_\kappa$, and expanding $\Omega$ so that $T_{p,\Omega}\downarrow0$ makes the two limits coincide.
\end{remark}

To obtain a sharper bound for the Gaussian case, let
\[
 \kappa(z)=\prod_{\alpha=1}^M
 \exp\left(-\frac{\|z^{(\alpha)}\|^2}{2\sigma_\alpha^2}\right).
\]
For a bounded rectangular box $\Omega$, write $C_\Omega$ for the constant in \eqref{eq:box-poincare-constant} and set
\[
 W_{p,q}(x):=
 \left(
 \sum_{\alpha=1}^M
 \frac{\|V_{p,q}^{\kappa,(\alpha)}(x)\|^2}{\sigma_\alpha^4}
 \right)^{1/2}.
\]

\begin{corollary}[Explicit bound for Gaussian product kernels]
\label{cor:gaussian-explicit-first-stage}
With the notation of \Cref{thm:explicit-first-stage-stability},
\begin{equation}\label{eq:gaussian-explicit-first-stage}
 {
 \|u_p^\kappa-u_q^\kappa\|_{L^1}
 \le
 2C_\Omega\frac{\overline u_\Omega}{\underline u_\Omega}
 \int_\Omega u_p^\kappa(x)W_{p,q}(x)\,dx
 +2\max\{T_{p,\Omega},T_{q,\Omega}\}.
 }
\end{equation}
In particular, in the isotropic case $\sigma_\alpha=\sigma$, the first term simplifies to
\[
 \frac{2C_\Omega}{\sigma^2}
 \frac{\overline u_\Omega}{\underline u_\Omega}
 \int_\Omega u_p^\kappa\|V_{p,q}^\kappa\|.
\]
\end{corollary}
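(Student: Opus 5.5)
The plan is to bypass the blockwise telescoping in the proof of \Cref{thm:explicit-first-stage-stability}. Instead, I would apply \Cref{lem:box-l1-poincare} once on the whole box $\Omega$ to the density ratio $H=u_p^\kappa/u_q^\kappa$. The second half of that proof (pinning the constant $c_\Omega$ and handling $\Omega^c$) can then be reused verbatim. Since every factor is Gaussian, $\lambda_{1,\alpha}=0$ for all $\alpha$. So the extra integrability hypotheses of the theorem are void, and all $s_\alpha$-terms disappear.

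First I identify the companions. For a Gaussian factor we may take $\eta_\alpha=\kappa_\alpha$, $c_{1,\alpha}=\sigma_\alpha^2$, $c_{2,\alpha}=\lambda_{0,\alpha}=1$. Hence $\eta^{[\alpha]}=\kappa$ by \eqref{eq:block-companion-def}, and $\Psi_{\alpha,r}^\kappa=u_r^\kappa$ for every $\alpha$. Then
\[
J_\alpha=u_q^\kappa\nabla_\alpha u_p^\kappa-u_p^\kappa\nabla_\alpha u_q^\kappa=(u_q^\kappa)^2\nabla_\alpha H
\]
by \eqref{eq:approx-block-ratio-gradient}. Also \eqref{eq:approx-block-wronskian-current} with $\lambda_{1,\alpha}=0$ gives $J_\alpha=F_\alpha=u_q^\kappa u_p^\kappa V_\alpha/\sigma_\alpha^2$. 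Therefore
\[
\nabla_\alpha H=\frac{u_p^\kappa}{u_q^\kappa}\,\frac{V_{p,q}^{\kappa,(\alpha)}}{\sigma_\alpha^2},
\qquad
\norm{\nabla H}=\frac{u_p^\kappa}{u_q^\kappa}\,W_{p,q}
\]
pointwise. The Gaussian convolutions are $C^\infty$ with bounded derivatives, and $u_q^\kappa\ge\underline u_\Omega>0$ on $\overline\Omega$. Hence $H\in W^{1,1}(\Omega)$ (indeed $C^1(\overline\Omega)$).

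Next, \Cref{lem:box-l1-poincare} on $\Omega$ (viewed as a box in $\R^d$) gives
\[
\|H-\overline H_\Omega\|_{L^1(\Omega)}\le C_\Omega\int_\Omega\frac{u_p^\kappa}{u_q^\kappa}W_{p,q}\le\frac{C_\Omega}{\underline u_\Omega}\int_\Omega u_p^\kappa W_{p,q}.
\]
This estimate replaces \eqref{eq:H-global-constant-bound}. I then repeat the rest of the proof of \Cref{thm:explicit-first-stage-stability} without change. With $c_\Omega=U_{p,\Omega}/U_{q,\Omega}$, we get $\|u_p^\kappa-c_\Omega u_q^\kappa\|_{L^1(\Omega)}\le 2\overline u_\Omega\|H-\overline H_\Omega\|_{L^1(\Omega)}$. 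The correction $|1-c_\Omega|U_{q,\Omega}=|T_{p,\Omega}-T_{q,\Omega}|$ and the exterior bound $T_{p,\Omega}+T_{q,\Omega}$ combine to the tail term $2\max\{T_{p,\Omega},T_{q,\Omega}\}$. This yields \eqref{eq:gaussian-explicit-first-stage}. In the isotropic case $W_{p,q}=\norm{V_{p,q}^\kappa}/\sigma^2$, which gives the stated simplification.

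The only delicate point is the identification $\Psi_{\alpha,r}^\kappa=u_r^\kappa$. It is what converts each blockwise Wronskian relation into a statement about the single function $H$, and this is what permits one full-box Poincaré inequality instead of a sum over blocks. Beyond that, only the routine regularity check $H\in W^{1,1}(\Omega)$ remains, which follows from smoothness of Gaussian convolutions and the positive lower bound on $u_q^\kappa$.
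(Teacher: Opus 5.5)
Your proposal is correct and matches the paper's proof essentially step for step. Both apply a single Poincar\'e inequality on the full box $\Omega$ to $H=u_p^\kappa/u_q^\kappa$ via $\nabla_\alpha H=H\,V_{p,q}^{\kappa,(\alpha)}/\sigma_\alpha^2$, then reuse the normalization and tail argument from the proof of \Cref{thm:explicit-first-stage-stability}; the only cosmetic difference is that you derive the gradient identity from the approximate Wronskian relations with $\Psi_{\alpha,r}^\kappa=u_r^\kappa$, whereas the paper reads it off directly from the Gaussian score identity.
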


\begin{proof}
By the Gaussian score identity, $H=u_p^\kappa/u_q^\kappa$ satisfies
\[
 \nabla_\alpha H
 =H\frac{V_{p,q}^{\kappa,(\alpha)}}{\sigma_\alpha^2}.
\]
Setting $c_\Omega=U_{p,\Omega}/U_{q,\Omega}$ and
$\overline H_\Omega:=|\Omega|^{-1}\int_\Omega H$ and using
\Cref{lem:box-l1-poincare},
\begin{align*}
 \int_\Omega u_q^\kappa|H-c_\Omega|
 &\le2\overline u_\Omega
 \|H-\overline H_\Omega\|_{L^1(\Omega)}\\
 &\le2C_\Omega\overline u_\Omega
 \|\nabla H\|_{L^1(\Omega)}\\
 &\le2C_\Omega\frac{\overline u_\Omega}{\underline u_\Omega}
 \int_\Omega u_p^\kappa W_{p,q}.
\end{align*}
The subsequent normalization of the constant and the tail estimate are as in the proof of
\Cref{thm:explicit-first-stage-stability}.
\end{proof}

\subsection{Combining the two identification stages and kernel selection}
\label{subsec:deconvolution-two-stage}

\noindent Having obtained the explicit bound for the first stage, we now examine how much the error is amplified across frequencies in the Fourier deconvolution. First, since multiplying the kernel by a positive constant changes neither $a_r^\kappa$ nor $V_{p,q}^\kappa$, we normalize
\[
\widehat\kappa(0)=\int_{\R^d}\kappa(x)\,dx=1.
\]

For $T>0$, set
\begin{equation}\label{eq:spectral-amplification}
A_\kappa(T):=
\left(\min_{\|\xi\|\le T}\widehat\kappa(\xi)\right)^{-1}.
\end{equation}
If $f$ is a bounded integrable continuous function with $\widehat f\in L^1(\R^d)$ and $\operatorname{supp}\widehat f\subseteq\{\|\xi\|\le T\}$, then
\begin{equation}\label{eq:bandlimited-deconvolution}
\left|\int_{\R^d}f\,dp-\int_{\R^d}f\,dq\right|
\le
(2\pi)^{-d}\|\widehat f\|_{L^1}
A_\kappa(T)\|u_p^\kappa-u_q^\kappa\|_{L^1}.
\end{equation}
Indeed, $\widehat{u_p^\kappa-u_q^\kappa}=\widehat\kappa(\widehat p-\widehat q)$, so it suffices to apply Fourier inversion.

By \Cref{lem:ce-spectral-law}, for the normalized Gaussian and Mat\'ern kernels,
\[
A_{\kappa_\sigma}(T)
=\exp\!\left(\frac{\sigma^2T^2}{2}\right),
\qquad
A_{\kappa_{\nu,\ell}}(T)
=(1+\ell^2T^2)^{\nu+d/2}.
\]
In particular, matching the low-frequency quadratic decay rates of the Gaussian and the Euclidean Laplace kernel via $\sigma^2=(d+1)\ell^2$, we have, for all $T>0$,
\begin{equation}\label{eq:laplace-gaussian-amplification}
A_{\kappa_{1/2,\ell}}(T)
=(1+\ell^2T^2)^{(d+1)/2}
<
\exp\!\left(\frac{d+1}{2}\ell^2T^2\right)
=A_{\kappa_\sigma}(T).
\end{equation}
Hence, for the same value of $\|u_p^\kappa-u_q^\kappa\|_{L^1}$, the Laplace kernel amplifies the error less in the second stage than the Gaussian kernel does.

Combining the first-stage bound with the deconvolution estimate now shows how field errors propagate to differences in expectations of band-limited test functions.

\begin{corollary}[Combination of the two identification stages]
\label{cor:explicit-two-stage-identifiability}
Assume the hypotheses of \Cref{thm:explicit-first-stage-stability}. Normalize $\widehat\kappa(0)=\|\kappa\|_{L^1}=1$ and let $A_\kappa(T)$ be as in \eqref{eq:spectral-amplification}.
If $f$ is a bounded integrable continuous function with
$\widehat f\in L^1(\R^d)$ and
$\operatorname{supp}\widehat f\subseteq\{\|\xi\|\le T\}$, then
\begin{align}
 \left|\int f\,dp-\int f\,dq\right|
 &\le(2\pi)^{-d}\|\widehat f\|_{L^1}A_\kappa(T)
 \Bigg[
 2\overline u_\Omega
 \sum_{\alpha=1}^M\mathcal E_{\alpha,\Omega}(p,q)
 +2\max\{T_{p,\Omega},T_{q,\Omega}\}
 \Bigg].
 \label{eq:explicit-two-stage-identifiability}
\end{align}
\end{corollary}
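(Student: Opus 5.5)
The plan is to chain the deconvolution estimate \eqref{eq:bandlimited-deconvolution} with the first-stage bound of \Cref{thm:explicit-first-stage-stability}. The only real work is to justify \eqref{eq:bandlimited-deconvolution} carefully under the normalization $\widehat\kappa(0)=1$; the rest is substitution.

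First I verify \eqref{eq:bandlimited-deconvolution}. Set $\mu:=p-q$, a finite signed measure, and $g:=u_p^\kappa-u_q^\kappa=\kappa*\mu$. By \Cref{lem:block-ce-identities}, $g\in L^1$, so $\widehat g=\widehat\kappa\,\widehat\mu$ pointwise. Since $\kappa$ is a product of Gaussian and Mat\'ern factors, \Cref{lem:ce-spectral-law} applied blockwise makes $\widehat\kappa$ a positive continuous product of factor multipliers. Hence $A_\kappa(T)<\infty$ and
\[
\widehat\mu(\xi)=\widehat g(\xi)/\widehat\kappa(\xi)\qquad\text{on }\{\norm\xi\le T\}.
\]
Because $\widehat f\in L^1$ and $f$ is continuous, Fourier inversion (F1) holds at every point. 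Fubini, justified since $\mu$ is finite and $\widehat f\in L^1$, then gives
\[
\int f\,d\mu=(2\pi)^{-d}\int_{\norm\xi\le T}\widehat f(\xi)\,\widehat\mu(-\xi)\,d\xi .
\]
Substituting $\widehat\mu(-\xi)=\widehat g(-\xi)/\widehat\kappa(-\xi)$ is legitimate because the ball is symmetric. The bounds $|\widehat g|\le\|g\|_{L^1}$ and $\widehat\kappa(-\xi)^{-1}\le A_\kappa(T)$ then yield
\[
\Bigl|\int f\,dp-\int f\,dq\Bigr|\le(2\pi)^{-d}\|\widehat f\|_{L^1}A_\kappa(T)\|u_p^\kappa-u_q^\kappa\|_{L^1}.
\]

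Second, the hypotheses of \Cref{thm:explicit-first-stage-stability} are assumed, so \eqref{eq:explicit-first-stage-stability} bounds $\|u_p^\kappa-u_q^\kappa\|_{L^1}$ by the bracketed quantity. The normalization is harmless: rescaling $\kappa$ by a positive constant leaves $V_{p,q}^\kappa$ unchanged. Multiplying the first-stage bound by the nonnegative factor $(2\pi)^{-d}\|\widehat f\|_{L^1}A_\kappa(T)$ gives \eqref{eq:explicit-two-stage-identifiability}, provided all quantities in the bracket refer to the normalized kernel.

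The only delicate point is this last consistency: the bracket must be evaluated for the same normalized $\kappa$ that appears in $A_\kappa(T)$. The convention stated before the corollary already fixes this. Beyond that, I expect no step to be a genuine obstacle; the mild technical care lies in the Fubini and inversion step above.
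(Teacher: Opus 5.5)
Your proposal is correct and follows the paper's proof: the paper justifies the band-limited deconvolution estimate by $\widehat{u_p^\kappa-u_q^\kappa}=\widehat\kappa(\widehat p-\widehat q)$ together with Fourier inversion, and then substitutes the first-stage bound of \Cref{thm:explicit-first-stage-stability}. Your two steps are exactly these. Your pointwise-inversion and Fubini details spell out what the paper leaves implicit.
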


\begin{proof}
Substitute \eqref{eq:explicit-first-stage-stability} into \eqref{eq:bandlimited-deconvolution}.
\end{proof}

\begin{remark}[Two-stage stability and kernel comparison]
\label{rem:two-stage-kernel-conditioning}
For a fixed pair of distributions $(p,q)$ and a rectangular box $\Omega$, let $\mathfrak B_\kappa(p,q;\Omega)$ denote, for each candidate kernel, the smallest applicable first-stage bound. Then the kernel-dependent part of \Cref{cor:explicit-two-stage-identifiability} is
\[
A_\kappa(T)\,\mathfrak B_\kappa(p,q;\Omega).
\]
Here $\mathfrak B_\kappa$ represents the error incurred in recovering the difference of the convolution densities from the field, and $A_\kappa(T)$ the amplification incurred in deconvolving the distributional information up to frequency $T$ from that difference. A full comparison must therefore consider the two factors together.

To compare the Gaussian and a scalar Mat\'ern kernel on an equal footing, we match their low-frequency quadratic decay rates by setting
\[
\beta:=\nu+\frac d2,
\qquad
\sigma^2=2\beta\ell^2,
\qquad
x:=\ell^2T^2.
\]
With $\mathfrak B_G$ and $\mathfrak B_M$ denoting the first-stage bounds of the two kernels, the kernel-dependent parts of the overall bounds are
\[
\mathfrak C_G(T)=\mathfrak B_G e^{\beta x},
\qquad
\mathfrak C_M(T)=\mathfrak B_M(1+x)^\beta.
\]
The Mat\'ern overall bound is therefore smaller than the Gaussian one if and only if
\begin{equation}\label{eq:concise-gaussian-matern-selection}
\frac{\mathfrak B_M}{\mathfrak B_G}
\le
\left(\frac{e^x}{1+x}\right)^\beta.
\end{equation}
Here one may use the sharper bound of \Cref{cor:gaussian-explicit-first-stage} for $\mathfrak B_G$. If $\mathfrak B_M\le\mathfrak B_G$, the Mat\'ern bound is smaller for every $T>0$. If instead $\mathfrak B_M>\mathfrak B_G$, there is a unique $x_\ast>0$ satisfying
\[
\beta\bigl[x_\ast-\log(1+x_\ast)\bigr]
=
\log\frac{\mathfrak B_M}{\mathfrak B_G}.
\]
For frequency bands below $T_\ast:=\ell^{-1}\sqrt{x_\ast}$ the Gaussian bound is smaller, and above $T_\ast$ the Mat\'ern bound is smaller.

This comparison shows that the Mat\'ern kernel is the more advantageous choice. Squeezing the integrand of $x-\log(1+x)=\int_0^x t(1+t)^{-1}\,dt$ between $t(1+x)^{-1}$ and $t$ gives
\[
\frac{x^2}{2(1+x)}\le x-\log(1+x)\le\frac{x^2}{2}
\qquad(x\ge0),
\]
so with $\Lambda:=\beta^{-1}\log(\mathfrak B_M/\mathfrak B_G)$ we obtain
\begin{equation}\label{eq:crossover-localization}
\sqrt{2\Lambda}\;\le\;x_\ast\;\le\;\sqrt{2\Lambda}+2\Lambda,
\end{equation}
and in particular $(2\Lambda)^{1/4}\le\ell T_\ast\le\sqrt{2}\,(2\Lambda)^{1/4}$ whenever $\Lambda\le1/2$. On the other hand, the right-hand side of \eqref{eq:concise-gaussian-matern-selection} already equals $(e/2)^\beta$ at the kernel's own frequency scale $T=\ell^{-1}$, that is, at $x=1$. For the Gaussian bound to remain smaller up to this scale one would need $\mathfrak B_M/\mathfrak B_G\ge(e/2)^\beta$, and since $\beta=\nu+d/2$, this threshold grows exponentially in the dimension. As long as the ratio of the first-stage bounds stays below this threshold, $\Lambda$ is small and the crossover frequency $T_\ast$ lies far below the kernel scale $\ell^{-1}$. The Mat\'ern bound is therefore smaller on all bands except this narrow low-frequency range. The cases in which the Gaussian retains a practical advantage are confined to those where the divergence residual cannot be controlled or where $\mathfrak B_M/\mathfrak B_G$ exceeds a threshold exponential in the dimension.
\end{remark}

\subsection{Qualitative stability}
\label{subsec:qualitative-stability-tightness}

\noindent Applying the preceding explicit bounds requires knowing, in addition to the field residual, the divergence residual of the Mat\'ern factors and the convolution mass outside the observation region. But what if none of these quantities are known and one is given only that the field converges to zero at each point? The next theorem shows that adding tightness alone to this weak information yields weak convergence together with $L^1$ convergence of the convolution densities.

\begin{theorem}[Qualitative stability under tightness]
\label{thm:tight-field-stability}
Fix $p\in\mathcal P(\R^d)$. Suppose $(q_n)\subset\mathcal P(\R^d)$ is tight and, for every $x\in\R^d$,
\[
V_{p,q_n}^\kappa(x)\longrightarrow0.
\]
Then
\[
q_n\Rightarrow p,
\qquad
\|u_{q_n}^\kappa-u_p^\kappa\|_{L^1}\longrightarrow0.
\]
\end{theorem}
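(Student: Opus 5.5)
The plan is a subsequence argument combined with the exact identifiability result \Cref{thm:block-separable-ce-identifiability}. Because $(q_n)$ is tight, Prokhorov's theorem ensures that every subsequence has a further subsequence $q_{n_k}\Rightarrow q$ for some $q\in\mathcal P(\R^d)$. It therefore suffices to show that every such limit satisfies $q=p$, and that along such a subsequence $u_{q_{n_k}}^\kappa\to u_p^\kappa$ in $L^1$. The usual sub-subsequence principle then upgrades both conclusions to the full sequence.

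\textbf{Step 1 (passing the field to the limit).} Fix $x$. By \Cref{lem:block-ce-identities}, $\kappa\in C_b(\R^d)$, and by the argument in \Cref{cex:local-approx-failure}, $z\mapsto z\kappa(z)\in C_0(\R^d;\R^d)$. Hence $y\mapsto\kappa(x-y)$ and $y\mapsto (y-x)\kappa(x-y)$ are bounded and continuous. Weak convergence therefore gives
\[
u_{q_{n_k}}^\kappa(x)\to u_q^\kappa(x),
\qquad
w_{q_{n_k}}^\kappa(x)\to w_q^\kappa(x).
\]
Since $u_q^\kappa(x)>0$, it follows that $a_{q_{n_k}}^\kappa(x)\to a_q^\kappa(x)$. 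Combining this with the hypothesis $a_p^\kappa(x)-a_{q_n}^\kappa(x)\to0$ yields $V_{p,q}^\kappa(x)=0$ for every $x$, and \Cref{thm:block-separable-ce-identifiability} then gives $q=p$. This already proves $q_n\Rightarrow p$. Here tightness is essential: without it the limit could be a defective measure, exactly as in \Cref{cex:local-approx-failure}, and identifiability would not apply.

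\textbf{Step 2 ($L^1$ convergence).} From Step 1, $u_{q_n}^\kappa(x)\to u_p^\kappa(x)$ pointwise, since the pointwise convergence holds along every subsequence with the same limit. All the functions $u_{q_n}^\kappa$ and $u_p^\kappa$ are nonnegative, and each has integral exactly $\|\kappa\|_{L^1}$. Scheffé's lemma (using $\int(u_p-u_{q_n})^+\to0$ by dominated convergence, with dominating function $u_p$) then gives $\|u_{q_n}^\kappa-u_p^\kappa\|_{L^1}\to0$. Tightness is used here only implicitly, through the identification of the limit in Step 1.

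The main point requiring care is Step 1, specifically verifying that the integrand $(y-x)\kappa(x-y)$ is bounded and continuous for the block-product kernel. This follows from \eqref{eq:block-kernel-identities}: each block component $z^{(\alpha)}\kappa(z)=-c_{1,\alpha}\nabla_\alpha\eta^{[\alpha]}(z)$ is a product of $C_0$ factors. Once this is in place, the remainder of the argument is routine.
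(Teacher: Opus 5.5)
Your proposal is correct and follows the paper's proof essentially step for step: Prokhorov's theorem, then passing pointwise to the limit in $u$ and in the first moment, then exact identifiability to pin down the limit, and finally a Scheff\'e-type argument for the $L^1$ convergence of the convolution densities. You use $w$ where the paper uses $m$, and you upgrade from subsequences via the sub-subsequence principle where the paper cites metrizability of the weak topology; both differences are immaterial.
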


\begin{proof}
Let $(q_{n_k})$ be an arbitrary subsequence. By tightness and Prokhorov's theorem \cite{bogachev2007measure,kallenberg2021foundations}, some further subsequence, still denoted $q_{n_k}$, converges weakly to a probability measure $q$.

For fixed $x$ and coordinate $j$, it follows from \Cref{ass:block-separable-ce,eq:block-kernel-identities} and the standard closure properties of $C_0$ under tensor products, translations, reflections, and linear combinations \cite{folland1999real} that
\[
y\longmapsto\kappa(x-y),
\qquad
y\longmapsto y_j\kappa(x-y)
\]
belong to $C_0(\R^d)\subset C_b(\R^d)$. By the characterization of weak convergence through bounded continuous test functions \cite{bogachev2007measure,kallenberg2021foundations},
\[
u_{q_{n_k}}^\kappa(x)\longrightarrow u_q^\kappa(x),
\qquad
m_{q_{n_k},j}^\kappa(x)\longrightarrow m_{q,j}^\kappa(x).
\]
Since $u_q^\kappa(x)>0$, we get $a_{q_{n_k}}^\kappa(x)\to a_q^\kappa(x)$. By the field convergence, this limit equals $a_p^\kappa(x)$, so $V_{p,q}^\kappa\equiv0$, and \Cref{thm:block-separable-ce-identifiability} gives $q=p$.

Thus every subsequence admits a further subsequence converging weakly to $p$. Since the weak topology on $\mathcal P(\R^d)$ is metrizable \cite{bogachev2007measure,kallenberg2021foundations}, the full sequence satisfies $q_n\Rightarrow p$.

In particular, applying weak convergence once more at each $x\in\R^d$ gives $u_{q_n}^\kappa(x)\to u_p^\kappa(x)$. Moreover, for all $n$,
\[
\int_{\R^d}u_{q_n}^\kappa(x)\,dx
=
\int_{\R^d}u_p^\kappa(x)\,dx
=
\|\kappa\|_{L^1}.
\]
Since both functions are nonnegative, applying dominated convergence to $\min\{u_{q_n}^\kappa,u_p^\kappa\}\le u_p^\kappa$ gives
\[
\begin{aligned}
\|u_{q_n}^\kappa-u_p^\kappa\|_{L^1}
&=2\|\kappa\|_{L^1}
-2\int_{\R^d}\min\{u_{q_n}^\kappa,u_p^\kappa\}\,dx\\
&\longrightarrow0.
\end{aligned}
\]
\end{proof}

\begin{remark}[Tightness in the training process]
\label{rem:tightness-training-condition}
Tightness is a natural assumption for the training process: if the data and the model outputs lie in a bounded region, or if a projection keeps the outputs within a bounded set, then the sequence of model distributions is tight.
\end{remark}

\section{Conclusion}\label{sec:discussion}

This paper has answered the three questions of the introduction for the drifting field with its original normalization and displacement. The analysis rests on a single observation: for a kernel admitting a companion potential, a vanishing field forces the Wronskian current of the two potentials into a stationary continuity equation, and identifiability follows by renormalization and Fourier deconvolution. Throughout the companion-elliptic class and its block-separable extension, the zero-field equilibrium is therefore equivalent to agreement with the data distribution for arbitrary Borel probability measures. This structure is rigid: up to positive scalar multiples, the scalar kernels carrying it are exactly the Gaussian and Mat\'ern families. Deconvolution alone is less demanding: it succeeds precisely when the zero set of $\widehat\kappa$ has empty interior, and beyond this boundary the field cannot distinguish certain pairs of distinct smooth positive densities.

At approximate equilibria the obstruction is mass escape: however small the field is on a bounded region, distant mass leaves the global error at order one. The explicit stability inequality names the missing quantity, the convolution mass outside the observation region; when this mass cannot be measured, tightness excludes escape and turns pointwise field convergence into weak convergence. The same bounds guide the choice of kernel. The Gaussian requires only field values but amplifies high-frequency errors exponentially, whereas the Mat\'ern requires in addition the divergence of the weighted field residual in exchange for polynomial amplification. Since this gap widens exponentially with frequency, the comparison favors the Mat\'ern family and supports the Laplace default of Deng et al.~\cite{deng2026drifting}.

Three questions remain open: whether the original field identifies distributions for general kernels with everywhere positive $\widehat\kappa$, as deconvolution alone permits; whether combining the present population-level bounds with the statistical error of an estimated field yields finite-sample guarantees; and whether the training dynamics in fact drive the field to zero, and at what rate.

\appendix
\section{Complete proofs for the block-separable class}
\label{app:block-proofs}

\noindent This appendix presents the complete proofs of \Cref{lem:block-ce-identities,thm:block-separable-ce-identifiability}. The notation and assumptions follow \Cref{sec:preliminaries,subsec:block-separable-ce-identifiability}.

\subsection{Proof of the companion identities}

The proof lifts the factor properties to the product kernel through Fubini's theorem; the one delicate point is the local Sobolev regularity of $\nabla_\alpha\Psi_{\alpha,r}^\kappa$, which requires bounds uniform over translates.

\begin{proof}[Proof of \Cref{lem:block-ce-identities}]
We first lift the factor-level regularity to the full space.
For each $\alpha$, the factor $\kappa_\alpha$ satisfies \Cref{ass:companion-elliptic} on $\R^{d_\alpha}$, so
\[
\kappa_\alpha\in C_0(\R^{d_\alpha})\cap L^1(\R^{d_\alpha})
\cap L^\infty(\R^{d_\alpha})\cap W^{1,1}(\R^{d_\alpha}).
\]
Hence $\kappa(z)=\prod_{\alpha=1}^M\kappa_\alpha(z^{(\alpha)})$ is continuous and bounded.
As $\norm z\to\infty$, at least one block norm tends to infinity, and the corresponding factor converges to zero while the remaining factors stay bounded.
Therefore $\kappa\in C_0(\R^d)$.
By Fubini's theorem,
\[
\|\kappa\|_{L^1(\R^d)}
=
\prod_{\alpha=1}^M
\|\kappa_\alpha\|_{L^1(\R^{d_\alpha})},
\qquad
\|\kappa\|_{L^\infty(\R^d)}
\le
\prod_{\alpha=1}^M
\|\kappa_\alpha\|_{L^\infty(\R^{d_\alpha})}.
\]
If the coordinate $x_j$ belongs to the $\alpha$-th block, the distributional derivative is
\[
\partial_j\kappa(z)
=
(\partial_j\kappa_\alpha)(z^{(\alpha)})
\prod_{\beta\ne\alpha}\kappa_\beta(z^{(\beta)}),
\]
and applying Fubini's theorem again yields
\[
\|\partial_j\kappa\|_{L^1(\R^d)}
=
\|\partial_j\kappa_\alpha\|_{L^1(\R^{d_\alpha})}
\prod_{\beta\ne\alpha}
\|\kappa_\beta\|_{L^1(\R^{d_\beta})}.
\]
Hence $\kappa\in W^{1,1}(\R^d)$.

We next verify the finiteness of the first moment.
The first identity in \eqref{eq:block-kernel-identities} gives
\[
z^{(\alpha)}\kappa(z)
=-c_{1,\alpha}\nabla_\alpha\eta^{[\alpha]}(z).
\]
Moreover,
\[
\nabla_\alpha\eta^{[\alpha]}(z)
=
\nabla_\alpha\eta_\alpha(z^{(\alpha)})
\prod_{\beta\ne\alpha}\kappa_\beta(z^{(\beta)}),
\]
and the right-hand side belongs to $C_0(\R^d)\cap L^1(\R^d)\cap L^\infty(\R^d)$.
Hence $z\mapsto z\kappa(z)$ is bounded and continuous and vanishes at infinity.
Consequently, for every $x\in\R^d$,
\[
\int_{\R^d}\norm{y-x}\kappa(x-y)\,r(dy)
=
\int_{\R^d}\norm z\,\kappa(z)\,(\tau_x)_\# r(dz)
\le
\|\norm{\cdot}\,\kappa\|_{L^\infty}<\infty,
\]
where $\tau_x(y)=x-y$.
Therefore $D_r^\kappa=\R^d$, and $m_r^\kappa$ and $a_r^\kappa$ are defined at every point.

We now establish the required convolution regularity.
Applying (F2) of \Cref{sec:preliminaries} to $\kappa$ and the probability measure $r$ gives
\[
u_r^\kappa=\kappa*r\in C_0(\R^d)\cap L^1(\R^d),
\qquad
\|u_r^\kappa\|_{L^1}\le \|\kappa\|_{L^1}.
\]
Moreover,
\[
|u_r^\kappa(x)|
\le
\int \|\kappa\|_{L^\infty}\,r(dy)
=
\|\kappa\|_{L^\infty},
\]
so $u_r^\kappa\in L^\infty(\R^d)$.
For each distributional derivative,
\[
\partial_j u_r^\kappa=(\partial_j\kappa)*r.
\]
Indeed, for $\varphi\in C_c^\infty(\R^d)$, Fubini's theorem and the definition of the weak derivative $\partial_j\kappa$ give
\[
\begin{aligned}
-\int_{\R^d}u_r^\kappa(x)\partial_j\varphi(x)\,dx
&=-\int_{\R^d}\int_{\R^d}
\kappa(x-y)\partial_j\varphi(x)\,dx\,r(dy)\\
&=\int_{\R^d}\int_{\R^d}
\partial_j\kappa(x-y)\varphi(x)\,dx\,r(dy).
\end{aligned}
\]
Furthermore, Tonelli's theorem gives
\[
\|(\partial_j\kappa)*r\|_{L^1}
\le
\int_{\R^d}\int_{\R^d}
|\partial_j\kappa(x-y)|\,dx\,r(dy)
=
\|\partial_j\kappa\|_{L^1}.
\]
Hence $u_r^\kappa\in W^{1,1}(\R^d)$, and in particular
$u_r^\kappa\in W^{1,1}_{\mathrm{loc}}(\R^d)$.
Since $\kappa>0$ and $r(\R^d)=1$, we also have $u_r^\kappa(x)>0$ for every $x$.

Fix $\alpha$.
The functions $\eta^{[\alpha]}$ and $\nabla_\alpha\eta^{[\alpha]}$ likewise belong to $C_0(\R^d)\cap L^1(\R^d)\cap L^\infty(\R^d)$.
This follows from
\[
\eta^{[\alpha]}(z)
=
\eta_\alpha(z^{(\alpha)})
\prod_{\beta\ne\alpha}\kappa_\beta(z^{(\beta)}),
\qquad
\nabla_\alpha\eta^{[\alpha]}(z)
=
\nabla_\alpha\eta_\alpha(z^{(\alpha)})
\prod_{\beta\ne\alpha}\kappa_\beta(z^{(\beta)})
\]
and the corresponding $C_0,L^1,L^\infty$ properties of the factors.
Therefore
\[
\Psi_{\alpha,r}^\kappa=\eta^{[\alpha]}*r,
\qquad
\nabla_\alpha\Psi_{\alpha,r}^\kappa
=(\nabla_\alpha\eta^{[\alpha]})*r
\]
belong to $C_0(\R^d)\cap L^1(\R^d)\cap L^\infty(\R^d)$ by (F2) of \Cref{sec:preliminaries} and the preceding $L^\infty$ bound.
Since $\eta^{[\alpha]}>0$, we also have $\Psi_{\alpha,r}^\kappa>0$.

It remains to prove the local Sobolev regularity of $\nabla_\alpha\Psi_{\alpha,r}^\kappa$.
For each component
\[
g_i(z):=\partial_{z_i^{(\alpha)}}\eta^{[\alpha]}(z),
\qquad i=1,\ldots,d_\alpha,
\]
we show that $g_i\in W^{1,1}_{\mathrm{loc}}(\R^d)$ and that the integrals of its weak derivatives are uniformly bounded over all translates of a compact set.
More precisely, let $K\Subset\R^d$ and choose compact rectangles $K_1,\ldots,K_M$ with $K\subset K_1\times\cdots\times K_M$.
For derivatives in the $\alpha$-th block,
\[
\sup_{y\in\R^d}
\int_{K-y}
\left|
\nabla_\alpha^2\eta_\alpha(z^{(\alpha)})
\prod_{\beta\ne\alpha}\kappa_\beta(z^{(\beta)})
\right|\,dz
\le
\|\nabla_\alpha^2\eta_\alpha\|_{L^\infty}|K_\alpha|
\prod_{\beta\ne\alpha}\|\kappa_\beta\|_{L^1},
\]
and for derivatives in a block $\beta\ne\alpha$,
\[
\begin{aligned}
&\sup_{y\in\R^d}
\int_{K-y}
\left|
\nabla_\alpha\eta_\alpha(z^{(\alpha)})
\otimes\nabla_\beta\kappa_\beta(z^{(\beta)})
\prod_{\gamma\ne\alpha,\beta}\kappa_\gamma(z^{(\gamma)})
\right|\,dz\\
&\qquad\le
\|\nabla_\alpha\eta_\alpha\|_{L^1}
\|\nabla_\beta\kappa_\beta\|_{L^1}
\prod_{\gamma\ne\alpha,\beta}\|\kappa_\gamma\|_{L^1},
\end{aligned}
\]
where $|K_\alpha|$ in the first estimate denotes the Lebesgue measure of $K_\alpha$.
Now take $\varphi\in C_c^\infty(K)$.
By Fubini's theorem and the definition of the weak derivatives of $g_i$,
\[
-\int_{\R^d}(g_i*r)(x)\partial_j\varphi(x)\,dx
=
\int_{\R^d}\int_{\R^d}D_j g_i(x-y)\varphi(x)\,dx\,r(dy),
\]
so $D_j(g_i*r)=(D_j g_i)*r$ in $\mathcal D'(K)$.
The translate-uniform bounds above and Fubini's theorem give
\[
\int_K |(D_j g_i*r)(x)|\,dx
\le
\int_{\R^d}\int_{K-y}|D_j g_i(z)|\,dz\,r(dy)<\infty.
\]
Hence $g_i*r\in W^{1,1}(K)$.
Since $K$ was arbitrary,
\[
\nabla_\alpha\Psi_{\alpha,r}^\kappa
\in W^{1,1}_{\mathrm{loc}}(\R^d;\R^{d_\alpha}).
\]

Classical differentiation in the $\alpha$-th block also commutes with the convolution.
Indeed, $\nabla_\alpha\eta^{[\alpha]}$ and $\nabla_\alpha^2\eta^{[\alpha]}$ are bounded continuous functions and $r$ is a finite measure.
Dominated convergence therefore gives
\[
\nabla_\alpha\Psi_{\alpha,r}^\kappa
=(\nabla_\alpha\eta^{[\alpha]})*r,
\qquad
\nabla_\alpha^2\Psi_{\alpha,r}^\kappa
=(\nabla_\alpha^2\eta^{[\alpha]})*r.
\]
The continuity of these two convolutions is verified as follows.
Let $h$ be either of the two kernels above and let $x_n\to x$.
If $h\equiv0$ there is nothing to prove, so assume $\|h\|_{L^\infty}>0$.
Given $\varepsilon>0$, choose $R>0$ such that
\[
r(B_R^c)<\frac{\varepsilon}{4\|h\|_{L^\infty}}.
\]
For all sufficiently large $n$ we have $x_n\in\overline{B_1(x)}$, and $h$ is uniformly continuous on the compact set
\[
\{w-y:w\in\overline{B_1(x)},\ y\in B_R\}.
\]
Hence, taking $n$ large enough,
\[
|h(x_n-y)-h(x-y)|<\varepsilon/2
\qquad\text{for all }y\in B_R.
\]
The contribution on $B_R^c$ is bounded by
\[
\int_{B_R^c}|h(x_n-y)-h(x-y)|\,r(dy)
\le
2\|h\|_{L^\infty}r(B_R^c)
<
\varepsilon/2.
\]
Therefore
\[
\int |h(x_n-y)-h(x-y)|\,r(dy)<\varepsilon,
\]
and the convolution is continuous.

Finally, we prove the block identities.
Convolving the first identity in \eqref{eq:block-kernel-identities} with $r$ gives
\[
\nabla_\alpha\Psi_{\alpha,r}^\kappa(x)
=
\int\nabla_\alpha\eta^{[\alpha]}(x-y)\,r(dy)
=
\frac1{c_{1,\alpha}}
\int (y^{(\alpha)}-x^{(\alpha)})\kappa(x-y)\,r(dy),
\]
which is \eqref{eq:block-grad-id}.
In particular,
\[
m_r^{\kappa,(\alpha)}(x)
=
x^{(\alpha)}u_r^\kappa(x)
+
c_{1,\alpha}\nabla_\alpha\Psi_{\alpha,r}^\kappa(x),
\]
so $m_r^\kappa$ is continuous.
Since $u_r^\kappa$ is a positive continuous function, $a_r^\kappa=m_r^\kappa/u_r^\kappa$ is continuous as well.

Convolving the second identity in \eqref{eq:block-kernel-identities} and using the commutation of the $\alpha$-block derivatives established above gives \eqref{eq:block-elliptic-id}.
Finally, \eqref{eq:block-barycenter-id} follows from \eqref{eq:block-grad-id}, \eqref{eq:block-elliptic-id}, and $u_r^\kappa>0$.
\end{proof}

\subsection{Proof of field identifiability}

The proof runs the argument of \Cref{thm:companion-elliptic} blockwise. Two points are genuinely new: the regularity and integrability checks now concern velocity fields supported on a single block, and the Wronskian argument fixes the ratio of the potentials only along each block fiber, so a final patching step across blocks is required.

\begin{proof}[Proof of \Cref{thm:block-separable-ce-identifiability}]
By \Cref{lem:block-ce-identities}, $u_p^\kappa,u_q^\kappa$ are positive continuous functions.
Moreover, $a_p^\kappa,a_q^\kappa$ are defined on all of $\R^d$, so $V_{p,q}^\kappa$ is globally well defined.
Assume now $V_{p,q}^\kappa\equiv0$.
Then $a_p^\kappa=a_q^\kappa=:a$.

Fix one block $\alpha$ and set $b_\alpha(x):=(a^{(\alpha)}(x)-x^{(\alpha)})/(c_{1,\alpha}c_{2,\alpha})$.
For $r=p,q$, \eqref{eq:block-barycenter-id} and \eqref{eq:block-elliptic-id} yield
\begin{equation}\label{eq:block-first-order-relation}
\nabla_\alpha\Psi_{\alpha,r}^\kappa
=b_\alpha
(\lambda_{0,\alpha}I-\lambda_{1,\alpha}\Delta_\alpha)
\Psi_{\alpha,r}^\kappa.
\end{equation}
To lighten the notation, we write $\Psi_{\alpha,r}:=\Psi_{\alpha,r}^\kappa$ for the rest of the proof.

Define the block Wronskian field
\[
J_\alpha:=
\Psi_{\alpha,q}\nabla_\alpha\Psi_{\alpha,p}
-
\Psi_{\alpha,p}\nabla_\alpha\Psi_{\alpha,q}.
\]
First suppose $\lambda_{1,\alpha}=0$.
Multiplying the $p$-equation in \eqref{eq:block-first-order-relation} by $\Psi_{\alpha,q}$ and the $q$-equation by $\Psi_{\alpha,p}$ and subtracting gives $J_\alpha=0$.

Now suppose $\lambda_{1,\alpha}>0$.
Since $\Psi_{\alpha,p}$ and $\Psi_{\alpha,q}$ are $C^2$ in the $\alpha$-th block variable,
\[
s_\alpha:=\nabla_\alpha\cdot J_\alpha
=
\Psi_{\alpha,q}\Delta_\alpha\Psi_{\alpha,p}
-
\Psi_{\alpha,p}\Delta_\alpha\Psi_{\alpha,q}
\]
is defined pointwise.
Multiplying and subtracting the two equations of \eqref{eq:block-first-order-relation} as above gives
\[
J_\alpha
=
-\lambda_{1,\alpha}b_\alpha
\left(
\Psi_{\alpha,q}\Delta_\alpha\Psi_{\alpha,p}
-
\Psi_{\alpha,p}\Delta_\alpha\Psi_{\alpha,q}
\right)
=
-\lambda_{1,\alpha}b_\alpha s_\alpha,
\]
and combining this with the definition $s_\alpha=\nabla_\alpha\cdot J_\alpha$ yields
\begin{equation}\label{eq:block-stationary-s}
s_\alpha+
\lambda_{1,\alpha}\nabla_\alpha\cdot(b_\alpha s_\alpha)=0
\qquad\text{in }\mathcal D'(\R^d).
\end{equation}

By \eqref{eq:block-elliptic-id},
\[
\Delta_\alpha\Psi_{\alpha,r}
=
\lambda_{1,\alpha}^{-1}
\left(\lambda_{0,\alpha}\Psi_{\alpha,r}
-c_{2,\alpha}u_r^\kappa\right)
\qquad(r=p,q),
\]
and therefore
\[
s_\alpha=
\frac{c_{2,\alpha}}{\lambda_{1,\alpha}}
\left(
\Psi_{\alpha,p}u_q^\kappa-
\Psi_{\alpha,q}u_p^\kappa
\right).
\]
By \Cref{lem:block-ce-identities},
$\Psi_{\alpha,p},\Psi_{\alpha,q},u_p^\kappa,u_q^\kappa$ are continuous and belong to $L^1(\R^d)\cap L^\infty(\R^d)$.
Hence, for instance,
\[
\|\Psi_{\alpha,p}u_q^\kappa\|_{L^1}
\le
\|\Psi_{\alpha,p}\|_{L^\infty}\|u_q^\kappa\|_{L^1},
\qquad
\|\Psi_{\alpha,p}u_q^\kappa\|_{L^\infty}
\le
\|\Psi_{\alpha,p}\|_{L^\infty}\|u_q^\kappa\|_{L^\infty},
\]
and the other term is handled in the same way.
Therefore $s_\alpha\in L^1(\R^d)\cap L^\infty(\R^d)$ and $s_\alpha$ is continuous.
Moreover,
\[
\|J_\alpha\|_{L^1}
\le
\|\Psi_{\alpha,q}\|_{L^\infty}
\|\nabla_\alpha\Psi_{\alpha,p}\|_{L^1}
+
\|\Psi_{\alpha,p}\|_{L^\infty}
\|\nabla_\alpha\Psi_{\alpha,q}\|_{L^1},
\]
so $J_\alpha\in L^1(\R^d;\R^{d_\alpha})$.

We next verify the Sobolev regularity of $b_\alpha$.
From \eqref{eq:block-elliptic-id} and \eqref{eq:block-first-order-relation},
\[
b_\alpha
=
\frac{\nabla_\alpha\Psi_{\alpha,q}}
{(\lambda_{0,\alpha}I-\lambda_{1,\alpha}\Delta_\alpha)
\Psi_{\alpha,q}}
=
\frac{\nabla_\alpha\Psi_{\alpha,q}}
{c_{2,\alpha}u_q^\kappa}.
\]
Let $K\Subset\R^d$ be compact.
Since $u_q^\kappa$ is positive and continuous,
\[
m_K:=\inf_{x\in K}u_q^\kappa(x)>0.
\]
Choose $\theta\in C^\infty(\R)$, with $\theta$ and $\theta'$ bounded, such that $\theta(t)=1/t$ for $t\ge m_K/2$.
Since $u_q^\kappa\in W^{1,1}(K)\cap L^\infty(K)$,
\[
D(\theta(u_q^\kappa))
=
\theta'(u_q^\kappa)D u_q^\kappa\in L^1(K),
\]
so $\theta(u_q^\kappa)\in W^{1,1}(K)\cap L^\infty(K)$.
Moreover, $v:=\nabla_\alpha\Psi_{\alpha,q}$ belongs to
$W^{1,1}(K;\R^{d_\alpha})\cap L^\infty(K;\R^{d_\alpha})$, so
\[
D\bigl(v\,\theta(u_q^\kappa)\bigr)
=
\theta(u_q^\kappa)Dv
+
v\,\theta'(u_q^\kappa)D u_q^\kappa
\in L^1(K).
\]
Hence
\[
b_\alpha|_K
=
\frac1{c_{2,\alpha}}
\nabla_\alpha\Psi_{\alpha,q}\,\theta(u_q^\kappa)
\in W^{1,1}(K;\R^{d_\alpha}).
\]
Since $K$ was arbitrary,
$b_\alpha\in W^{1,1}_{\mathrm{loc}}(\R^d;\R^{d_\alpha})$.

Define $B_\alpha$ as the vector field on $\R^d$ whose $\alpha$-th block component is $b_\alpha$ and whose remaining block components vanish.
Then $B_\alpha\in W^{1,1}_{\mathrm{loc}}(\R^d;\R^d)$, and \eqref{eq:block-stationary-s} is equivalent to
\[
s_\alpha+
\lambda_{1,\alpha}\nabla\cdot(B_\alpha s_\alpha)=0
\qquad\text{in }\mathcal D'(\R^d).
\]
Moreover, the only nonzero block of $B_\alpha s_\alpha$ is $b_\alpha s_\alpha$, and the identity $J_\alpha=-\lambda_{1,\alpha}b_\alpha s_\alpha$ implies
\[
B_\alpha s_\alpha
=
-\lambda_{1,\alpha}^{-1}\widetilde J_\alpha,
\]
where $\widetilde J_\alpha\in L^1(\R^d;\R^d)$ places $J_\alpha$ in the $\alpha$-th block and zero in the remaining blocks.
In particular, $B_\alpha s_\alpha\in L^1(\R^d;\R^d)$.

Now, as in the proof of \Cref{thm:companion-elliptic}, set $\rho(t,x):=e^{t/\lambda_{1,\alpha}}s_\alpha(x)$. Then
\[
\partial_t\rho+\nabla\cdot(B_\alpha\rho)=0
\qquad\text{in }\mathcal D'(\R_t\times\R^d).
\]
Since $s_\alpha\in L^1\cap L^\infty$,
\[
\rho\in
L^\infty_{\mathrm{loc}}
\bigl(\R_t;L^1(\R^d)\cap L^\infty(\R^d)\bigr),
\]
and
\[
B_\alpha\rho
=
-\frac{e^{t/\lambda_{1,\alpha}}}{\lambda_{1,\alpha}}
\widetilde J_\alpha
\in
L^\infty_{\mathrm{loc}}
\bigl(\R_t;L^1(\R^d;\R^d)\bigr).
\]
Therefore \Cref{thm:renorm-sobolev,lem:renorm-mass} apply, and
$t\mapsto\|\rho(t)\|_{L^1}$ is constant in the distributional sense.
But $\|\rho(t)\|_{L^1}=e^{t/\lambda_{1,\alpha}}\|s_\alpha\|_{L^1}$, and the right-hand side can be distributionally constant in $t$ only if $\|s_\alpha\|_{L^1}=0$.
Hence $s_\alpha=0$ a.e., and by continuity $s_\alpha\equiv0$.
The identity $J_\alpha=-\lambda_{1,\alpha}b_\alpha s_\alpha$ then gives $J_\alpha\equiv0$.

We have thus proved $J_\alpha\equiv0$ both when $\lambda_{1,\alpha}=0$ and when $\lambda_{1,\alpha}>0$.
Since $\Psi_{\alpha,q}>0$ and $\Psi_{\alpha,p},\Psi_{\alpha,q}$ are $C^1$ in the $\alpha$-th block variable,
\[
\nabla_\alpha
\left(
\frac{\Psi_{\alpha,p}}{\Psi_{\alpha,q}}
\right)
=
\frac{J_\alpha}{\Psi_{\alpha,q}^2}
=0.
\]
Each $x^{(\alpha)}$-fiber is connected, so this ratio is constant on each fiber.
Consequently, there is a positive function $C_\alpha$ such that
\[
\Psi_{\alpha,p}(x)
=
C_\alpha(x^{(\widehat\alpha)})\Psi_{\alpha,q}(x),
\]
where
\[
x^{(\widehat\alpha)}
:=(x^{(1)},\ldots,x^{(\alpha-1)},x^{(\alpha+1)},\ldots,x^{(M)})
\]
denotes the variables outside the $\alpha$-th block.
Since $C_\alpha$ is independent of $x^{(\alpha)}$, for fixed $x^{(\widehat\alpha)}$,
\[
(\lambda_{0,\alpha}I-\lambda_{1,\alpha}\Delta_\alpha)
\bigl(C_\alpha\Psi_{\alpha,q}\bigr)
=
C_\alpha
(\lambda_{0,\alpha}I-\lambda_{1,\alpha}\Delta_\alpha)
\Psi_{\alpha,q}.
\]
Applying \eqref{eq:block-elliptic-id} gives
\[
u_p^\kappa(x)
=
C_\alpha(x^{(\widehat\alpha)})u_q^\kappa(x).
\]
Equivalently, the continuous function
\[
h(x):=\frac{u_p^\kappa(x)}{u_q^\kappa(x)}
\]
is independent of $x^{(\alpha)}$.
This conclusion holds for every $\alpha=1,\ldots,M$.
Given two points $x,y\in\R^d$, following a finite path from $x$ to $y$ that changes one block at a time gives $h(x)=h(y)$.
Hence $h$ is constant on $\R^d$, and for some $c>0$,
\[
u_p^\kappa=c u_q^\kappa.
\]

Integrating both sides in $x$ and applying Tonelli's theorem, we obtain, for every $r\in\mathcal P(\R^d)$,
\[
\begin{aligned}
\int_{\R^d}u_r^\kappa(x)\,dx
&=\int_{\R^d}\int_{\R^d}\kappa(x-y)\,r(dy)\,dx\\
&=\int_{\R^d}\int_{\R^d}\kappa(z)\,dz\,r(dy)
=\int_{\R^d}\kappa(z)\,dz.
\end{aligned}
\]
Hence $c=1$ and $\kappa*(p-q)=0$.
Taking Fourier transforms yields
\[
\widehat\kappa(\xi)\bigl(\widehat p(\xi)-\widehat q(\xi)\bigr)=0.
\]
By the product structure and Fubini's theorem,
\[
\widehat\kappa(\xi)
=
\prod_{\alpha=1}^M
\widehat{\kappa_\alpha}(\xi^{(\alpha)}).
\]
Each factor $\kappa_\alpha$ satisfies the scalar companion-elliptic assumption in its block dimension, so by \Cref{lem:ce-spectral-law} each $\widehat{\kappa_\alpha}$ is strictly positive everywhere.
Hence $\widehat\kappa(\xi)>0$ for all $\xi\in\R^d$, and Fourier uniqueness for the finite signed measure $p-q$, recalled in \Cref{sec:preliminaries}, gives $p=q$.
\end{proof}

\end{document}